\newcommand{\R}{\mathbb{R}}
\newcommand{\N}{\mathbb{N}}
\newcommand{\C}{\mathbb{C}}
\newcommand{\T}{\mathbb{T}}
\newcommand{\norm}[1]{\left\lVert#1\right\rVert}
\newcommand{\strich}{\,\middle\vert\,}
\newcommand{\pq}{\begin{bmatrix}\ve{q} \\ \ve{p} \end{bmatrix}}
\newcommand{\ve}[1]{\vectorsym{#1}}
\newcommand{\ma}[1]{\matrixsym{#1}}
\newcommand{\te}[1]{\tensorsym{#1}}
\newcommand{\vect}[1]{\texttt{vec}(#1)}
\DeclareMathOperator*{\diag}{diag}
\DeclareMathOperator{\Diff}{d}
\DeclareMathOperator*{\pool}{MaxPool}
\DeclareMathOperator*{\unpool}{MaxUnPool}
\DeclareMathOperator{\sech}{sech}
\theoremstyle{definition}
\newtheorem{defi}{Definition}
\newtheorem{remark}{Remark}
\crefname{defi}{Definition}{Definitions}
\newtheorem{prop}{Proposition}
\begin{document}
	
	
	\title{Symplectic convolutional neural networks}

    \author[$\ast$]{Süleyman Y\i ld\i z}
    \affil[$\ast$]{Max Planck Institute for Dynamics of Complex Technical Systems, 39106 Magdeburg, Germany.\authorcr
    	\email{yildiz@mpi-magdeburg.mpg.de}, \orcid{0000-0001-7904-605X}
    }
    
    \author[$\ast\ast$]{Konrad Janik}
    \affil[$\ast\ast$]{Max Planck Institute for Dynamics of Complex Technical Systems, 39106 Magdeburg, Germany.\authorcr
    	\email{janik@mpi-magdeburg.mpg.de}, \orcid{0009-0004-9030-0708}
    }
    
    \author[$\dagger\ddagger$]{Peter Benner}
    \affil[$\dagger$]{Max Planck Institute for Dynamics of Complex Technical Systems, 39106 Magdeburg, Germany.\authorcr
    	\email{benner@mpi-magdeburg.mpg.de}, \orcid{0000-0003-3362-4103}
    }
    \affil[$\ddagger$]{Otto von Guericke University,  Universit\"atsplatz 2, 39106 Magdeburg, Germany\authorcr
    	\email{peter.benner@ovgu.de} 
    	\vspace{-0.5cm}
    }

	\shorttitle{Symplectic CNN}
	\shortauthor{S. Y\i ld\i z, K. Janik, P. Benner}
	\shortdate{}
	
	\keywords{Hamiltonian systems, symplectic integrators, neural networks, convolutional neural networks, autoencoders}

\abstract{%
	
We propose a new symplectic convolutional neural network (CNN) architecture by leveraging symplectic neural networks, proper symplectic decomposition, and tensor techniques. Specifically, we first introduce a mathematically equivalent form of the convolution layer and then, using symplectic neural networks, we demonstrate a way to parameterize the layers of the CNN to ensure that the convolution layer remains symplectic. To construct a complete autoencoder, we introduce a symplectic pooling layer. 
We demonstrate the performance of the proposed neural network on three examples: the wave equation, the nonlinear Schrödinger (NLS) equation, and the sine-Gordon equation. The numerical results indicate that the symplectic CNN outperforms the linear symplectic autoencoder obtained via proper symplectic decomposition. 
}

\novelty{ 
    \begin{itemize} 
	\item A symplectic convolutional autoencoder is proposed. 
	\item The performance of the proposed autoencoder is tested on several numerical examples, and compared to the proper symplectic decomposition-based autoencoder. \end{itemize} 
}

\maketitle
\section{Introduction}
Over the past few decades, the increasing power of computer hardware has transformed deep learning into a powerful tool for solving a wide range of real-world problems. Among the many techniques in deep learning, autoencoders have emerged as a crucial component. Various deep learning approaches heavily rely on autoencoders for tasks such as automatic speech recognition~\cite{hinton2012deep,hannun2014deep}, computer vision~\cite{lecun1989backpropagation, krizhevsky2017imagenet,simonyan2014very}, and natural language processing~\cite{collobert2011natural,mikolov2013distributed,cho2014learning,szegedy2015going}. Autoencoders provide generalized representations of the underlying systems~\cite{bengio2013representation}. This capability is particularly important as it allows for the extraction of meaningful features from complex data, facilitating more accurate and efficient analysis. Early usage of autoencoders mainly focused on dimensionality reduction~\cite{hinton2006reducing}. This application is essential for model order reduction (MOR) methods, which focus on constructing low-dimensional models from complex high-fidelity data while retaining their core characteristics. For a detailed overview of MOR methods, we refer to the handbook~\cite{morBenGQetal21,morBenGQetal21a,morBenGQetal21b}. 

Dimension reduction methods can be classified into two main categories, namely linear and nonlinear methods~\cite{hou2022dimensionality}. One of the very popular linear dimensionality reduction methods based on using principal component analysis (PCA)~\cite{wold1987principal}, also known as proper orthogonal decomposition (POD)~\cite{berkooz1993proper}. Despite their popularity, linear methods fail to provide comparable approximations to nonlinear dimension reduction method for highly nonlinear problems~\cite{hinton2006reducing}. The efficiency and stability of these methods depend on the specific use case. For example, when constructing reduced-order models for canonical Hamiltonian systems, one property to be enforced on the autoencoder might be symplecticity. This automatically preserves the Hamiltonian dynamics in the latent space \cite{hairer2006structure} and it is important because the latent dynamics inherit the desirable properties of Hamiltonian systems, such as long-term stability \cite{PenMoh16}. 

In this paper, we present a symplectic autoencoder method that can be applied to various problems. However, since its primary application is dimensionality reduction for Hamiltonian dynamics, we focus on the relevant literature. Dimensionality reduction for Hamiltonian dynamics becomes essential when learning Hamiltonian dynamics with high-fidelity data arising from the discretization of partial differential equations. Learning the dynamics of Hamiltonian systems using high-dimensional data might be infeasible, even with high-performance computing (HPC) machines. On the other hand, for small-dimensional systems, there are several works that can be used to learn the dynamics of Hamiltonian systems, such as Hamiltonian neural networks \cite{greydanus2019hamiltonian}, symplectic neural networks \cite{JinZZ20, JanB25}, Bayesian system identification \cite{galioto2020bayesian}, and Gaussian processes \cite{bertalan2019learning}. To deal with high-dimensional data in \cite{PenMoh16}, a linear symplectic autoencoder that uses proper symplectic decomposition (PSD) is introduced for model order reduction (MOR) and compared with the POD-Galerkin method. It is shown that the surrogate model obtained with PSD outperforms the POD-Galerkin model. Using the linear symplectic autoencoder, which we refer to as the PSD autoencoder, \cite{sharma2022hamiltonian} introduced a data-driven, non-intrusive reduced-order model known as Hamiltonian operator inference. Moreover, there are several linear subspace models for structure-preserving MOR method for Hamiltonian systems \cite{buchfink2020psd,afkham2017structure,bajars2025structure}. Nevertheless, in most cases, linear subspace methods require relatively large dimensions to achieve acceptable approximations of Hamiltonian dynamics. In general, autoencoders are neural networks (NNs) that utilize traditional architectures such as multilayer perceptrons (MLPs), convolutional neural networks (CNNs), or recurrent neural networks (RNNs)~\cite{hou2022dimensionality}. In~\cite{BraKra23}, the authors propose a nonlinear symplectic autoencoder with a MLP architecture. This approach leverages techniques from symplectic neural networks (SympNets)~\cite{JinZZ20} and PSD methods. However, autoencoders based on MLPs typically require significantly more parameters compared to those based on CNNs. A large portion of popular autoencoder algorithms depends on CNNs, which have become a cornerstone in the field. With their wide range of successful applications, CNNs have proven to be highly effective tools. Some recent works have studied Hamiltonian dynamics by weakly enforcing symplecticity on CNN based autoencoders, i.e. by minimizing the resiudal of the symplecicity condition through a loss function~\cite{buchfink2023symplectic,yildiz2024data,goyal2025deep}. 

In this paper, we present a general framework for constructing symplectic  CNN based autoencoders. Specifically, we integrate the methods from~\cite{BraKra23} and~\cite{JinZZ20} with tensor techniques to develop symplectic convolutional autoencoders. Typically a convolutional autoencoder consists of four different types of layers: convolutional layers, pooling layers, activation layers and fully connected layers. We enforce symplecticity on the convolutional and activation layers by utilizing the concepts from SympNets \cite{JinZZ20}. Additionally, we use the approach described in \cite{BraKra23} to construct PSD-like pooling and fully connected layers.

The remainder of the paper is organized as follows: \Cref{sec:conv} introduces the equivalent form of convolutional neural networks using tensor techniques. \Cref{sec:SympCNN} describes symplectic neural networks and proper symplectic decomposition, along with their application in constructing symplectic convolutional autoencoders (SympCAEs). \Cref{sec:num} demonstrates the accuracy of the proposed method with 1D and 2D test cases. Finally, we provide concluding remarks and discuss future directions in \Cref{sec:conc}.
 
\section{Convolutional neural networks}\label{sec:conv}

To explain the basic idea behind the equivalent mathematical form of CNN, we exploit the vectorization of the input signals. Let us first consider a 1D input signal $\te{x}\in \R^{1 \times C_\text{in} \times N}$ of length $N$ with the number of input channels $C_\text{in}$. For simplicity, we only consider convolutions with zero padding, stride size of 1, dilation of 1 and without bias. Let us define the vectorization operator with $\vect{\cdot}$, which creates a column vector from a matrix by stacking the columns of the matrix into a column vector. Moreover, we denote $\vect{\te{x}}=\ve{x}\in \R^{N C_\text{in}}$ as the vectorization of the input signal, $C_\text{out}$ as the number of output channels, $\te{w}\in \R^{C_\text{out} \times C_\text{in} \times l}$ as the weights of the convolutional layer, $\te{A}_{i,j,k}$ as the $(i,j,k)$th element of the tensor $\te{A}$ and $l$ as the length of the convolutional weight. To describe the mathematically equivalent formulation of the convolution operation, first let us define the following Toeplitz matrices,
\begin{equation}{\label{eqn:1dToep}}
\ma{T}_{i,j}=
\begin{bmatrix}
\te{w}_{i,j,m}&\ldots&\te{w}_{i,j,l}& & & & \vspace{0.6em}\\
\vdots & \ddots & &\ddots& & & \vspace{0.6em} \\ 
\te{w}_{i,j,1}&\ldots &\te{w}_{i,j,m}&\ldots&\te{w}_{i,j,l} & & \vspace{0.6em} \\
&\ddots& &\ddots& &\ddots & \vspace{0.6em}\\
& &\te{w}_{i,j,1} &\ldots &\te{w}_{i,j,m}&\ldots &\te{w}_{i,j,l}  \vspace{0.6em} \\
& &  &\ddots &  &\ddots  & \vspace{0.6em} \\
& & & &\te{w}_{i,j,1}&\ldots &	\te{w}_{i,j,m}
\end{bmatrix} \in \R^{N \times N}
\end{equation}
for $i=1,\ldots,C_\text{out}, ~j=1,\ldots,C_\text{in}$, and $m=(l+1)/2$ for odd kernel length $l$. This choice of $T_{i,j}$ corresponds to zero-padding and padding size $(l-1)/2$ to keep the input and output dimensions the same. Hence, the kernel length $l$ has to be odd, if we want to preserve the dimensions and work with square Toeplitz matrices.
Using \cref{eqn:1dToep}, we can write the equivalent mathematical formulation of the 1D convolution layer  \cite{gilbert2017towards}  as follows:
\begin{equation}{\label{eqn:1dvecConv}}
\begin{bmatrix}
\ma{T}_{1,1}&\ldots&\ma{T}_{1,C_\text{in}}\\
\vdots &\ddots & \vdots \\
\ma{T}_{C_\text{out},1}&\ldots&\ma{T}_{C_\text{out},C_\text{in}}\\
\end{bmatrix}
\ve{x}=\ve{y},
\end{equation}
where $\ve{y}$ is the vectorization of the output signal $\te{y}$.

Similarly, for the equivalent mathematical formulation of the 2D convolution layer we can exploit \cref{eqn:1dvecConv}, by only changing the definition of the Toeplitz matrices $\ma{T}_{ij}$. 
To construct a  2D convolutional layer with Toeplitz matrices, let us first consider $\te{x}\in \R^{1 \times C_\text{in} \times N_1 \times N_2}$ as 2D input signal of size $N_1\times N_2$ and $\te{w}$ as the weights of the 2D convolutional layer. In the following, we use $\ma{T}\in \R^{C_\text{out}N_1 N_2 \times C_\text{in} N_1 N_2}$ for Toeplitz matrices also for 2D convolutional layers because the construction is quite similar. Unless explicitly mentioned otherwise, $\ma{T}$ is going to refer the Toeplitz matrices of 1D or 2D convolutional layer depending on the context it has been used. Moreover, let us define the following matrix
\begin{equation}{\label{eqn:2dToepsub}}
	\ma{\tau}_{i,j,k}=
	\begin{bmatrix}
		\te{w}_{i,j,m_1,k}&\ldots&\te{w}_{i,j,l_1, k}& & & & \vspace{0.8em}\\
		\vdots & \ddots & &\ddots& & & \vspace{0.8em} \\ 
		\te{w}_{i,j,1,k}&\ldots &\te{w}_{i,j,m_1, k}&\ldots&\te{w}_{i,j,l_1, k} & & \vspace{0.8em} \\
		&\ddots& &\ddots& &\ddots & \vspace{0.8em}\\
		& &\te{w}_{i,j,1,k} &\ldots &\te{w}_{i,j,m_1,k}&\ldots &\te{w}_{i,j,l_1,k}  \vspace{0.8em} \\
		& &  &\ddots &  &\ddots  & \vspace{0.8em} \\
		& & & &\te{w}_{i,j,1, k}&\ldots &	\te{w}_{i,j,m_1,k}
	\end{bmatrix}\in \R^{N_1 \times N_1}.
\end{equation}
Using \cref{eqn:2dToepsub}, we can define the matrices $\ma{T}_{i,j}$ for a 2D convolutional layer by replacing the elements of \cref{eqn:1dToep} by $\ma{\tau}_{i,j,k}$, i.e., changing the entries $\te{w}_{i,j,m}$ with matrices $\ma{\tau}_{i,j,k}$. This way, we define block-Toeplitz matrices for 2D case as follows
\begin{equation}{\label{eqn:2dToep}}
	\ma{T}_{ij}=
	\begin{bmatrix}
		\te{\tau}_{i,j,m_2}&\ldots&\te{\tau}_{i,j,l_2}& & & & \vspace{0.8em}\\
		\vdots & \ddots & &\ddots& & & \vspace{0.8em} \\ 
		\te{\tau}_{i,j,1}&\ldots &\te{\tau}_{i,j,m_2}&\ldots&\te{\tau}_{i,j,l_2} & & \vspace{0.8em} \\
		&\ddots& &\ddots& &\ddots & \vspace{0.8em}\\
		& &\te{\tau}_{i,j,1} &\ldots &\te{\tau}_{i,j,m_2}&\ldots &\te{\tau}_{i,j,l_2}  \vspace{0.8em} \\
		& &  &\ddots &  &\ddots  & \vspace{0.8em} \\
		& & & &\te{\tau}_{i,j,1}&\ldots &	\te{\tau}_{i,j,m_2}
	\end{bmatrix}\in \R^{N_1 N_2 \times N_1 N_2}.
\end{equation}
Finally, analogous to \cref{eqn:1dvecConv} we can represent a 2D convolutional layer by using \cref{eqn:2dToep} as 
 \begin{equation*}
 	\begin{bmatrix}
 		\ma{T}_{1,1}&\ldots&\ma{T}_{1,C_\text{in}}\\
 		\vdots &\ddots & \vdots \\
 		\ma{T}_{C_\text{out},1}&\ldots&\ma{T}_{C_\text{out},C_\text{in}}\\
 	\end{bmatrix}
 	\ve{x}=\ve{y},
 \end{equation*}
 where $\ve{y}$ is the vectorization of the output signal $\te{y}$.
\begin{remark}
	Using the same construction technique in 1D and 2D case, it is possible to extend the equivalent formulation to the 3D case.
\end{remark}
\begin{remark}
 We only use the mathematically equivalent form of the convolution operation for theoretical contributions, in the experiments we use standard convolution operators provided by \texttt{PyTorch} \cite{PasGMetal19}, i.e. we do not construct these matrices.
\end{remark}
\section{Symplectic CNNs}\label{sec:SympCNN}\label{subsec:SympNets}

In this section, we define a symplectic convolutional autoencoder by integrating ideas from SympNets~\cite{JinZZ20} and the symplectic autoencoder~\cite{BraKra23}. We begin with a brief overview of Hamiltonian systems and describe symplectic lifting and reduction. Then, we summarize SympNets~\cite{JinZZ20} and demonstrate how these ideas can be integrated to construct a convolutional autoencoder. 

Let us first denote $\ma{0}\in \R^{n \times n}$ as the matrix of zeros, $\ma{I}_n\in \R^{n \times n}$ as the identity matrix, $\ve{x}_0$ as the initial condition, and $\nabla_\ve{x}$ as the gradient with respect to $\ve{x}$.
Canonical Hamiltonian systems are defined by: 
\begin{equation} \label{eq:HamiltonianEquations} 
	\dot{\ve{x}}(t) = \ma{J}_{2n} \nabla_\ve{x} H(\ve{x}(t)) \in \R^{2n}, \quad \ve{x}(0) = \ve{x}_0, 
\end{equation} 
where $\ma{J}_{2n}$ is the canonical Poisson matrix of the form 
\begin{equation}\label{eqn:Jn} 
	\ma{J}_{2n} := 
	\begin{bmatrix} \ma{0} & \ma{I}_n\\ 
		-\ma{I}_n & \ma{0} 
	\end{bmatrix} \in \R^{2n \times 2n}, 
\end{equation} and the state $\ve{x} \in \R^{2n}$ contains generalized momenta $\ve{p} \in \R^n$ and generalized positions $\ve{q} \in \R^n$. The dynamics of the Hamiltonian system \cref{eq:HamiltonianEquations} is determined by the Hamiltonian ``energy'' function $H \colon \R^{2n} \to \R$, which remains constant over time for $x(t)$ satisfying \cref{eq:HamiltonianEquations}, i.e.
$$
\dfrac{d}{d t} H(\ve{x}(t)) = \nabla_\ve{x} H (\ve{x}(t))^T \dot{\ve{x}}(t) = \nabla_\ve{x} H (\ve{x}(t))^T \ma{J}_{2n} \nabla_\ve{x} H(\ve{x}(t)) = 0.
$$
Before introducing other properties of Hamiltonian systems, let us define symplectic transformations for both linear and general nonlinear cases. Let $\mathbb{V}$ denote a vector space of dimension $2n$, and let $\Omega$ be the symplectic form on $\mathbb{V}$. The symplectic form $\Omega$ is an alternating bilinear, and nondegenerate form, $\Omega: \mathbb{V} \times \mathbb{V} \rightarrow \mathbb{R}$. Assuming $\mathbb{V} = \mathbb{R}^{2n}$, for all $\ve{\xi}, \ve{\nu} \in \mathbb{V}$, the symplectic form can be represented as follows: $$ \Omega(\ve{\xi}, \ve{\nu}) = \ve{\xi}^T \ma{J}_{2n} \ve{\nu}. $$
Moreover, $(\mathbb{V}, \Omega)$ is called a symplectic vector space. Let us introduce $(\mathbb{V}, \Omega)$ and $(\mathbb{W}, \omega)$ as two symplectic vector spaces with $\dim(\mathbb{V})=2n$, $\dim(\mathbb{W})=2k$, and $k\leq n$.
\begin{defi}[\cite{PenMoh16}]\label{eq:linearSymplecticity}
	A linear map $\mathcal{A}:\mathbb{W}\rightarrow\mathbb{V}$ is called \emph{symplectic lifting} if it preserves the symplectic structure:
	\begin{equation}
		\label{eqn:symplecticLin}
		\omega(\ve{z},\ve{w})=\Omega(\mathcal{A}(\ve{z}),\mathcal{A}(\ve{w})).
	\end{equation}
\end{defi}
In canonical coordinates, where we can represent $\mathcal{A}$ via a matrix $\ma{A}$, \cref{eqn:symplecticLin} is equivalent to the following condition:
\begin{equation}\label{eqn:symplecticMatrix}
	\ma{A}^T\ma{J}_{2n}\ma{A}=\ma{J}_{2k}.
\end{equation}
Additionally, a matrix satisfying the condition in \cref{eqn:symplecticMatrix} is called a symplectic matrix. Let us denote the set of all symplectic matrices $A \in \R^{2n \times 2k}$ by $Sp(2k, \R^{2n})$, which is referred to as the symplectic Stiefel manifold \cite{PenMoh16}.
\begin{defi}[\cite{PenMoh16}]\label{eq:symplecticInverse}
	The \emph{symplectic inverse} $\ma{A}^{+}$ of a symplectic matrix $\ma{A} \in Sp(2k, \R^{2n})$ is defined as:
	$$\ma{A}^{+}=\ma{J}^T_{k}\ma{A}^T\ma{J}_{2n}.$$
\end{defi}
In general, let us recall a definition of a symplectic nonlinear transformation from, e.g.,~\cite{yildiz2024data}.
\begin{defi}[\cite{Sil08}]
	A map $\boldsymbol{\psi}:\R^{2k} \to \R^{2n},~n\geq k$, is a \emph{symplectic transformation from $\R^{2k}$ to $\R^{2n}$} when the following condition is  fulfilled:
	\begin{equation}\label{eq:symplecticEmbedding}
		(\Diff \boldsymbol{\psi}_\ve{x})^T \ma{J}_{2n} \Diff \boldsymbol{\psi}_\ve{x} = \ma{J}_{2k}, \qquad \forall~\ve{x} \in \R^{2k},
	\end{equation}
	where 	$\Diff \boldsymbol{\psi}_\ve{x} \in \R^{2n \times 2k}$ is the Jacobian of $ \boldsymbol{\psi}$ with respect to $\ve{x}$. This implies that the Jacobian of the map with respect to the state $\ve{x}$ is satisfying \cref{eqn:symplecticMatrix}, i.e., the Jacobian is a symplectic matrix for all $\ve{x}$. Furthermore, we refer to it as a \emph{symplectic lifting} \cite{yildiz2024data}. 
\end{defi}
The notion of symplecticity plays two important roles in Hamiltonian dynamics. Firstly, the flow $\mathbf{F}^t$ of Hamiltonian systems is symplectic. The flow of a Hamiltonian system refers to the map that transforms the initial point to the corresponding solution of the system at time $t$, i.e., $\mathbf{F}^t(\mathbf{x}_0)=\mathbf{x}(t)$. This concept has been utilized in SympNets~\cite{JinZZ20} to learn the flow of Hamiltonian systems. Secondly, symplectic transformations, also known as canonical transformations, are used for coordinate transformations of Hamiltonian systems. A canonical transformation preserves Hamilton's equations, meaning it transforms Hamiltonian coordinates into another set of coordinates that also describe a Hamiltonian system.  For a more detailed overview of symplectic transformations and Hamiltonian systems, we refer to the book~\cite{marsden2013introduction}.

\subsection{SympNets}
Next, we summarize SympNets~\cite{JinZZ20} which focus on the flow of Hamiltonian systems. Let us first define the following notation for matrix-like nonlinear maps
\begin{align*}
	\begin{bmatrix}
		f_1 & f_2 \\
		f_3 & f_4
	\end{bmatrix}: \R^{2n} \to \R^{2n}, \qquad
	\begin{bmatrix}
		f_1 & f_2 \\
		f_3 & f_4
	\end{bmatrix}
	\begin{bmatrix}
		\ve{q}\\ \ve{p}
	\end{bmatrix}:=
	\begin{bmatrix}
		f_1(\ve{q})+f_2(\ve{p})\\
		f_3(\ve{q})+f_4(\ve{p})
	\end{bmatrix},
\end{align*}
where $f_i:\R^{n} \to \R^n,~i=1,...,4$.
\begin{remark}
	Note that the SympNets in \cite{JinZZ20} and \cite{JanB25} use $\ve{p}$ as the upper part of $\ve{x}$ instead of $\ve{q}$. Since the roles of $\ve{p}$ and $\ve{q}$ are interchagable in SympNets, we use $\ve{q}$ as the first part of $\ve{x}$ to keep the notation consistent throughout the paper. 
\end{remark}
There are two kinds of SympNets presented in \cite{JinZZ20}. We only focus on the so called LA-SympNets, since their architecture can make use of symplectic convolutional layers, while there is no obvious way to use them in the G-SympNet architecture, because they do not have linear layers as building blocks. \cite{JanB25} introduced a new time-adaptive form of linear and activation modules for LA-SympNets and showed that the resulting SympNets have the same approximation properties as the original SympNets from \cite{JinZZ20}, while using $n$ less parameters per layer. Hence, we are going to use the modules from \cite{JanB25} by fixing the adaptive time step $h=1$, because we just want to parametrize one symplectic map instead of a family of maps. As discussed in \cite{JanB25}, this also removes the necessity of the inverse linear layers.
\begin{defi}[LA-SympNet \cite{JanB25}]
	\label{def:SympNet}
	First we define \emph{linear modules} by
	\begin{align*}
		\mathcal{L}_m^{\text{up}}\begin{bmatrix}
			\ve{q}\\ \ve{p}
		\end{bmatrix}&=
		\begin{bmatrix}
			\mathbf{I}_n & \mathbf{0}/\mathbf{S}_m\\ \mathbf{S}_m/\mathbf{0} & \mathbf{I}_n
		\end{bmatrix}...\begin{bmatrix}
			\mathbf{I}_n & \mathbf{0}\\ \mathbf{S}_2 & \mathbf{I}_n
		\end{bmatrix}
		\begin{bmatrix}
			\mathbf{I}_n & \mathbf{S}_1\\ \mathbf{0} &\mathbf{I}_n
		\end{bmatrix}
		\begin{bmatrix}
			\ve{q}\\ \ve{p}
		\end{bmatrix},\\
		\mathcal{L}_m^{\text{low}}\begin{bmatrix}
			\ve{q}\\ \ve{p}
		\end{bmatrix}&=
		\begin{bmatrix}
			\mathbf{I}_n & \mathbf{0}/\mathbf{S}_m\\ \mathbf{S}_m/\mathbf{0} & \mathbf{I}_n
		\end{bmatrix}...\begin{bmatrix}
			\mathbf{I}_n & \mathbf{S}_2\\ \mathbf{0} & \mathbf{I}_n
		\end{bmatrix}
		\begin{bmatrix}
			\mathbf{I}_n & \mathbf{0}\\ \mathbf{S}_1 & \mathbf{I}_n
		\end{bmatrix}
		\begin{bmatrix}
			\ve{q}\\ \ve{p}
		\end{bmatrix}
	\end{align*}
	with $\mathbf{S}_1,...,\mathbf{S}_n \in \R^{n \times n}$ symmetric. We write
	\begin{align*}
		\mathcal{M}_\text{L}:=\left\{v \vert v \text{ is a linear module}\right\}
	\end{align*} for the \emph{set of linear modules}. For an activation function $\sigma$, the corresponding \emph{activation modules} are given by
	\begin{align*}
		\mathcal{N}_\text{up}\begin{bmatrix}
			\ve{q}\\ \ve{p}
		\end{bmatrix}=\begin{bmatrix}
			\mathbf{I}_n & \tilde{\sigma}_{a,b}\\ \mathbf{0} & \mathbf{I}_n
		\end{bmatrix}
		\begin{bmatrix}
			\ve{q}\\ \ve{p}
		\end{bmatrix}, \qquad
		\mathcal{N}_\text{low}\begin{bmatrix}
			\ve{q}\\ \ve{p}
		\end{bmatrix}=\begin{bmatrix}
			\mathbf{I}_n & \mathbf{0}\\ \tilde{\sigma}_{a,b} & \mathbf{I}_n
		\end{bmatrix}
		\begin{bmatrix}
			\ve{q}\\ \ve{p}
		\end{bmatrix}
	\end{align*}
	with $\tilde{\sigma}_{\ve{a},\ve{b}}(\ve{x}):=\diag(\ve{a})\sigma(\ve{x}+\ve{b}),$ where $\ve{a},\ve{b} \in \R^d.$ We denote the \emph{set of activation modules} by
	\begin{align*}
		\mathcal{M}_\text{A}=\left\{w \vert w \text{ is an activation module}\right\}.
	\end{align*}
	Now we can define the set of \emph{LA-SympNets} as
	\begin{align*}
		\Psi_{\text{LA}}:=\left\{\psi= v_{l+1} \circ w_k \circ v_l \circ ... \circ w_1 \circ v_1\vert v_1,...,v_{k+1}\in \mathcal{M}_\text{L}, w_1, ..., w_l \in \mathcal{M}_\text{A}, l \in \N\right\}
	\end{align*}
\end{defi}
The SympNets defined in \Cref{def:SympNet} are symplectic by design, because every layer $u \in \mathcal{M}_\text{L}\cup \mathcal{M}_\text{A}$ is of the form
\begin{align*}
		u\pq =
		\begin{bmatrix}
			\mathbf{I}_n & \nabla V \\ \mathbf{0} & \mathbf{I}_n
		\end{bmatrix}\pq,
		 \qquad \text{or}\qquad u\pq = \begin{bmatrix}
		\mathbf{I}_n & \mathbf{0} \\ \nabla V & \mathbf{I}_n
	\end{bmatrix}\pq,
\end{align*}
or a finite composition of these maps. And both of these maps are symplectic for any potential $V \in C^2(\R^{2n})$. Since symplectic maps form a group with composition as the group operation, LA-SympNets are symplectic by definition. This general form of SympNets (covering not only LA-SympNets, but G-SympNets as well) is visualized in \Cref{fig:SympNet}. In particular, we choose $V(\ve{x})= \frac{1}{2}\ve{x}^T \mathbf{S}_i \ve{x}, ~i=1,...,m$, for the linear modules and $V(\ve{x})= \ve{a}^T(\int \sigma)(\ve{x} + \ve{b})$, where $\int \sigma$ is an antiderivative of $\sigma$, for the activation modules.
\begin{figure}
	\centering
	\includegraphics[width=\textwidth]{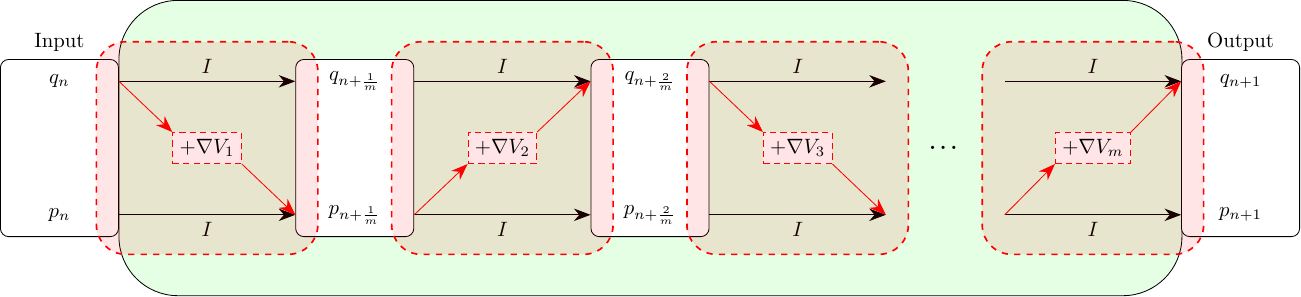}
	\caption{General idea of the SympNet architecture. The potentials $V_i$ are parametrized by different kinds of functions with trainable parameters.}
	\label{fig:SympNet}
\end{figure}
\cite{JanB25} also gives a universal approximation theorem for the LA-SympNets constructed in \Cref{def:SympNet}, i.e., LA-SympNets can approximate any symplectic map with arbitrary accuracy as long as we use sufficiently many layers. 
\subsection{Symplectic autoencoders}
In this section, we define the symplectic convolutional autoencoder (SympCAE) architecture. First, we define all the necessary modules in \Cref{subsubsec:conv,subsubsec:psd,subsubsec:pool} before building the autoencoder in \Cref{subsubsec:build}
\subsubsection{Convolutional modules}
\label{subsubsec:conv}
We utilize the SympNet architecture to construct symplectic convolutional layers via fixing some of the weights of the convolutional layers to be identity and some of them to be symmetric. We explain the idea with an example before we provide the formal definitions.\\
Consider the two input and output channel case, which can be represented as follows 
\begin{equation}\label{eqn:2b2}
	\begin{bmatrix}
		\ma{T}_{1,1}&\ma{T}_{1,2}\\
		\ma{T}_{2,1}&\ma{T}_{2,2}\\
	\end{bmatrix}
	\ve{x}=\ve{y}.
\end{equation}
Using \Cref{def:SympNet}, a linear symplectic convolution layer can be constructed by parameterizing the Toeplitz matrices in \cref{eqn:2b2} with $\ma{T}_{1,1}=\ma{T}_{2,2}=\ma{I}$, $\ma{T}_{2,1}=\ma{0}$, and $\ma{T}_{1,2}=\ma{T}_{1,2}^T$, resulting in an upper triangular symplectic layer. By splitting the input data into two channels corresponding to position and momenta in Hamiltonian dynamics, one can achieve a symplectic transformation by only parameterizing the weights of the convolutional layer. For instance, setting one channel to be an identity matrix can be easily accomplished by setting the weight tensor of length $l=3$ with $\te{w}_{1,1,\cdot}=[0,1,0]$. \begin{remark}
We note that directly implementing \eqref{eqn:2b2} by parameterizing the convolutional layers is not the most efficient method for implementing the symplectic convolutional layer. This is because three of the channels are fixed. Therefore, instead of considering the case with two inputs and two outputs, it is possible to implement an equivalent formulation by parameterizing only one input and one output channel.
\end{remark}
Nevertheless, many successful convolution autoencoder architectures depends on increasing or decreasing the number of channels. Next, we show a possible way to increase the number of input channels. Consider the following map
\begin{equation}\label{eqn:symplectic-copy}
	\ma{A}=
	\begin{bmatrix}
		c\ma{I}& \ma{0}\\
		c\ma{I}& \ma{0}\\
		\ma{0} & c\ma{I}\\
		\ma{0} & c\ma{I}
	\end{bmatrix}
\end{equation}
for some constant $c\in \R$. Using \Cref{eq:linearSymplecticity}, the linear map $\ma{A}$ is a symplectic lift if $2c^2=1$ is satisfied. Combining \cref{eqn:symplectic-copy} with a convolution layer of the same input and output channel number, we can obtain symplectic convolutional lifts. For example, considering the four input and output channel case, we can define a symplectic convolution layer as
\begin{equation}\label{eqn:4by4}
	\ma{B}=
	\begin{bmatrix}
		\ma{I} & \ma{0} & \ma{T}_1& \ma{T}_2\\
		\ma{0} & \ma{I} & \ma{T}_2& \ma{T}_3\\
		\ma{0} & \ma{0} & \ma{I}  & \ma{0}\\
		\ma{0} & \ma{0} & \ma{0} & \ma{I}
	\end{bmatrix},
\end{equation}
where the Toeplitz matrices $\ma{T}_{i}$ for $i=1,2,3$ are symmetric. By composing the symplectic layers of \cref{eqn:symplectic-copy} and \cref{eqn:4by4}, we obtain a symplectic convolutional lifting layer with two input channels and four output channels as follows:
\begin{equation}\label{eqn:4by2}
	\ma{C}=\ma{B}\ma{A}=
	\begin{bmatrix}
		c\ma{I}& \ma{T}_4\\
		c\ma{I}& \ma{T}_5\\
		\ma{0} & c\ma{I}\\
		\ma{0} & c\ma{I}
	\end{bmatrix}
\end{equation}
where $\ma{T}_4=c(\ma{T}_1+\ma{T}_2)$ and $\ma{T}_5=c(\ma{T}_2+\ma{T}_3)$, which are symmetric Toeplitz matrices.
\begin{remark}
	In practice, we do not need to construct the matrices \cref{eqn:symplectic-copy} and \cref{eqn:4by4}, we can parametrize convolutional layers using \cref{eqn:4by2}, to increase the channel number. Moreover, this parametrization can be generalized as long as the number of output channels $C_\text{out}$ is even and divisible by the number of input channels $C_\text{in}$, i.e., $C_\text{in} \mathrel{|} C_\text{out}$, as shown in \Cref{def:convmatrices}.
\end{remark}
To give a formal definition of the symplectic convolutional modules, we first need to introduce some sets of matrices.
\begin{defi}
	\label{def:convmatrices}
	Let $N, N_1, N_2 \in \N$.
	\begin{itemize}
		\item [$(i)$] The \emph{set of $N \times N$ Toeplitz matrices} $\T_N$ is given by
		\begin{align*}
			\T(N):=\left\{\ma{T}=\begin{bmatrix}
				t_{11} & \hdots & t_{1d}\\
				\vdots & \ddots & \vdots\\
				t_{d1} & \hdots & t_{dd}
			\end{bmatrix} \in \R^{N \times N}\strich \begin{aligned}
			&\ma{T} \text{ has form \cref{eqn:1dToep}, i.e., }t_{i,j}=t _{i+k,j+k},\\
			&i,j=1,...,N,\\
			&k=1,...,N-\max\{i,j\} 
			\end{aligned}\right\}.
		\end{align*}
		\item [$(ii)$] The \emph{set of symmetric $N \times N$ Toeplitz matrices} $\T_\text{sym}(N)$ is given by
		\begin{align*}
			\T_{\text{sym}}(N):=\left\{\ma{T} \in \T(N)\strich \ma{T}=\ma{T}^T\right\}.
		\end{align*}
		\item [$(iii)$] The \emph{set of block Toeplitz matrices} $\T(b, N)$ with block size $N \times N$ and $b \times b$ blocks is given by
		\begin{align*}
			\T(b, N):=\left\{\begin{bmatrix}
				\ma{T}_{1,1} & \hdots & \ma{T}_{1,b}\\
				\vdots & \ddots & \vdots\\
				\ma{T}_{b,1} & \hdots & \ma{T}_{b,b}
			\end{bmatrix}
			\in \R^{bN \times bN}\strich 
			\begin{aligned}
				&\ma{T}_{i,j} = \ma{T}_{i+k,j+k}\in \T(N),\\
				&i,j=1,...,b,\\
				&k=1,...,b-\max\{i,j\}
			\end{aligned}
			\right\}.
		\end{align*}
		\item[$(iv)$] The \emph{set of 1D block-symmetric Toeplitz matrices} $\T_\text{sym}^\text{1D}(b, N)$ with block size $N \times N$ and $b \times b$ blocks is given by
		\begin{align*}
			\T_\text{sym}^\text{1D}(b, N):=\left\{\begin{bmatrix}
				\ma{T}_{1,1} & \hdots & \ma{T}_{1,b}\\
				\vdots & \ddots & \vdots\\
				\ma{T}_{b,1} & \hdots & \ma{T}_{b,b}
			\end{bmatrix}
			\in \T(b,N)\strich 
			\begin{aligned}
				&\ma{T}_{ij}=\ma{T}_{ji} \in \T_\text{sym}(N),\\
				&i,j=1,...,b 
			\end{aligned}
			 \right\}.
		\end{align*}
		\item[$(v)$] The \emph{set of block-block Toeplitz matrices} $\T(b_1, b_2, N)$ with $b_1 \times b_1$ block Toeplitz matrices, which contain $b_2 \times b_2$ blocks of size $N \times N$ themselves, is given by
		\begin{align*}
			\T(b_1, b_2, N):=\left\{\begin{bmatrix}
				\ma{T}_{1,1} & \hdots & \ma{T}_{1,b_1}\\
				\vdots & \ddots & \vdots\\
				\ma{T}_{b_1,1} & \hdots & \ma{T}_{b_1,b_1}
			\end{bmatrix}
			\in \R^{b_1 b_2 N \times b_1 b_2 N}\strich 
			\begin{aligned}
				&\ma{T}_{i,j} = \ma{T}_{i+k,j+k}\in \T(b_2,N),\\
				&i,j=1,...,b_1,\\
				&k=1,...,b_1-\max\{i,j\}
			\end{aligned}
			\right\}.
		\end{align*}
		\item[$(vi)$] The \emph{set of 2D block-symmetric Toeplitz matrices} $\T_\text{sym}^\text{2D}(b, N_2, N_1)$ with block sizes $N_1, N_2$ and $b\times b$ blocks is given by
		\begin{align*}
			\T_\text{sym}^\text{2D}(b, N_2, N_1):=\left\{\begin{bmatrix}
				\ma{T}_{1,1} & \hdots & \ma{T}_{1,b}\\
				\vdots & \ddots & \vdots\\
				\ma{T}_{b,1} & \hdots & \ma{T}_{b,b}
			\end{bmatrix}
			\in \T(b,N_2, N_1)\strich \begin{aligned}
				& \ma{T}_{ij}=\ma{T}_{ji} \in \T_\text{sym}^{\text{1D}}(N_2, N_1),\\
				&i,j=1,...,b
			\end{aligned}\right\}.
		\end{align*}
		\item[$(vii)$] The \emph{set of 1D symplectic convolutional lifting matrices} with $C_{\text{in}}$ input channels and $C_{\text{out}}$ output channels $\C_\text{1D}(C_{\text{out}}, C_\text{in}, N)$ is given by
		\begin{align*}
			\C_{\text{1D}}(C_{\text{out}}, C_\text{in}, N):=&\left\{\begin{bmatrix}
				c \ma{I} & \ma{T}_1 \\ \vdots & \vdots \\ c \ma{I} & \ma{T}_d \\
				\ma{0} & c \ma{I} \\ \vdots & \vdots \\ \ma{0} & c \ma{I}
			\end{bmatrix}\in \R^{C_\text{out} N\times C_\text{in}N}\strich
			\begin{aligned}
				&\ma{T}_i \in \T_\text{sym}^{\text{1D}}(C_\text{in}/2, N),\\
				&c = \sqrt{\frac{1}{d}},\\
				&d=\frac{C_\text{out}}{C_\text{in}}
			\end{aligned}
			\right\}\\
			&\cup \left\{\begin{bmatrix}
				c \ma{I} & \ma{0} \\ \vdots & \vdots \\ c \ma{I} & \ma{0} \\
				\ma{T}_1 & c \ma{I} \\ \vdots & \vdots \\ \ma{T}_d & c \ma{I}
			\end{bmatrix}\in \R^{C_\text{out} N\times C_\text{in}N}\strich 
			\begin{aligned}
				&\ma{T}_i \in \T_\text{sym}^{\text{1D}}(C_\text{in}/2, N),\\
				&c = \sqrt{\frac{1}{d}},\\
				&d=\frac{C_\text{out}}{C_\text{in}}
			\end{aligned}
			\right\},
		\end{align*}
		where we assume $C_{\text{in}} \mathrel{|}C_{\text{out}}$ and $C_{\text{in}}$ is even.
		\item[$(viii)$] The \emph{set of 2D symplectic convolutional lifting matrices} with $C_{\text{in}}$ input channels and $C_{\text{out}}$ output channels $\C_\text{2D}(C_{\text{out}}, C_\text{in}, N_2, N_1)$ is given by
		\begin{align*}
			\C_{\text{2D}}(C_{\text{out}}, C_\text{in}, N_2, N_1):=&\left\{\begin{bmatrix}
				c \ma{I} & \ma{T}_1 \\ \vdots & \vdots \\ c \ma{I} & \ma{T}_d \\
				\ma{0} & c \ma{I} \\ \vdots & \vdots \\ \ma{0} & c \ma{I}
			\end{bmatrix}\in \R^{C_\text{out} N_1 N_2\times C_\text{in}N_1 N_2}\strich
			\begin{aligned}
				&\ma{T}_i \in \T_\text{sym}^{\text{2D}}(C_\text{in}/2, N_2, N_1),\\
				&c = \sqrt{\frac{1}{d}},\\
				&d=\frac{C_\text{out}}{C_\text{in}}
			\end{aligned}
			\right\}\\
			&\cup \left\{\begin{bmatrix}
				c \ma{I} & \ma{0} \\ \vdots & \vdots \\ c \ma{I} & \ma{0} \\
				\ma{T}_1 & c \ma{I} \\ \vdots & \vdots \\ \ma{T}_d & c \ma{I}
			\end{bmatrix}\in \R^{C_\text{out} N_1 N_2 \times C_\text{in}N_1 N_2}\strich 
			\begin{aligned}
				&\ma{T}_i \in \T_\text{sym}^{\text{2D}}(C_\text{in}/2, N_2, N_1),\\
				&c = \sqrt{\frac{1}{d}},\\
				&d=\frac{C_\text{out}}{C_\text{in}}
			\end{aligned}
			\right\},
		\end{align*}
		where we assume $C_{\text{in}} \mathrel{|}C_{\text{out}}$ and $C_{\text{in}}$ is even.
		\item[$(ix)$] The \emph{set of 1D symplectic convolutional projection matrices} with $C_{\text{in}}$ input channels and $C_{\text{out}}$ output channels $\C_\text{1D}^T(C_{\text{out}}, C_\text{in}, N)$ is given by
		\begin{align*}
			&\C_{\text{1D}}^T(C_{\text{out}}, C_\text{in}, N):=\\
			&\left\{\begin{bmatrix}
				c \ma{I} & \hdots & c \ma{I} & \ma{T}_1 & \hdots & \ma{T}_d\\
				\ma{0} & \hdots &\ma{0} & c \ma{I} & \hdots & c \ma{I}
			\end{bmatrix}\in \R^{C_\text{out} N\times C_\text{in}N}\strich
			\begin{aligned}
				&\ma{T}_i \in \T_\text{sym}^{\text{1D}}(C_\text{out}/2, N),\\
				&c = \sqrt{\frac{1}{d}},\\
				&d=\frac{C_\text{out}}{C_\text{in}}
			\end{aligned}
			\right\}\\
			&\cup \left\{\begin{bmatrix}
				c \ma{I} & \hdots & c \ma{I} & \ma{0} & \hdots &\ma{0}\\
				\ma{T}_1 & \hdots & \ma{T}_d & c \ma{I} & \hdots & c \ma{I}
			\end{bmatrix}\in \R^{C_\text{out} N\times C_\text{in}N}\strich 
			\begin{aligned}
				&\ma{T}_i \in \T_\text{sym}^{\text{1D}}(C_\text{out}/2, N),\\
				&c = \sqrt{\frac{1}{d}},\\
				&d=\frac{C_\text{in}}{C_\text{out}}
			\end{aligned}
			\right\},
		\end{align*}
		where we assume $C_{\text{out}} \mathrel{|}C_{\text{in}}$ and $C_{\text{in}}$ is even.
		\item[$(x)$] The \emph{set of 2D symplectic convolutional projection matrices} with $C_{\text{in}}$ input channels and $C_{\text{out}}$ output channels $\C_\text{2D}^T(C_{\text{out}}, C_\text{in}, N_2, N_1)$ is given by
		\begin{align*}
			&\C_{\text{2D}}^T(C_{\text{out}}, C_\text{in}, N_2, N_1):=\\
			&\left\{\begin{bmatrix}
				c \ma{I} & \hdots & c \ma{I} & \ma{T}_1 & \hdots & \ma{T}_d\\
				\ma{0} & \hdots &\ma{0} & c \ma{I} & \hdots & c \ma{I}
			\end{bmatrix}\in \R^{C_\text{out} N\times C_\text{in}N}\strich
			\begin{aligned}
				&\ma{T}_i \in \T_\text{sym}^{\text{2D}}(C_\text{out}/2, N_2, N_1),\\
				&c = \sqrt{\frac{1}{d}},\\
				&d=\frac{C_\text{out}}{C_\text{in}}
			\end{aligned}
			\right\}\\
			&\cup \left\{\begin{bmatrix}
				c \ma{I} & \hdots & c \ma{I} & \ma{0} & \hdots &\ma{0}\\
				\ma{T}_1 & \hdots & \ma{T}_d & c \ma{I} & \hdots & c \ma{I}
			\end{bmatrix}\in \R^{C_\text{out} N\times C_\text{in}N}\strich 
			\begin{aligned}
				&\ma{T}_i \in \T_\text{sym}^{\text{2D}}(C_\text{out}/2, N_2, N_1),\\
				&c = \sqrt{\frac{1}{d}},\\
				&d=\frac{C_\text{in}}{C_\text{out}}
			\end{aligned}
			\right\},
		\end{align*}
		where we assume $C_{\text{out}} \mathrel{|}C_{\text{in}}$ and $C_{\text{in}}$ is even.
	\end{itemize}
\end{defi}
\begin{remark}
	Not that it is not necessary to choose all the Toeplitz matrices in \Cref{def:convmatrices} $(iv)$ to be symmetric for the resulting Toeplitz matrix to be symmetric. Hence a symplectic convolutional lifting matrix would still be symplectic if we chose the non-diagonal blocks of the block-symmetric Toeplitz matrices to not be symmetric. We still choose them to be symmetric since it lowers the number of trainable parameters per matrix and it mimics the structure of a 2D convolution \cref{eqn:2dToep}. The same holds true for the construction of the block-symmetric Toeplitz matrices for the 2D case.
\end{remark}
With these preparations we can formally define the convolutional modules.
\begin{defi}[Convolutional modules]
	\label{def:convmods}
	Let $N, N_1, N_2 \in \N$. The \emph{sets of 1D and 2D symplectic convolutional lifting modules} $\mathcal{M}_\text{Conv}^\text{1D}, \mathcal{M}_\text{Conv}^\text{2D}$ with $C_\text{in}$ (even) input channels and $C_\text{out}$ output channels are given by
	\begin{align*}
		\mathcal{M}_\text{Conv}^\text{1D}:=&\left\{x \mapsto \ma{A}_k \cdots \ma{A}_1 x \strich  \ma{A}_i \in \C_\text{1D}(C_{i-1},C_i, N),~ C_0=C_\text{in},~C_k=C_\text{out},\right.\\
		& \qquad \qquad \qquad \qquad\left. C_{i-1} \mathrel{|} C_{i},~i=1,...,k,~k \in \N\right\}.\\
		\mathcal{M}_\text{Conv}^\text{2D}:=&\left\{x \mapsto \ma{A}_k \cdots \ma{A}_1 x \strich  \ma{A}_i \in \C_\text{2D}(C_{i-1},C_i, N_2, N_1),~ C_0=C_\text{in},~C_k=C_\text{out},\right.\\
		& \qquad \qquad \qquad \qquad\left. C_{i-1} \mathrel{|} C_{i},~i=1,...,k,~k \in \N\right\}.
	\end{align*}
\end{defi}
It is easy to observe that the set of symplectic convolutional modules only contains symplectic maps, as we show in the following proposition.
\begin{prop}
	\label{prop:conv}
	Every symplectic convolutional module is symplectic.
\end{prop}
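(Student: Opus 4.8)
The plan is to reduce the statement to two elementary facts and then check the core computation by hand. By \Cref{def:convmods}, every element of $\mathcal{M}_\text{Conv}^\text{1D}$ or $\mathcal{M}_\text{Conv}^\text{2D}$ is a finite product $\ma{A}_k \cdots \ma{A}_1$ whose factors are generating matrices drawn from $\C_\text{1D}$ (resp.\ $\C_\text{2D}$). Hence it suffices to show (i) that each such generating matrix satisfies the symplectic matrix condition \cref{eqn:symplecticMatrix}, and (ii) that a composition of symplectic matrices is again symplectic. Fact (ii) is immediate: if $\ma{A}^T\ma{J}_{2n}\ma{A}=\ma{J}_{2m}$ and $\ma{B}^T\ma{J}_{2m}\ma{B}=\ma{J}_{2k}$, then
\[
(\ma{A}\ma{B})^T\ma{J}_{2n}(\ma{A}\ma{B}) = \ma{B}^T\big(\ma{A}^T\ma{J}_{2n}\ma{A}\big)\ma{B} = \ma{B}^T\ma{J}_{2m}\ma{B} = \ma{J}_{2k},
\]
and induction over the factors of $\ma{A}_k\cdots\ma{A}_1$ finishes this part.

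For (i), I would fix $\ma{A}\in\C_\text{1D}(C_\text{out},C_\text{in},N)$ of the first (upper) form, write $s:=(C_\text{in}/2)N$ and $d:=C_\text{out}/C_\text{in}$, and split $\ma{A}$ into the top and bottom halves of its block rows, $\ma{A}=\begin{bmatrix}\ma{P}\\\ma{Q}\end{bmatrix}$ with $\ma{P},\ma{Q}\in\R^{ds\times 2s}$, where $\ma{P}$ collects the $d$ blocks $[\,c\ma{I}\ \ \ma{T}_i\,]$ and $\ma{Q}$ collects the $d$ blocks $[\,\ma{0}\ \ c\ma{I}\,]$. This split is chosen to match the position/momentum decomposition of the output space $\R^{C_\text{out}N}$ underlying $\ma{J}_{C_\text{out}N}$, so that
\[
\ma{A}^T\ma{J}_{C_\text{out}N}\ma{A} = \ma{P}^T\ma{Q} - \ma{Q}^T\ma{P}.
\]
A direct block multiplication yields
\[
\ma{P}^T\ma{Q} = \begin{bmatrix}\ma{0} & d c^2\,\ma{I}\\ \ma{0} & c\sum_{i=1}^d \ma{T}_i^T\end{bmatrix},
\]
and I would then feed in the two defining properties of $\C_\text{1D}$: the normalization $c=\sqrt{1/d}$ gives $dc^2=1$, and each $\ma{T}_i\in\T_\text{sym}^\text{1D}(C_\text{in}/2,N)$ is symmetric, so $\sum_i\ma{T}_i^T=\sum_i\ma{T}_i$. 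Subtracting $\ma{Q}^T\ma{P}=(\ma{P}^T\ma{Q})^T$ then cancels the $(2,2)$ block and leaves
\[
\ma{A}^T\ma{J}_{C_\text{out}N}\ma{A} = \begin{bmatrix}\ma{0} & \ma{I}\\ -\ma{I} & \ma{0}\end{bmatrix} = \ma{J}_{2s} = \ma{J}_{C_\text{in}N},
\]
which is exactly \cref{eqn:symplecticMatrix} with $2k=C_\text{in}N$. The remaining cases require no new ideas: the second (lower) family in $\C_\text{1D}$ is handled by the identical computation with the two block columns interchanged, and the 2D case $\C_\text{2D}$ is verbatim after replacing $N$ by $N_1N_2$ and $\T_\text{sym}^\text{1D}$ by $\T_\text{sym}^\text{2D}$, since the block sizes and the Toeplitz/band structure play no role whatsoever in the identity above.

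I expect the only genuine care to be needed in the bookkeeping of the symplectic partition in step (i): one must verify that the top $d$ block rows of $\ma{A}$ really index the first $C_\text{out}N/2$ coordinates selected by $\ma{J}_{C_\text{out}N}$, which is what makes the product collapse to $\ma{P}^T\ma{Q}-\ma{Q}^T\ma{P}$. Beyond that the argument is purely mechanical, and it is worth emphasizing that the whole computation rests on exactly two ingredients — the symmetry of the $\ma{T}_i$ and the scalar identity $dc^2=1$ — which is precisely what pins down the constant $c=\sqrt{1/d}$ in \Cref{def:convmatrices}.
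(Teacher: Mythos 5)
Your proof is correct and takes essentially the same route as the paper's: you verify that each generating matrix in $\C_\text{1D}$ (resp.\ $\C_\text{2D}$) satisfies \cref{eqn:symplecticMatrix} by a direct block computation resting on $dc^2=1$ and the symmetry of the $\ma{T}_i$, and then conclude for products by composing symplectic maps. Your $\ma{P}^T\ma{Q}-\ma{Q}^T\ma{P}$ split is just a tidier bookkeeping of the paper's explicit triple product, and your induction over the factors is the paper's telescoping chain of equalities.
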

\begin{proof}
	Without loss of generality, we just consider the 1D case. First, we show that $\ma{A}_i \in \C_{\text{1D}}(C_{i-1},C_i, N)$ for $i=1,...,k$ is symplectic by observing
	\begin{align*}
		\ma{A}_i^T \ma{J}_{C_i N} \ma{A}_i &=
		\begin{bmatrix}
			c \ma{I} & \hdots & c \ma{I} & \ma{0} &\hdots & \ma{0}\\
			\ma{T}_1 & \hdots &\ma{T}_d & c \ma{I} & \hdots & c \ma{I}
		\end{bmatrix}\begin{bmatrix}
			& & & \ma{I} & & \\
			& & & & \ddots & \\
			& & & & & \ma{I} \\
			-\ma{I} & & & & & \\
			& \ddots & & & & \\
			& & -\ma{I} & & & \\
		\end{bmatrix}
		\begin{bmatrix}
			c \ma{I} & \ma{T}_1 \\ \vdots & \vdots \\ c \ma{I} & \ma{T}_d \\
			\ma{0} & c \ma{I} \\ \vdots & \vdots \\ \ma{0} & c \ma{I}
		\end{bmatrix}\\
		&=\begin{bmatrix}
			\ma{0} & d c^2 \ma{I}\\
			-dc^2 \ma{I} & \sum_{i=1}^{d}c\ma{T}_i - \sum_{i=1}^{d} c\ma{T}_i
		\end{bmatrix}= \ma{J}_{C_{i-1}N},
	\end{align*}
	because $c = \sqrt{1/d}$. Now the statement of the proposition follows from
	\begin{align*}
		(\ma{A}_k \cdots \ma{A}_1)^T \ma{J}_{C_\text{out} N} \ma{A}_k \cdots \ma{A}_1
		&= \ma{A}_1^T \cdots \ma{A}_k^T\ma{J}_{C_\text{out} N} \ma{A}_k \cdots \ma{A}_1\\
		&= \ma{A}_1^T \cdots \ma{A}_{k-1}^T\ma{J}_{C_{k-1} N} \ma{A}_{k-1} \cdots \ma{A}_1\\
		&= \ma{A}_1^T \ma{J}_{C_1 N}\ma{A}_1 = \ma{J}_{C_\text{in}N}.
	\end{align*}
\end{proof}
For the decoder we also need convolutional projection modules. They are used the same way as \texttt{ConvTranspose} module in a classical convolutional autoencoder. Hence, we use "convT" in our notation
\begin{defi}
	\label{def:convTmods}
	Let $N, N_1, N_2 \in \N$. The \emph{sets of 1D and 2D symplectic convolutional projection modules} $\mathcal{M}_\text{ConvT}^\text{1D}, \mathcal{M}_\text{ConvT}^\text{2D}$ with $C_\text{in}$ input channels and $C_\text{out}$ (even) output channels are given by
	\begin{align*}
		\mathcal{M}_\text{ConvT}^\text{1D}:=&\left\{x \mapsto \ma{A}_k \cdots \ma{A}_1 x \strich  \ma{A}_i \in \C_\text{1D}^T(C_{i-1},C_i, N),~ C_0=C_\text{in},~C_k=C_\text{out},\right.\\
		& \qquad \qquad \qquad \qquad\left. C_{i} \mathrel{|} C_{i-1},~i=1,...,k,~k \in \N\right\}.\\
		\mathcal{M}_\text{ConvT}^\text{2D}:=&\left\{x \mapsto \ma{A}_k \cdots \ma{A}_1 x \strich  \ma{A}_i \in \C_\text{2D}^T(C_{i-1},C_i, N_2, N_1),~ C_0=C_\text{in},~C_k=C_\text{out},\right.\\
		& \qquad \qquad \qquad \qquad\left. C_{i} \mathrel{|} C_{i-1},~i=1,...,k,~k \in \N\right\}.
	\end{align*}
\end{defi}
Again, it is straightforward to prove that the convolutional projection modules are indeed symplectic.
\begin{prop}
	Every symplectic convolutional projection module is symplectic in the sense that it is the symplectic inverse of a symplectic lifting.
\end{prop}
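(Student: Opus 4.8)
The plan is to reduce the statement to a computation on the individual generator matrices and then pass to modules using the multiplicative behaviour of the symplectic inverse. I would carry out the 1D case in full; the 2D case is identical after replacing $\T_\text{sym}^\text{1D}(C_\text{in}/2,N)$ by $\T_\text{sym}^\text{2D}(C_\text{in}/2,N_2,N_1)$ throughout, since the only property of these blocks that enters the argument is their symmetry.

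First I would record the one algebraic fact about the symplectic inverse \Cref{eq:symplecticInverse} that I need, namely that it reverses products: for symplectic matrices $\ma{A}\colon\R^{2m}\to\R^{2n}$ and $\ma{B}\colon\R^{2k}\to\R^{2m}$ one has $(\ma{A}\ma{B})^+=\ma{B}^+\ma{A}^+$. This follows directly from $\ma{A}^+=\ma{J}_{2m}^T\ma{A}^T\ma{J}_{2n}$, $\ma{B}^+=\ma{J}_{2k}^T\ma{B}^T\ma{J}_{2m}$, the relation $(\ma{A}\ma{B})^T=\ma{B}^T\ma{A}^T$, and the identity $\ma{J}_{2m}\ma{J}_{2m}^T=\ma{I}$, by which the two inner copies of $\ma{J}_{2m}$ cancel.

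The heart of the proof is the single-generator computation. I would take a lifting generator $\ma{A}\in\C_\text{1D}(C_\text{out},C_\text{in},N)$ of the first type, with symmetric Toeplitz blocks $\ma{T}_1,\dots,\ma{T}_d\in\T_\text{sym}^\text{1D}(C_\text{in}/2,N)$, $c=\sqrt{1/d}$, and $d=C_\text{out}/C_\text{in}$, and evaluate $\ma{A}^+=\ma{J}_{C_\text{in}N}^T\ma{A}^T\ma{J}_{C_\text{out}N}$. Writing $\ma{J}_{C_\text{out}N}$ in the same $2d\times 2d$ block form used in the proof of \Cref{prop:conv} (identity blocks along the upper-right block diagonal and $-\ma{I}$ blocks along the lower-left block diagonal) aligns its block columns with the two block columns of $\ma{A}$, and carrying out the two block multiplications (using $\ma{T}_i=\ma{T}_i^T$) yields
\[
\ma{A}^+=\begin{bmatrix} c\ma{I} & \hdots & c\ma{I} & -\ma{T}_1 & \hdots & -\ma{T}_d\\ \ma{0} & \hdots & \ma{0} & c\ma{I} & \hdots & c\ma{I}\end{bmatrix},
\]
which is exactly a projection generator of the first type in \Cref{def:convmatrices}$(ix)$ with blocks $-\ma{T}_i$. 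Since $\T_\text{sym}^\text{1D}(C_\text{in}/2,N)$ is a linear space, $-\ma{T}_i$ is again admissible, so $\ma{A}^+\in\C_\text{1D}^T(C_\text{in},C_\text{out},N)$; the second member of the union is handled by the same calculation. Read backwards, every projection generator $\ma{B}$ equals $\ma{A}^+$ for the lifting generator $\ma{A}$ obtained from $\ma{B}$ by negating its Toeplitz blocks.

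Finally I would assemble the module statement. A projection module is a composition $\ma{B}_k\cdots\ma{B}_1$ of projection generators (\Cref{def:convTmods}); writing each $\ma{B}_i=\ma{A}_i^+$ with $\ma{A}_i$ the associated lifting generator and applying the product-reversal identity repeatedly gives $\ma{B}_k\cdots\ma{B}_1=\ma{A}_k^+\cdots\ma{A}_1^+=(\ma{A}_1\cdots\ma{A}_k)^+$. The matched channel-divisibility constraints guarantee that $\ma{A}_1\cdots\ma{A}_k$ is a genuine lifting module (\Cref{def:convmods}), which is symplectic by \Cref{prop:conv}; hence the projection module is the symplectic inverse of a symplectic lifting, as claimed. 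I expect the only real obstacle to be the index bookkeeping in the single-generator step—correctly matching the $2d\times 2d$ block pattern of $\ma{J}_{C_\text{out}N}$ against the two block columns of $\ma{A}$ and tracking the sign that lands on the $\ma{T}_i$—together with verifying that the channel dimensions of the two generator types compose consistently; the 2D case introduces no new difficulty.
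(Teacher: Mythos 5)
Your proof is correct, and it reduces the statement to \Cref{prop:conv} just as the paper does, but the generator-level identification you use is different from---and tighter than---the paper's. The paper's proof is a one-liner: the transpose of each lifting generator $\ma{A}_i \in \C_\text{1D}(C_{i-1},C_i,N)$ lies in $\C_\text{1D}^T(C_i,C_{i-1},N)$ (similarly in 2D), so a projection module is the transpose of a lifting module, which is symplectic by \Cref{prop:conv}. Strictly speaking, that argument still has to bridge from transposes to symplectic inverses, because the statement asserts that the module is the \emph{symplectic inverse} of a symplectic lifting, and $\ma{M}^T \neq \ma{M}^+ = \ma{J}^T\ma{M}^T\ma{J}$ in general; one needs the easy but unstated observation that $\ma{M}^T = (\ma{J}\ma{M}\ma{J}^T)^+$ with $\ma{J}\ma{M}\ma{J}^T$ again a symplectic lifting. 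Your computation avoids this bridge entirely: you show that the symplectic inverse $\ma{A}^+$ of a lifting generator is itself a projection generator (with blocks $-\ma{T}_i$, still symmetric Toeplitz), note that every projection generator arises this way, and then use the reversal identity $(\ma{A}\ma{B})^+=\ma{B}^+\ma{A}^+$ to conclude that the whole projection module equals $(\ma{A}_1\cdots\ma{A}_k)^+$, where $\ma{A}_1\cdots\ma{A}_k$ is a lifting module, symplectic by \Cref{prop:conv}. Your block computation is correct (including the sign picked up by the $\ma{T}_i$ and the treatment of both members of the union), and the channel bookkeeping works out, since reversing the chain $C_i \mathrel{|} C_{i-1}$ gives exactly the divisibility conditions required of a lifting module. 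In short: the paper's route is shorter but leaves the transpose-versus-symplectic-inverse conversion implicit; yours is longer but delivers the statement exactly as formulated.
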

\begin{proof}
	Follows from \Cref{prop:conv} and the fact that $A_i^T \in \C_\text{1D}^T(C_{i},C_{i-1}, N)$ or\\
	$A_i^T \in \C_\text{2D}^T(C_{i},C_{i-1}, N_2, N_1)$.
\end{proof}
\subsubsection{PSD-like layers}
\label{subsubsec:psd}
Proper symplectic decomposition (PSD) \cite{PenMoh16} is an important and well established method in symplectic model order reduction. We want to make use of PSDs model order reduction capabilities by including PSD-like layers defined in \Cref{def:PSDmods} into our autoencoder architecture like it was already done in \cite{BraKra23}. In particular, we consider the following set of the symplectic matrices,
\begin{equation*}
	\mathbb{M}(2n,2k):=Sp(2k,\R^{n}) \cap
	\left\{
	\begin{bmatrix}
		\ma{\Phi}&\ma{0}\\
		\ma{0}&\ma{\Phi}
	\end{bmatrix} \strich \ma{\Phi} \in \R^{n\times k}
	\right\} = \left\{\begin{bmatrix}
		\ma{\Phi}&\ma{0}\\
		\ma{0}&\ma{\Phi}
	\end{bmatrix}\strich \ma{\Phi}^T \ma{\Phi}= \ma{I}_k,\ma{\Phi} \in \R^{n\times k}\right\}.
\end{equation*}
\begin{defi}[PSD-like modules]
	\label{def:PSDmods}
	Let $n, k \in \N,~n\geq k$, then the \emph{set of PSD-like modules} is given by
	\begin{align}
		\label{eqn:PSDmods}
		\begin{aligned}
			\mathcal{M}_\text{\text{PSD}}:=&\left\{x \mapsto A^+ x \strich A \in \mathbb{M}(2n,2k)\right\}\\
			=&\left\{x \mapsto \begin{bmatrix}
				\ma{\Psi} & \ma{0}\\ \ma{0} & \ma{\Psi}
			\end{bmatrix} x \strich \ma{\Psi}\ma{\Psi}^T = \ma{I}_k,~ \ma{\Psi}\in \R^{k \times n}\right\}.
		\end{aligned}
	\end{align}
	We also define the \emph{set of PSD-like transpose modules} as
	\begin{align}
		\label{eqn:PSDTmods}
		\begin{aligned}
			\mathcal{M}_\text{\text{PSDT}}:=&\left\{x \mapsto A x \strich A \in \mathbb{M}(2n,2k)\right\}\\
			=&\left\{x \mapsto \begin{bmatrix}
				\ma{\Psi} & \ma{0}\\ \ma{0} & \ma{\Psi}
			\end{bmatrix} x \strich \ma{\Psi}^T\ma{\Psi} = \ma{I}_k,~ \ma{\Psi}\in \R^{n \times k}\right\}.
		\end{aligned}
	\end{align}
\end{defi}
\begin{prop}
	The PSD-like modules are symplectic in the sense that they are the symplectic inverse of a symplectic lifting. Also, the symplectic transpose modules are symplectic liftings.
\end{prop}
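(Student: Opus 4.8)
The plan is to read both claims straight off the definitions of $\mathbb{M}(2n,2k)$, of symplectic lifting in \Cref{eq:linearSymplecticity}, and of the symplectic inverse in \Cref{eq:symplecticInverse}, since the modules were defined precisely so that this works with no real analytic content.

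First I would treat the transpose modules $\mathcal{M}_\text{PSDT}$. By construction these are the linear maps $x \mapsto \ma{A}x$ with $\ma{A}\in \mathbb{M}(2n,2k)$, and by definition $\mathbb{M}(2n,2k)\subseteq Sp(2k,\R^{2n})$, so every such $\ma{A}$ satisfies the symplectic matrix condition $\ma{A}^T\ma{J}_{2n}\ma{A}=\ma{J}_{2k}$ from \cref{eqn:symplecticMatrix}. Since the map is linear, its Jacobian equals $\ma{A}$ at every point, so \cref{eq:symplecticEmbedding} holds globally and $x\mapsto \ma{A}x$ is a symplectic lifting. To connect this with the explicit $\ma{\Psi}$-description in \cref{eqn:PSDTmods}, I would record the short block computation showing that for $\ma{A}=\begin{bmatrix}\ma{\Phi}&\ma{0}\\\ma{0}&\ma{\Phi}\end{bmatrix}$ the identity $\ma{A}^T\ma{J}_{2n}\ma{A}=\ma{J}_{2k}$ is equivalent to $\ma{\Phi}^T\ma{\Phi}=\ma{I}_k$; this justifies the second equality in the definition of $\mathbb{M}(2n,2k)$ and hence that $\ma{\Psi}:=\ma{\Phi}$ ranges exactly over the matrices with $\ma{\Psi}^T\ma{\Psi}=\ma{I}_k$.

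Next I would handle the PSD modules $\mathcal{M}_\text{PSD}$. By definition these are the maps $x\mapsto \ma{A}^{+}x$ with $\ma{A}\in\mathbb{M}(2n,2k)$, where $\ma{A}^{+}$ is the symplectic inverse from \Cref{eq:symplecticInverse}. Since $\ma{A}$ is a symplectic lifting (just shown), $\ma{A}^{+}$ is by definition the symplectic inverse of a symplectic lifting, which is exactly the asserted statement. To match the explicit form in \cref{eqn:PSDmods}, I would compute $\ma{A}^{+}=\ma{J}_{2k}^T\ma{A}^T\ma{J}_{2n}$ directly for the block-diagonal $\ma{A}$, obtaining $\ma{A}^{+}=\begin{bmatrix}\ma{\Phi}^T&\ma{0}\\\ma{0}&\ma{\Phi}^T\end{bmatrix}$; setting $\ma{\Psi}:=\ma{\Phi}^T\in\R^{k\times n}$ then gives $\ma{\Psi}\ma{\Psi}^T=\ma{\Phi}^T\ma{\Phi}=\ma{I}_k$, recovering \cref{eqn:PSDmods} and confirming the two descriptions of the set coincide.

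I do not expect a genuine obstacle, since the modules are symplectic essentially by design; the only thing requiring care is the bookkeeping that the abstract descriptions (via $\ma{A}$ and $\ma{A}^{+}$) and the explicit $\ma{\Psi}$-descriptions of the two module sets agree. The single computation worth carrying out carefully is the block multiplication $\ma{A}^T\ma{J}_{2n}\ma{A}=\begin{bmatrix}\ma{0}&\ma{\Phi}^T\ma{\Phi}\\-\ma{\Phi}^T\ma{\Phi}&\ma{0}\end{bmatrix}$ together with the analogous evaluation of $\ma{J}_{2k}^T\ma{A}^T\ma{J}_{2n}$, both of which reduce to the orthonormality relation $\ma{\Phi}^T\ma{\Phi}=\ma{I}_k$ and are routine. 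A minor point I would flag is the notational clash $\ma{J}_k$ versus $\ma{J}_{2k}$ in \Cref{eq:symplecticInverse}; I would read it as $\ma{J}_{2k}$ so that the dimensions in $\ma{A}^{+}=\ma{J}_{2k}^T\ma{A}^T\ma{J}_{2n}$ are consistent.
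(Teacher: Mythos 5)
Your proposal is correct and matches the paper's approach: the paper's proof is literally the one-liner ``Holds by definition of the PSD-like modules \cref{eqn:PSDmods} and PSD-like transpose modules \cref{eqn:PSDTmods},'' and you have simply unwound that definition explicitly, including the routine block computations verifying $\ma{A}^T\ma{J}_{2n}\ma{A}=\ma{J}_{2k} \Leftrightarrow \ma{\Phi}^T\ma{\Phi}=\ma{I}_k$ and $\ma{A}^{+}=\begin{bmatrix}\ma{\Phi}^T&\ma{0}\\\ma{0}&\ma{\Phi}^T\end{bmatrix}$ that the paper leaves implicit. Your reading of $\ma{J}^T_{k}$ as $\ma{J}^T_{2k}$ in the symplectic-inverse definition is also the correct interpretation of that typo.
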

\begin{proof}
	Holds by definition of the PSD-like modules \cref{eqn:PSDmods} and PSD-like transpose modules \cref{eqn:PSDTmods}.
\end{proof}
\subsubsection{Symplectic pooling}
\label{subsubsec:pool}
To construct a general nonlinear symplectic encoder, we lastly construct symplectic max-pooling layers by utilizing PSD-like matrices. First, let us define the equivalent matrix form of \texttt{max-pooling} for equal stride and kernel size. For simplicity, we consider the following one-channel 1D input signal as an example, to derive the general definition:
\begin{equation*}
	\ve{x}=\begin{bmatrix}
		2\\
		1\\
		3\\
		5\\
	\end{bmatrix}.
\end{equation*}
A 1D \texttt{max-pooling} operation $\pool$ with a stride and kernel size of two, padding of zero and dilation of one can be equivalently written with the following matrix:
\begin{equation*}
	\ma{\Phi}(\ve{x})=
	\begin{bmatrix}
		1&0&0&0\\
		0&0&0&1
	\end{bmatrix},
\end{equation*} 
such that $\pool(\ve{x})=\ma{\Phi}(\ve{x})\ve{x}$, which means that $\ma{\Phi}(\ve{x})$ is the Jacobian of the \texttt{max-pooling} operation at $\ve{x}$. Note that $\ma{\Phi}(\ve{x})\ma{\Phi}(\ve{x})^T=\ma{I}_2$. We will show later that this is no coincidence. 
Hence, assuming we have an input signal with two input channels, applying the same pooling operation to both channels can equivalently be expressed as:
\begin{equation*}
	\ma{P}(\ve{x})=
	\begin{bmatrix}
		\ma{\Phi}(\ve{x})&\ma{0}\\
		\ma{0}&\ma{\Phi}(\ve{x})
	\end{bmatrix},
\end{equation*}  
where $\ma{P}(\ve{x})$ is the symplectic inverse \eqref{eq:symplecticInverse} of $\ma{P}(\ve{x})^T\in \mathbb{M}(2N,2k)$, which is a symplectic projection.
\begin{defi}[Symplectic pooling]
	\label{def:pool}
	Let $\pool$ be the \texttt{max-pooling} operation with dilation of one, padding of zero and stride and kernel size $k\in \N$. Furthermore, let $\ma{\Phi}(\ve{x})\in \R^{N/k \times N}$ be the Jacobian of $\pool$ at $\ve{x} \in \R^{N},$ where $k\mathrel{|}N$. The \emph{symplectic pooling modules} for two input channels are given by
	\begin{align*}
		p_\text{up}\begin{bmatrix}
			\ve{x}_1 \\ \ve{x}_2
		\end{bmatrix}:= \begin{bmatrix}
			\ma{\Phi}(\ve{x}_1) \ve{x}_1\\
			\ma{\Phi}(\ve{x}_1) \ve{x}_2
		\end{bmatrix}, \qquad
		p_\text{low}\begin{bmatrix}
			\ve{x}_1 \\ \ve{x}_2
		\end{bmatrix}:= \begin{bmatrix}
			\ma{\Phi}(\ve{x}_2) \ve{x}_1\\
			\ma{\Phi}(\ve{x}_2) \ve{x}_2
		\end{bmatrix}.
	\end{align*}
	We denote the \emph{set of symplectic pooling modules} by
	\begin{align*}
		\mathcal{M}_\text{P}:=\left\{p\strich p \text{ is a symplectic pooling module}\right\}.
	\end{align*}
\end{defi}
\begin{remark}
	It is sufficient to only consider the two input channel case, because we will flatten the input such that there is only one channel for the generalized momenta $\ve{p}$ and one for the generalized positions $\ve{q}$, which is what is important for symplecticity.
\end{remark}
\begin{prop}
	\label{prop:pool}
	The symplectic pooling is symplectic in the sense that it is the symplectic inverse of a symplectic lifting.
\end{prop}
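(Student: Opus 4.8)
The plan is to reduce the claim to the fact already recorded for the PSD-like modules (\Cref{def:PSDmods}): a block-diagonal map of the form $\begin{bmatrix}\ma{\Psi} & \ma{0}\\ \ma{0} & \ma{\Psi}\end{bmatrix}$ with $\ma{\Psi}\ma{\Psi}^T=\ma{I}$ is the symplectic inverse of a symplectic lifting. Since the pooling modules of \Cref{def:pool} are nonlinear, I would read ``symplectic'' in the pointwise sense used throughout: at every generic input the Jacobian $\Diff p_\text{up}$ (respectively $\Diff p_\text{low}$) should have exactly this block-diagonal shape with an orthonormal-row factor $\ma{\Phi}$.

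First I would verify the identity $\ma{\Phi}(\ve{x})\ma{\Phi}(\ve{x})^T=\ma{I}_{N/k}$ announced right after the introductory example. Because the stride equals the kernel size $k$, the $\pool$ operation partitions the $N$ coordinates into $N/k$ \emph{disjoint} windows, and $\ma{\Phi}(\ve{x})$ is the selection matrix carrying a single $1$ in each row at the argmax of the corresponding window. Disjointness of the windows forces the rows to have their nonzero entries in pairwise disjoint positions, hence to be orthonormal, which yields the identity; this is the ``no coincidence'' remark in the text.

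Next I would compute the Jacobian of $p_\text{up}$. The crucial observation is that, away from the measure-zero set where a window attains its maximum twice, the argmax---and therefore the selection matrix $\ma{\Phi}(\ve{x}_1)$---is locally constant in $\ve{x}_1$. Consequently the derivatives of $\ma{\Phi}$ itself contribute nothing, $\partial_{\ve{x}_1}[\ma{\Phi}(\ve{x}_1)\ve{x}_1]=\ma{\Phi}(\ve{x}_1)$, and the off-diagonal blocks vanish, so that
\[
\Diff p_\text{up}=\begin{bmatrix}\ma{\Phi}(\ve{x}_1) & \ma{0}\\ \ma{0} & \ma{\Phi}(\ve{x}_1)\end{bmatrix}, \qquad \Diff p_\text{low}=\begin{bmatrix}\ma{\Phi}(\ve{x}_2) & \ma{0}\\ \ma{0} & \ma{\Phi}(\ve{x}_2)\end{bmatrix}.
\]
Setting $\ma{\Psi}:=\ma{\Phi}(\ve{x}_1)$ (respectively $\ma{\Phi}(\ve{x}_2)$), which satisfies $\ma{\Psi}\ma{\Psi}^T=\ma{I}_{N/k}$ by the previous step, puts each Jacobian in the PSD-like form, hence it coincides with $\ma{A}^+$ for the symplectic lifting $\ma{A}=\begin{bmatrix}\ma{\Psi}^T & \ma{0}\\ \ma{0} & \ma{\Psi}^T\end{bmatrix}\in\mathbb{M}(2N,2(N/k))$, exactly as in \Cref{eq:symplecticInverse}. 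This establishes the statement at every generic point.

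The hard part will be the non-smoothness of max-pooling: the map is not differentiable precisely where two entries of a window tie for the maximum. I would therefore state the result on the generic stratum where every window has a unique maximizer, on which $\ma{\Phi}$ is locally constant and all of the above Jacobian identities hold verbatim; the tie set is negligible and does not affect the symplectic structure carried through the network.
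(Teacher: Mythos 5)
Your proposal is correct and follows essentially the same route as the paper's proof: both arguments hinge on showing $\ma{\Phi}(\ve{x})\ma{\Phi}(\ve{x})^T=\ma{I}_{N/k}$ via the disjointness of the pooling windows (injectivity of the argmax selection) and then reduce to the PSD-like characterization \cref{eqn:PSDmods}. Your explicit Jacobian computation and the genericity caveat about tie points are details the paper leaves implicit, but they do not change the underlying argument.
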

\begin{proof}
	In the general case, fixing the stride and kernel size to be the same yields a mathematically equivalent form $\ma{\Phi}_{i,:}(\ve{x})=\ve{e}_{\pi_\ve{x}(i)}^T$ for any $\ve{x}\in \R^N$, where $\ve{e}_{\pi_\ve{x}(i)}$ is the $\pi_\ve{x}(i)$-th standard basis vector for some $i \in \mathbb{R}$, for some $\pi_\ve{x} : \{1,...,N/k\} \to \{1,...,N\}$ depending on the maximum element in the window spanned by the pooling kernel, and $\ma{\Phi}_{i,:}(\ve{x})$ denoting the $i$-th row of $\ma{\Phi}(\ve{x})$. Also, kernel size and stride being the same, results in every component of $\ve{x}$ only appearing in one window. Hence, $\pi_\ve{x}$ is injective. With this observation, it holds
	\begin{align*}
		\left(\ma{\Phi}(\ve{x})\ma{\Phi}(\ve{x})^T\right)_{ij}=\ma{\Phi}(\ve{x})_{i,:}\left(\ma{\Phi}(\ve{x})^T\right)_{:,j}=\ve{e}_{\pi_\ve{x}(i)}^T \ve{e}_{\pi_\ve{x}(j)}= \delta_{\pi_\ve{x}(i)\pi_\ve{x}(j)}=\delta_{ij}
	\end{align*}
	for $i,j=1,...,N/k$, which yields $\ma{\Phi}(\ve{x})\ma{\Phi}(\ve{x})^T = \ma{I}_{N/k}$. According to \cref{eqn:PSDmods} this is sufficient to show that the Jacobians of $p_\text{up}$ and $p_\text{low}$ are symplectic inverses of symplectic liftings. Hence, this also holds for $p_\text{up}$ and $p_\text{low}$ themselves.
\end{proof}
Following our example, we want to introduce symplectic unpooling $\unpool$ as well
\begin{align*}
	\unpool(\pool(\ve{x}))=\begin{bmatrix}
		2 \\ 0 \\ 0 \\ 5
	\end{bmatrix}.
\end{align*}
This means that we can represent the unpooling operation by $\ma{\Phi}(\ve{x})^T$, i.e.,
\begin{align*}
	 \unpool(\pool(\ve{x}))=\ma{\Phi}(\ve{x})^T\ma{\Phi}(\ve{x})\ve{x}.
\end{align*}
\begin{defi}[Symplectic unpooling]
	Let $\pool$ be the \texttt{max-pooling} operation with dilation of one, padding of zero and stride and kernel size $k\in \N$. Furthermore, let $\ma{\Phi}(\ve{x})\in \R^{N/k \times N}$ be the Jacobian of $\pool$ at $\ve{x} \in \R^{N},$ where $k\mathrel{|}N$. Additionally, let $\ve{\tilde{x}}_1, \ve{\tilde{x}}_2\in \R^k$. The \emph{symplectic unpooling modules} for two input channels with respect to the previous pooling respresented by $\ma{\Phi}(\ve{x})$ are given by
	\begin{align*}
		u\begin{bmatrix}
			\ve{\tilde{x}}_1 \\
			\ve{\tilde{x}}_2
		\end{bmatrix}:=\begin{bmatrix}
				\ma{\Phi}(\ve{x})^T \ma{\tilde{x}}_1\\
				\ma{\Phi}(\ve{x})^T \ma{\tilde{x}}_2
		\end{bmatrix}.
	\end{align*}
	We denote the \emph{set of symplectic unpooling modules} by
	\begin{align*}
		\mathcal{M}_\text{U}:=\left\{u \strich u \text{ is a symplectic unpooling module}\right\}.
	\end{align*}
\end{defi}
\begin{prop}
	The symplectic unpooling modules are symplectic liftings.
\end{prop}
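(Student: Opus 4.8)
The plan is to reduce the statement to the matrix symplecticity condition \cref{eqn:symplecticMatrix} and then verify it by a short block computation, the single nontrivial ingredient being the orthonormality relation for the pooling Jacobian already established in \Cref{prop:pool}.

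First I would observe that, for a fixed input $\ve{x}$ (which fixes the pooling pattern and hence the sparsity of $\ma{\Phi}(\ve{x})$), the unpooling module is the \emph{linear} map represented by the block-diagonal matrix
\[
\ma{U}=\begin{bmatrix}\ma{\Phi}(\ve{x})^T & \ma{0}\\ \ma{0} & \ma{\Phi}(\ve{x})^T\end{bmatrix}\in \R^{2N \times 2(N/k)}.
\]
Since this map is linear, its Jacobian is constant and equal to $\ma{U}$, so the nonlinear lifting condition \cref{eq:symplecticEmbedding} collapses to the matrix condition \cref{eqn:symplecticMatrix}: it suffices to show $\ma{U}^T \ma{J}_{2N}\ma{U}=\ma{J}_{2(N/k)}$, which certifies $\ma{U}$ as a symplectic matrix, hence the map as a symplectic lifting. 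Note that here the domain $\R^{2(N/k)}$ is the smaller space and the codomain $\R^{2N}$ the larger one, consistent with a lift rather than a projection.

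Next I would recall from the proof of \Cref{prop:pool} that equal stride and kernel size force the rows of $\ma{\Phi}(\ve{x})$ to be distinct standard basis vectors, so that $\ma{\Phi}(\ve{x})\ma{\Phi}(\ve{x})^T=\ma{I}_{N/k}$. Inserting $\ma{U}$ into the condition and multiplying blockwise, $\ma{J}_{2N}\ma{U}$ interchanges the two block columns of $\ma{U}$ up to a sign, and left-multiplication by $\ma{U}^T$ turns the surviving blocks into $\pm\ma{\Phi}(\ve{x})\ma{\Phi}(\ve{x})^T=\pm\ma{I}_{N/k}$ while annihilating the diagonal blocks, yielding exactly
\[
\ma{U}^T\ma{J}_{2N}\ma{U}=\begin{bmatrix}\ma{0} & \ma{I}_{N/k}\\ -\ma{I}_{N/k} & \ma{0}\end{bmatrix}=\ma{J}_{2(N/k)}.
\]

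The computation carries essentially no obstacle: the only substantive fact is the relation $\ma{\Phi}(\ve{x})\ma{\Phi}(\ve{x})^T=\ma{I}_{N/k}$ inherited from \Cref{prop:pool}, and the rest is block algebra. The one point requiring care is dimension bookkeeping, together with the recognition that for fixed $\ve{x}$ the map is linear so that the matrix criterion \cref{eqn:symplecticMatrix} is equivalent to the Jacobian criterion \cref{eq:symplecticEmbedding}. As a consistency check, $\ma{U}$ is precisely the transpose of the pooling matrix from \Cref{def:pool}, whose associated modules were shown to be symplectic inverses of such lifts, which independently corroborates the statement.
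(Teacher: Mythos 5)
Your proposal is correct and follows essentially the same route as the paper: both arguments rest on the single key fact $\ma{\Phi}(\ve{x})\ma{\Phi}(\ve{x})^T=\ma{I}$ inherited from \Cref{prop:pool}, applied to the block-diagonal unpooling matrix $\operatorname{diag}(\ma{\Phi}(\ve{x})^T,\ma{\Phi}(\ve{x})^T)$. The only difference is that the paper delegates the final step to the already-established symplecticity of PSD-like (transpose) modules via \cref{eqn:PSDmods}, whereas you carry out the block verification $\ma{U}^T\ma{J}_{2N}\ma{U}=\ma{J}_{2(N/k)}$ explicitly, which amounts to unfolding the same computation.
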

\begin{proof}
	The proof of \Cref{prop:pool} yields $\ma{\Phi}(\ve{x})\ma{\Phi}(\ve{x})^T = \ma{I}_k$, which is sufficient to show that the unpooling modules are symplectic according to \cref{eqn:PSDmods}.
\end{proof}
\subsubsection{Building the autoencoder}
\label{subsubsec:build}
Finally, we can define the symplectic encoder. The idea is to lift the dynamics first using convolutional and activation modules, before reducing them with a pooling and  a PSD-like layer. This approach is visualized in \Cref{fig:CNN}. We formally introduce the symplectic encoder with the following definition.
\begin{defi}[Symplectic encoder]
	\label{def:enc}
	The \emph{set of symplectic encoders} $\Psi_\text{Enc}$ is given by
	\begin{align*}
		\Psi_{\text{Enc}}:=\left\{\psi= z \circ p \circ w_l \circ y_l \circ \ldots \circ w_1 \circ y_1 \strich \begin{aligned}
			&y_i\in \mathcal{M}_\text{Conv},\ w_i \in \mathcal{M}_\text{A},\ z \in \mathcal{M}_\text{PSD},\\
			&p \in \mathcal{M}_\text{P},~
			i=1,...,l,~  l \in \N
		\end{aligned}\right\}.
	\end{align*}
\end{defi}
\begin{remark}
	Note that the dimensions and channel numbers of the outputs of the previous modules have to match the input dimension and channel of the next module. Since we assume the number of input channels to be 2 for activation, pooling and PSD-like modules, we flatten the input before applying them.
\end{remark}
In practice, one can use the symplectic inverse of each layer to construct a symplectic decoder, which would reduce the number of parameters. Nevertheless, we construct the symplectic decoder via \emph{symplectic inverse-like maps} to obtain a Petrov-Galerkin approach like in \cite{buchfink2023symplectic,BraKra23}.
Using the modules defined in the previous sections, we can define a general symplectic decoder by inverting the order of modules compared to the encoder.
\begin{defi}[Symplectic decoder]
	\label{def:decoder}
		The \emph{set of symplectic decoders} $\Psi_\text{Dec}$ is given by
	\begin{align*}
		\Psi_{\text{Dec}}:=\left\{\psi=  y_1 \circ w_1 \circ \ldots \circ w_l \circ y_l \circ u \circ z  \strich \begin{aligned}
			 &y_i\in \mathcal{M}_\text{ConvT},\ w_i \in \mathcal{M}_\text{A},\ z \in \mathcal{M}_\text{PSDT},\\
			 &u \in \mathcal{M}_\text{U},~i=1,...,l,~ l \in \N 
		\end{aligned}\right\}.
	\end{align*}
\end{defi}
Combining the symplectic encoder and decoder, we define the symplectic convolutional autoencoder as a pair of parametric mappings.
\begin{defi}[Symplectic autoencoder]
	The \emph{set of symplectic autoendocders} is given by
	\begin{align*}
		\Psi_\text{AE}:=\left\{\psi = \psi_\text{Dec} \circ \psi_\text{Enc} \strich \psi_\text{Dec} \in \Psi_{\text{Dec}},~\psi_\text{Enc} \in \Psi_{\text{Enc}}\right\}.
	\end{align*}
\end{defi}
\section{Numerical Results}
\label{sec:num}
In this section, we evaluate the performance of the symplectic convolutional autoencoder (SympCAE) on three different test cases: the linear wave equation, the nonlinear Schrödinger (NLS) equation, and the sine-Gordon (SG) equation. We compare the proposed autoencoder with the PSD autoencoder to assess accuracy. After learning a suitable embedding, we use SympNets~\cite{JinZZ20} to extrapolate the dynamics over time. We test the 1D SympCAE with the first two test cases, which are one-dimensional PDEs. Finally, we test the 2D SympCAE using the two-dimensional sine-Gordon equation.

To test the accuracy of the autoencoders, we use the following relative Frobenius error:	
\begin{equation}\label{eqn:rel_err}
	\bm{\varepsilon} = \dfrac{\|\ma{X} - \ma{Y}\|_F}{\|\ma{X}\|_F},
\end{equation}
where $\ma{X}$ consists of the trajectories of the fully-discrete ground truth model and $\ma{Y}$ is the approximation obtained via the autoencoder. Moreover, we examine the accuracy of the SympNet over the time domain using the following relative error:
\begin{equation}\label{eqn:rel_time_err}
	\bm{\varepsilon}(t_i) = \dfrac{\|\ve{u}(t_i) - \ve{\tilde{u}}(t_i)\|_2}{\|\ve{u}(t_i)\|_2}, \quad i=1,\ldots,N_t,
\end{equation}
where $N_t$ denotes number of time steps.

\subsection{Wave equation}
\label{subsec:wave}
Following~\cite{sharma2025nonlinear}, we first consider the one-dimensional linear wave equation of the form:
\begin{equation}\label{eqn:wave}
	\begin{aligned}
		&u_{tt}(x,t) =cu_{xx}(x,t),\\
		& u(x,0) = u^0(x),\\
		& u_t(x,0) = u_t^0(x), & x \in \Omega,    
	\end{aligned}
\end{equation}
where $c$ denotes the transport velocity. We set boundary conditions to be periodic. The wave equation~\eqref{eqn:wave} is a simple example of a Hamiltonian systems. To derive the Hamiltonian form of the wave equation~\eqref{eqn:wave}, let us define the variables $p=u_t$ and $q=u$, which yields the Hamiltonian form
\begin{equation}\label{eqn:Ham_wave}
	\frac{\partial z}{\partial t}
	=
	\begin{bmatrix}
		0&1\\-1&0
	\end{bmatrix}
	\frac{\delta H}{\delta z}
	,\quad z=
	\begin{bmatrix}
		q\\p
	\end{bmatrix},
\end{equation}
where $\delta $ denotes the variational derivative and the Hamiltonian is given as
\begin{align*}
	\mathcal{H}(z) = \dfrac{1}{2}\int_{\Omega}
	cq_x^2+p^2\,dx.
\end{align*}
Following~\cite{PenMoh16}, we discretize the space using a structure-preserving finite difference approach with $N$ equidistant grid points, which leads to the following semi-discrete Hamiltonian:
\begin{equation}\label{eqn:wave-disc-Ham}
	H=\sum_{i=1}^{N}\Delta x \left[\frac{1}{2}\ve{p}_i^2+\frac{c(\ve{q}_{i+1}-\ve{q}_{i})^2}{2\Delta x^2}+\frac{c(\ve{q}_{i}-\ve{q}_{i-1})^2}{2\Delta x^2}\right],
\end{equation}
where $\ve{p}_i=u_t(x_i,t), \ve{q}_i=u(t,x_i)$, and $x_i=i \Delta x$. The semi-discrete Hamiltonian form of the wave equation~\eqref{eqn:Ham_wave} is expressed as follows:
\begin{equation}\label{eqn:Wave-Ham-ODE}
	\frac{d  \ve{z}}{d t}= \ma{K}  \ve{z}, 
\end{equation}
where
$$ 
\ve{z}=\begin{bmatrix}
	\ve{q}\\  \ve{p}
\end{bmatrix}, \quad \ma{K}=\begin{bmatrix}
	\ma{0}_N&\ma{I}_N\\ c\ma{D}_{xx}& \ma{0}_N
\end{bmatrix}, 
$$
$\ma{I}_N\in \mathbb{R}^{N\times N}$ is the identity matrix, $\ma{0}_N\in \mathbb{R}^{N\times N}$ is a matrix of zeros, $\ma{D}_{xx}\in \mathbb{R}^{N\times N}$ is the central difference approximation of $\partial_{xx}$, and $\ve{q}, \ve{p} \in \mathbb{R}^{N}$ are the discretized variables $q,p$. 
To preserve the symplectic structure of the wave equation~\eqref{eqn:Wave-Ham-ODE}, we discretize it using the symplectic Euler method~\cite{hairer2006structure}:
\begin{equation}
	\begin{aligned}
		\ve{q}^{n+1} &= \ve{q}^n + \Delta t \cdot \frac{\partial H}{\partial p}(\ve{q}^n, \ve{p}^{n+1}),\\
		\ve{p}^{n+1} &= \ve{p}^n - \Delta t \cdot \frac{\partial H}{\partial q}(\ve{q}^n, \ve{p}^{n+1}),
	\end{aligned}
\end{equation}
where the superscript $n$ denotes the time step $n$.
This yields the explicit scheme:
\begin{equation}\label{eqn:wave-fully-discrete}
	\ve{z}^{n+1} = 
	\begin{bmatrix} 
		\ma{I}_N & \Delta t \ma{I}_N \\ \ma{0}_N & \ma{I}_N  
	\end{bmatrix}
	\begin{bmatrix} 
		\ma{I}_N & \ma{0}_N \\
		c \Delta t \ma{D}_{xx} & \ma{I}_N 
	\end{bmatrix} \ve{z}^{n}.
\end{equation}
Notice that \eqref{eqn:wave-fully-discrete} has a symplectic structure that can be modeled through a symplectic convolutional module $y \in \mathcal{M}_\text{Conv}$. Therefore, instead of using \eqref{eqn:wave-fully-discrete}, we parameterize the \texttt{PyTorch} 1D CNN modules, resulting in a composition of two symplectic convolution layers. We set the spatial domain $\Omega = [0, 5]$, the transport velocity $c = 1$, and the number of grid points to $N = 1024$, which yields a discretized state $\ve{z} \in \mathbb{R}^{2048}$. To construct the training set, we simulate the system until time $t = 5$ with the initial conditions $u^0(x) = \exp(-(x - 2.5)^2)$, $u_t^0(x) = 0$, and $N_t = 1024$ time steps.

First, we train the SympCAE with the reshaped input data  $\te{X}\in \mathbb{R}^{N_t\times2\times N}$ to construct a symplectic autoencoder. Then, using the same data for training, we compare the reconstruction error~\eqref{eqn:rel_err} of the SympCAE with the PSD~\cite{PenMoh16} autoencoder for latent dimensions $r=1,2,3$ in \Cref{tab:wave}, which shows that the symplectic CNN outperforms PSD autoencoder. Moreover, we show the reconstructed states and corresponding absolute errors in \Cref{fig:wave-u,fig:wave-v} for latent dimension $r=1$, which similarly demonstrates that the SympCAE can learn the solution with good accuracy even with a very small latent dimension.

\begin{table}[tb]
	\caption{Linear wave equation: The table shows the performance of the autoencoder obtained using PSD and the SympCAE in capturing the dynamics of the ground truth model in terms of relative reconstruction error \eqref{eqn:rel_err} at latent dimensions $r = 1, 2, 3$. The best result for each latent dimension is highlighted in bold.}
	\label{tab:wave}
	\centering
	\renewcommand{\arraystretch}{1.2}
	\begin{tabular}{lll}
		$r$  &  $\bm{\varepsilon}_{\text{PSD}}$ & $\bm{\varepsilon}_{\text{SympCAE}}$ \\ \hline
		$1$ & $7.28\cdot 10^{-1}$  & $\mathbf{1.47\cdot 10^{-2}}$   \\ 
		$2$ & $3.60\cdot 10^{-1}$  & $\mathbf{1.21\cdot 10^{-2}}$    \\ 
		$3$ & $7.20\cdot 10^{-2}$  & $\mathbf{9.25\cdot 10^{-3}}$  \\ \hline 
	\end{tabular}
\end{table}

\begin{figure}[tb]
	\centering
	\begin{subfigure}{0.328\textwidth}
		\includegraphics[width=1\linewidth]{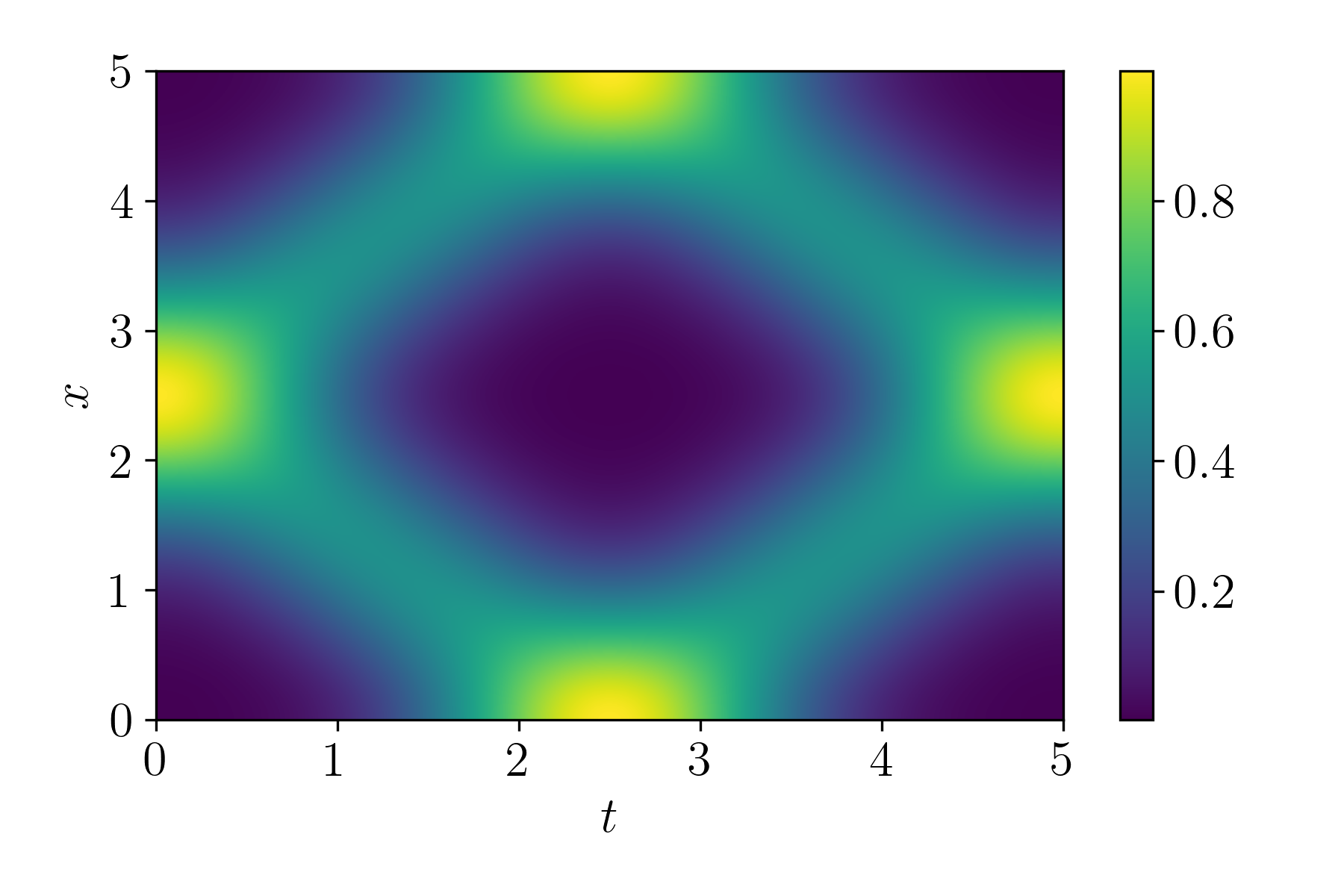}
		\caption{Ground truth}	
	\end{subfigure}
	\begin{subfigure}{0.328\textwidth}
		\includegraphics[width=1\linewidth]{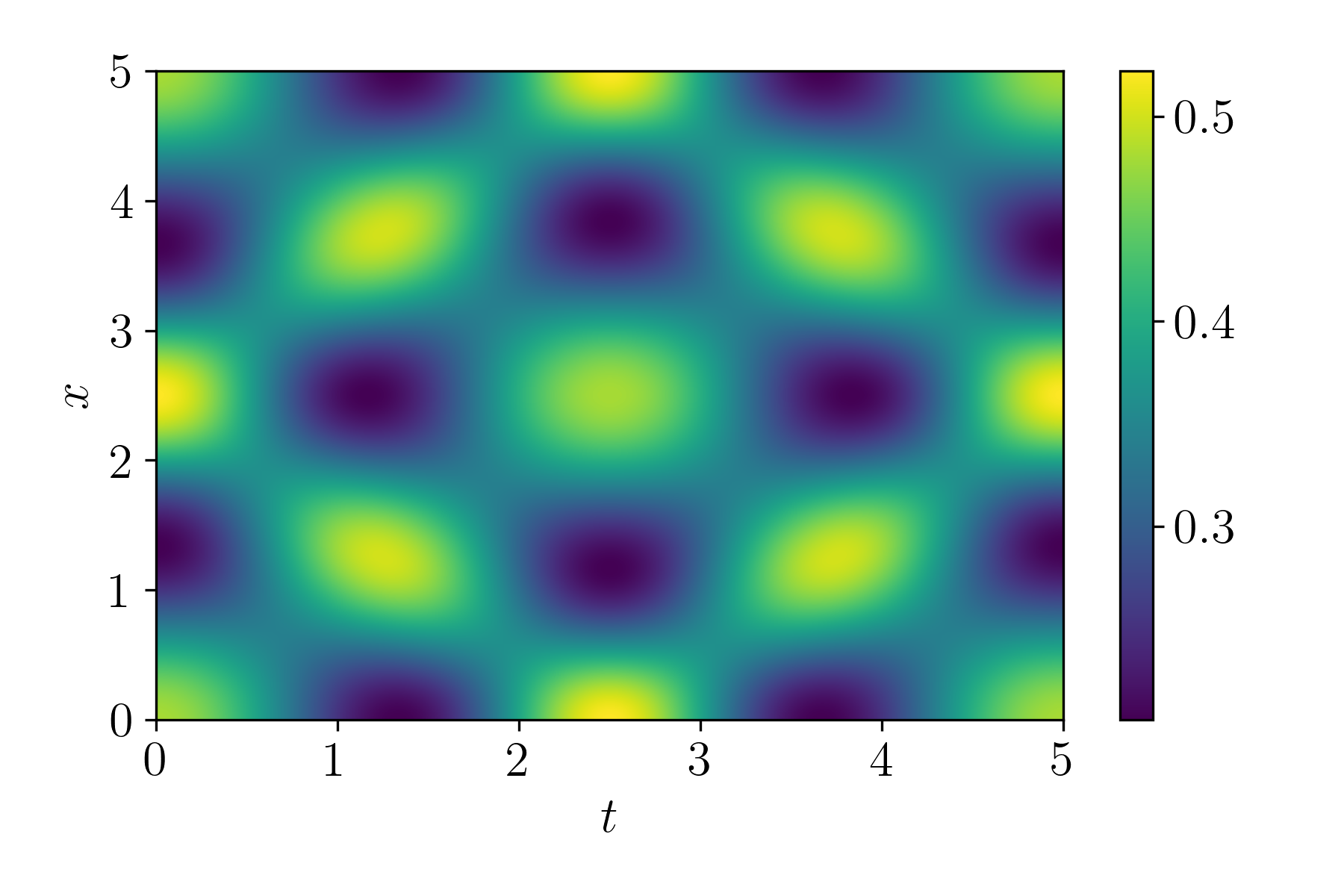} 
		\caption{Absolute PSD error }
	\end{subfigure}
	\begin{subfigure}{0.328\textwidth}
		\includegraphics[width=1\linewidth]{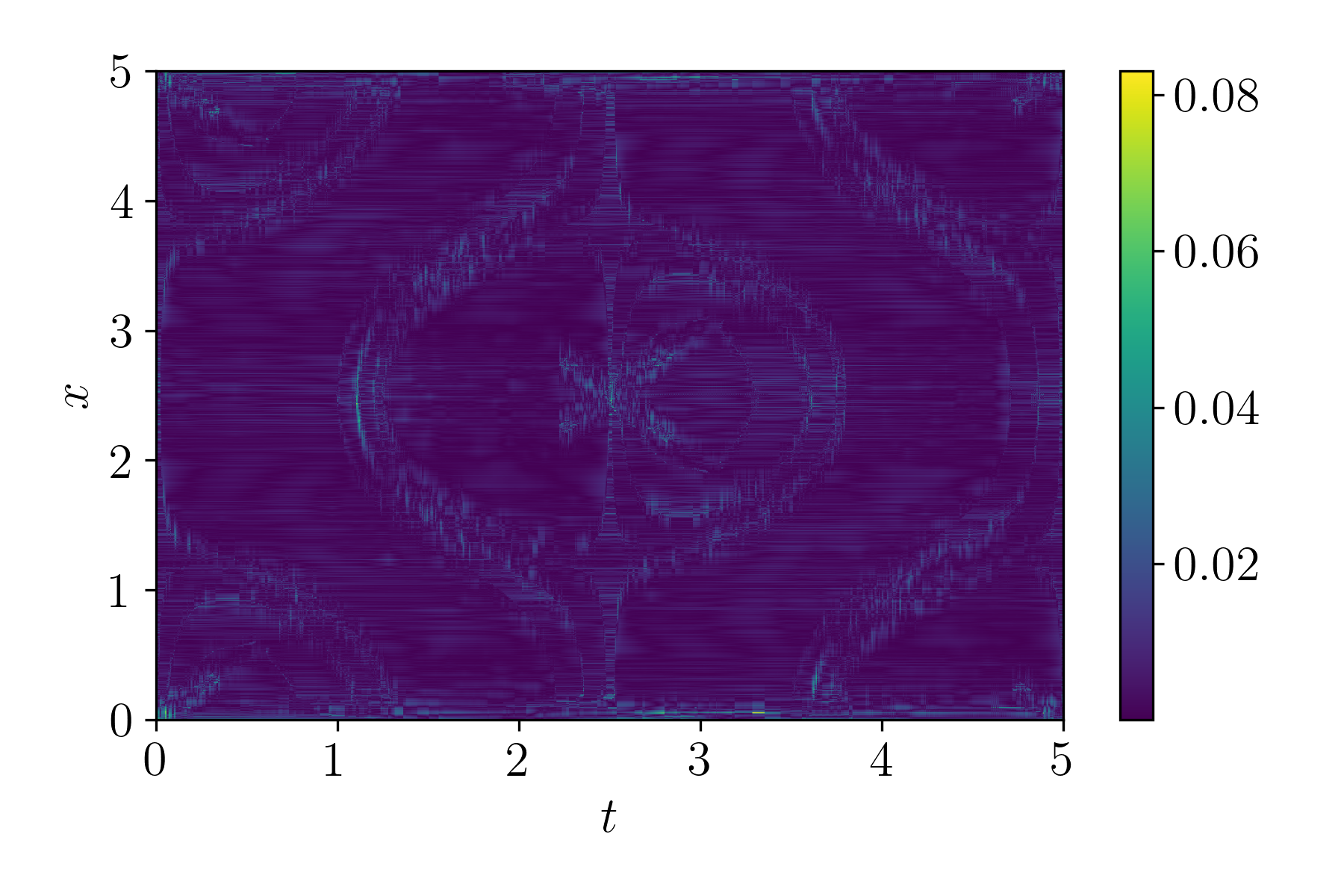}
		\caption{Absolute SympCAE error }
	\end{subfigure}
	\caption{Linear wave equation: Plot (a) shows the ground truth solution for state $q$. Plot (b) demonstrates the absolute pointwise error between the ground truth solution and the reconstructed solution obtained via PSD. Plot (c) shows the absolute pointwise error between the ground truth solution and the reconstructed solution obtained via the SympCAE.}
	\label{fig:wave-u}
\end{figure}

\begin{figure}[tb]
	\centering
	\begin{subfigure}{0.328\textwidth}
		\includegraphics[width=1\linewidth]{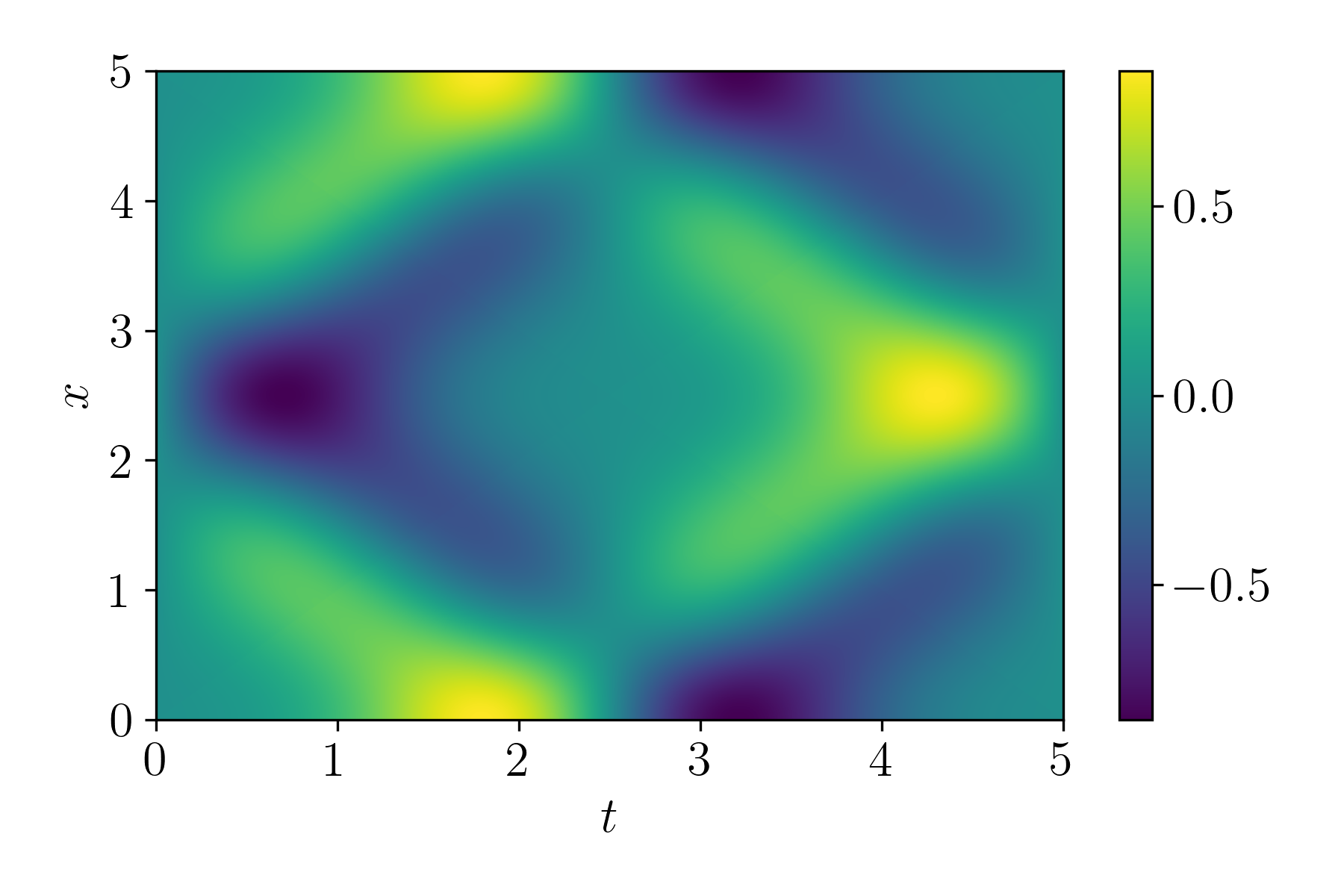}
		\caption{Ground truth}	
	\end{subfigure}
	\begin{subfigure}{0.328\textwidth}
		\includegraphics[width=1\linewidth]{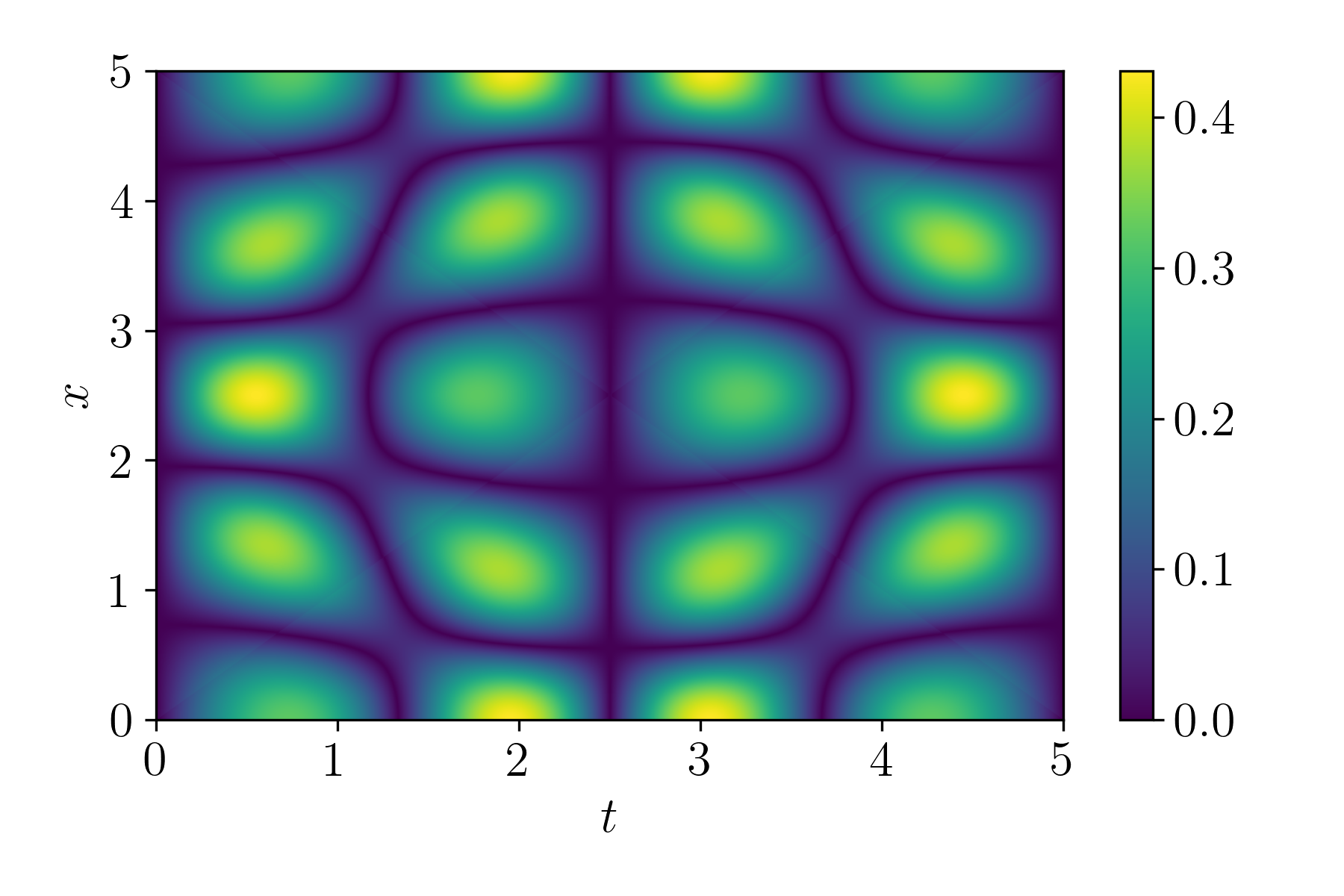} 
		\caption{Absolute PSD error}
	\end{subfigure}
	\begin{subfigure}{0.328\textwidth}
		\includegraphics[width=1\linewidth]{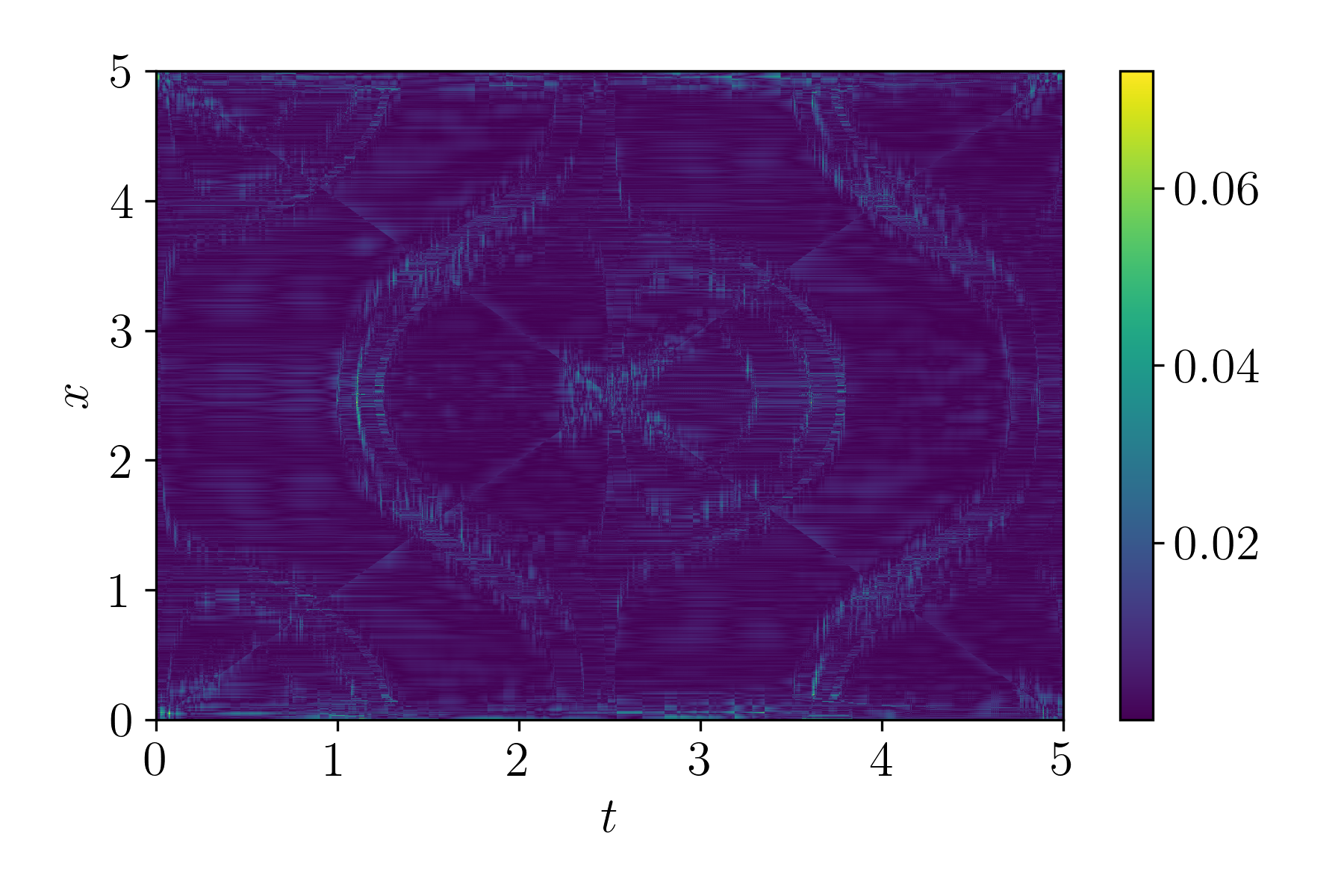}
		\caption{Absolute SympCAE error}
	\end{subfigure}
	\caption{Linear wave equation: Plot (a) shows the ground truth solution for the state $p$. Plot (b) demonstrates the absolute pointwise error between the ground truth solution and the reconstructed solution obtained via PSD. Plot (c) shows the absolute pointwise error between the ground truth solution and the reconstructed solution obtained via the SympCAE.}
	\label{fig:wave-v}
\end{figure}

Using the latent trajectories obtained via the encoder of the SympCAE, we train a SympNet to learn a low-dimensional symplectic model that can predict the dynamics of the wave equation beyond the training trajectories. To construct a training set for the SympNet, we used latent trajectories up to time $t = 5$. After learning a suitable model using the SympNet approach, we simulated the latent dynamics of the wave equation with the SympNet until $t = 10$ as the testing set, by applying the SympNet $t/\Delta t$ times to the latent initial condition. \Cref{fig:wave-SympNet} demonstrates the performance of the SympCAE combined with the SympNet. Specifically, \Cref{fig:wave-SympNet_latent} shows the latent trajectories for the training and testing sets, divided by a vertical line. We then reconstructed the testing trajectories via the decoder of the SympCAE and evaluated the performance of the combined approach using the reconstruction error \eqref{eqn:rel_time_err} in \Cref{fig:wave-SympNet_err}. This demonstrates the generalization of the combined method by showing that the SympNet is able to predict the latent testing trajectories for the SympCAE with good accuracy.

\begin{figure}[tb]
	\centering
	\begin{subfigure}{0.328\textwidth}
		\includegraphics[width=1\linewidth]{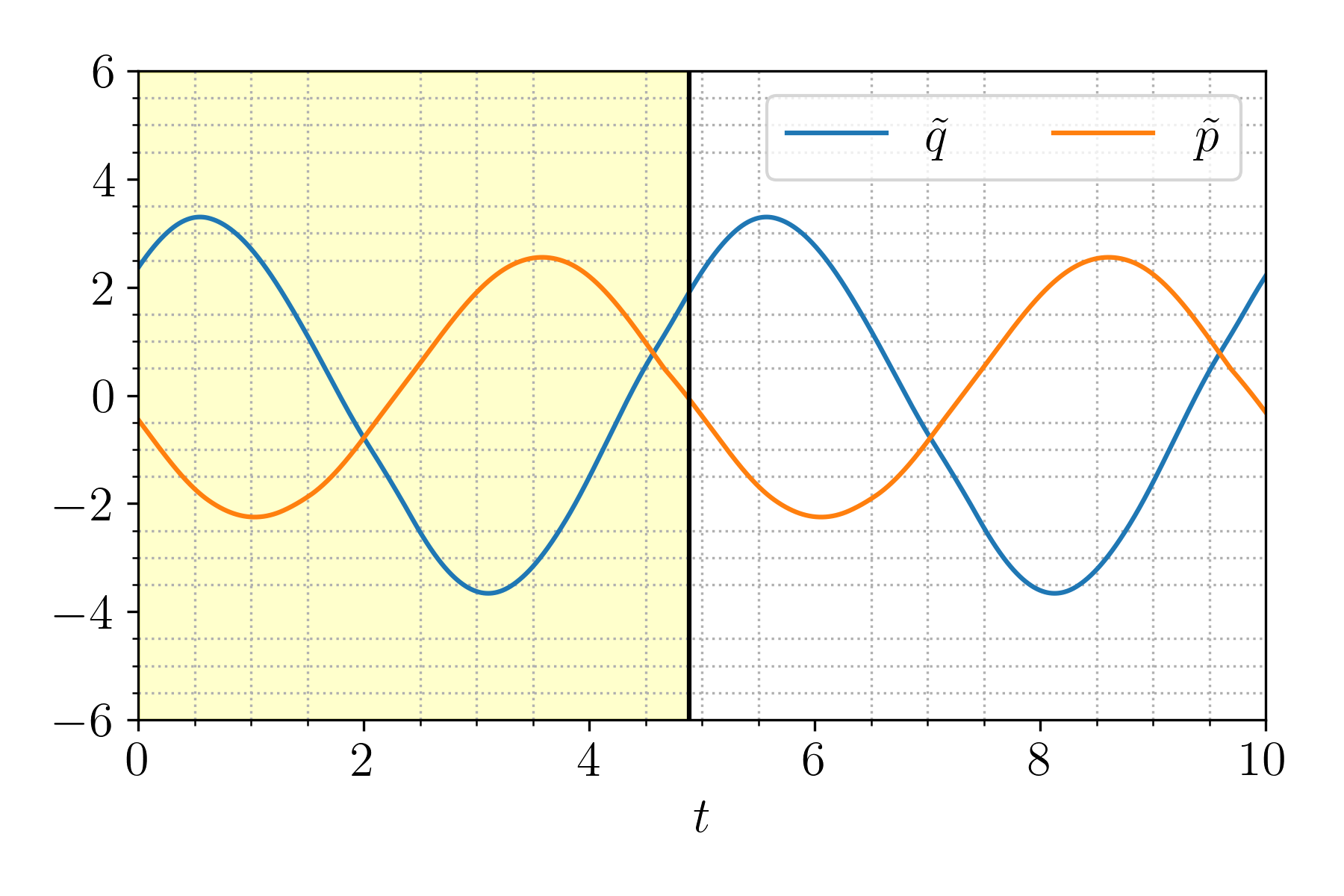}
		\caption{Latent trajectories}	
		\label{fig:wave-SympNet_latent}
	\end{subfigure}
	\begin{subfigure}{0.328\textwidth}
		\includegraphics[width=1\linewidth]{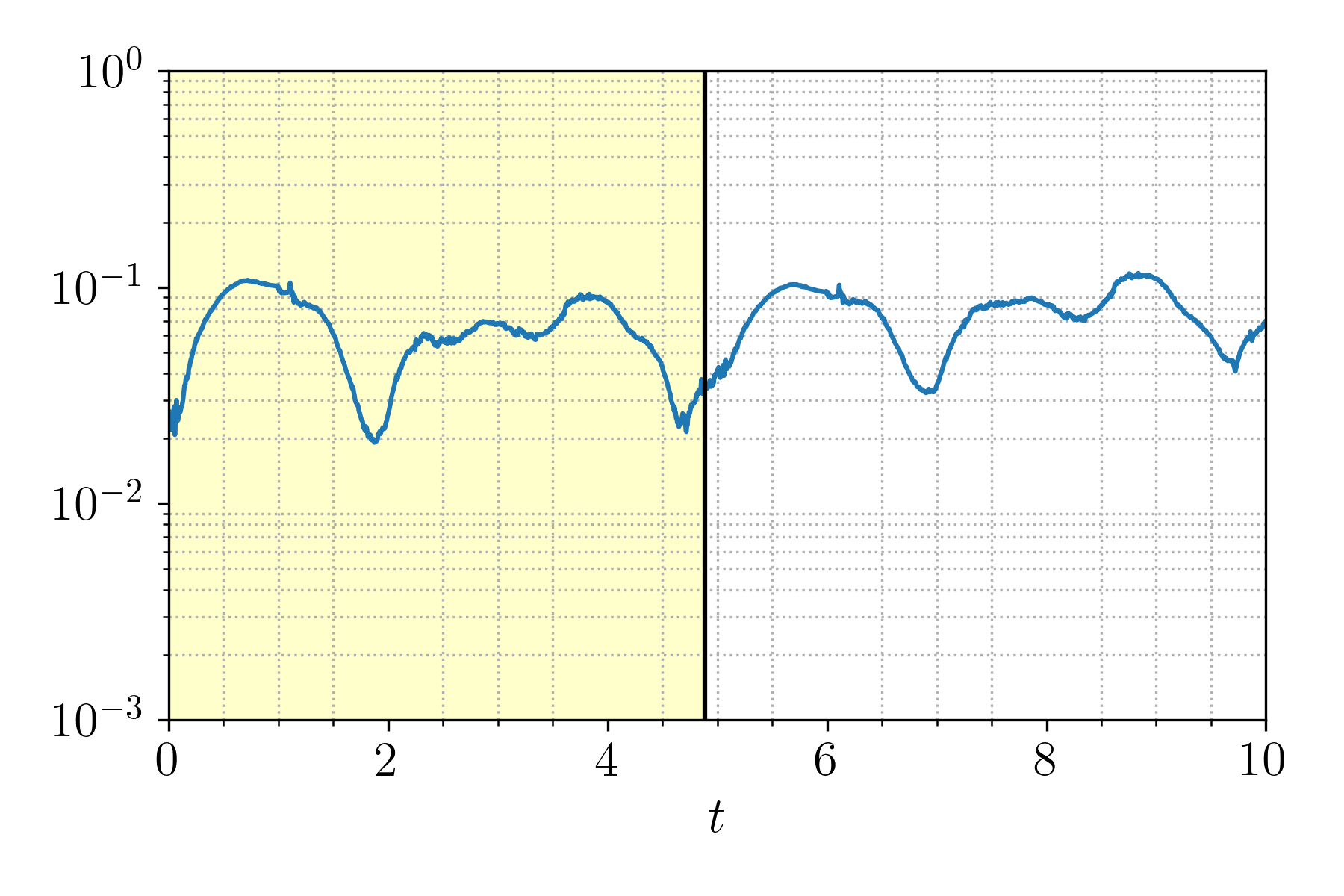}
		\caption{Relative reconstruction error}	
		\label{fig:wave-SympNet_err}
	\end{subfigure}
	\caption{A SympNet approach for learning the latent dynamics of the wave equation obtained by the SympCAE encoder. Plot (a) shows the latent trajectories obtained using the SympNet, and Plot (b) shows the relative reconstruction error over the time domain \eqref{eqn:rel_time_err} between the ground truth solution and the reconstructed solution obtained via decoder of the SympCAE. The vertical black line separate the training and testing intervals.}
	\label{fig:wave-SympNet}
\end{figure}
\subsection{NLS equation}
\label{subsec:NLS}
In our second example, following~\cite{yildiz2024data}, we test the proposed symplectic autoencoder on learning the nonlinear Schrödinger (NLS) equation. This equation is fundamental in studying various phenomena, such as Bose-Einstein condensation, small-amplitude deep-water gravity waves with zero viscosity, and the propagation of light in nonlinear optical fibers. The one-dimensional cubic NLS equation is given by:
\begin{equation}\label{eqn:schrod}
	\begin{aligned}
		& \imath  u_t(x,t) + \alpha u_{xx}(x,t) + \beta |u(x,t)|^2 u(x,t) = 0, \\
		& u(x,0) = u^0(x), & x \in \Omega.
	\end{aligned}
\end{equation}
Here, the constant parameter $\beta$ describes if the nonlinearity of the NLS equation is focusing for negative values and defocusing for positive values, and the parameter $\alpha$ is a non-negative constant. In this example, we set the parameters to $\alpha=1$, $\beta=1.5$, the initial condition to $u^0(x)=\sqrt{2}\sech(x)$, and consider the domain $\Omega=[-2\pi,2\pi]$ with periodic boundary conditions.

The canonical Hamiltonian form of the NLS equation \eqref{eqn:schrod} appears after expressing the complex-valued solution $u$ in terms of its imaginary and real parts as $u=p+ \imath q$, which yields:
\begin{equation}
	\begin{aligned}
		&q_t = p_{xx}  + \beta(p^2+q^2)p,\\
		&p_t = -q_{xx} - \beta(p^2+q^2)q,
	\end{aligned}
\end{equation}
with the Hamiltonian
\begin{align*}
	\mathcal{H}(u) = \dfrac{1}{2}\int_{\Omega}
	(q_x^2 + p_x^2) + \dfrac{\beta}{2} (q^2 + p^2)^2 \, dx.
\end{align*}
To obtain a structure-preserving discretization in space, we discretize the space using the central difference approach as in the previous example, which yields the following system of ODEs:
\begin{equation}\label{eqn:nls-semi-discrete}
	\begin{aligned}
		&\ve{q}_t = \ma{D}_{xx}\ve{p} + \beta\left(\ve{p}\odot\ve{p} + \ve{q}\odot\ve{q}\right)\odot\ve{p}, \\
		&\ve{p}_t = -\ma{D}_{xx}\ve{q} + \beta\left(\ve{p}\odot\ve{p} + \ve{q}\odot\ve{q}\right)\odot\ve{q},
	\end{aligned}
\end{equation}  
where $\odot$ denotes the element-wise (Hadamard) product and $\ve{p}, \ve{q} \in \mathbb{R}^{N}$ are discretized uniformly as in the previous example. We obtain the training data by integrating the semi-discrete NLS equation~\eqref{eqn:nls-semi-discrete} using the implicit midpoint rule, which is an implicit symplectic integrator~\cite{hairer2006structure}. As in the previous example, we use $N=1024$ grid points in the spatial domain and simulate the trajectories of the NLS equation~\eqref{eqn:schrod} until time $t=5$ with $N_t=200$ time points, which yields the training set $\ma{X}\in \mathbb{R}^{2048\times 200}$. 

As in the previous example, we first train our symplectic autoencoder using the reshaped input data  $\te{X}\in \mathbb{R}^{N_t\times2\times N}$. We demonstrate the relative reconstruction error~\eqref{eqn:rel_err} in \Cref{tab:NLS} for the first three latent dimensions and compare it with the PSD low-rank approximation. Similar to the previous example, this shows that the SympCAE is outperforming the PSD autoencoder. Nevertheless, the SympCAE model for latent dimension $r=3$ performs slightly worse compared to latent dimension $r=2$ due to the fixed selected hyperparameters for the three different dimensions. Obviously, a variation of hyperparameters tailored towards $r=3$ would lead to better performance in that case. In~\Cref{fig:NLS-u,fig:NLS-v}, we show the ground truth trajectory of the NLS equation \eqref{eqn:schrod} and compare the SympCAE with the low-rank solution obtained via PSD in terms of pointwise absolute error for latent dimension $r=1$, which again shows that the proposed autoencoder can capture the dynamics of the NLS equation more accurately.

Lastly, we present the encoded latent trajectories obtained through the encoder of the SympCAE, along with the reconstructed solution achieved by combining a SympNet with the decoder of the SympCAE, in \Cref{fig:NLS-SympNet}. To train the SympNet model, we first encoded the input data $\te{X}\in \mathbb{R}^{N_t\times2\times N}$ using the SympCAE. We then simulated the SympNet model until time $t=10$, as shown in \Cref{fig:NLS-SympNet_latent}. Finally, we reconstructed the latent testing trajectories using the decoder of SympCAE. \Cref{fig:NLS-SympNet_err} presents the relative reconstruction \eqref{eqn:rel_time_err}, which indicates that the testing trajectories perform slightly worse than those in the training set. Note that we have fixed the indices of the pooling layers while training the autoencoder to preserve the symplectic structure and used the same indices in the latent testing trajectories, which results in relatively lower accuracy. One potential solution to avoid this issue is to remove the symplectic pooling layer from the autoencoder. Nevertheless, since the error does not deviate significantly, the proposed architecture remains with an acceptable error in this context.

\begin{table}[tb]
	\caption{NLS equation: The table shows the performance of the autoencoder obtained using PSD and the SympCAE in capturing the dynamics of the ground truth model in terms of relative reconstruction error \eqref{eqn:rel_err} at latent dimensions $r = 1, 2, 3$. The best result for each latent dimension is highlighted in bold.}
	\label{tab:NLS}
	\centering
	\renewcommand{\arraystretch}{1.2}
	\begin{tabular}{lll}
		$r$  &  $\bm{\varepsilon}_{\text{PSD}}$ & $\bm{\varepsilon}_{\text{SympCAE}}$ \\ \hline
		$1$ & $1.85\cdot 10^{-1}$  & $\mathbf{4.44\cdot 10^{-2}}$   \\ 
		$2$ & $1.04\cdot 10^{-1}$  & $\mathbf{1.35\cdot 10^{-2}}$    \\ 
		$3$ & $5.21\cdot 10^{-2}$  & $\mathbf{1.82\cdot 10^{-2}}$   \\ \hline
	\end{tabular}
\end{table}

\begin{figure}[tb]
	\centering
	\begin{subfigure}{0.328\textwidth}
		\includegraphics[width=1\linewidth]{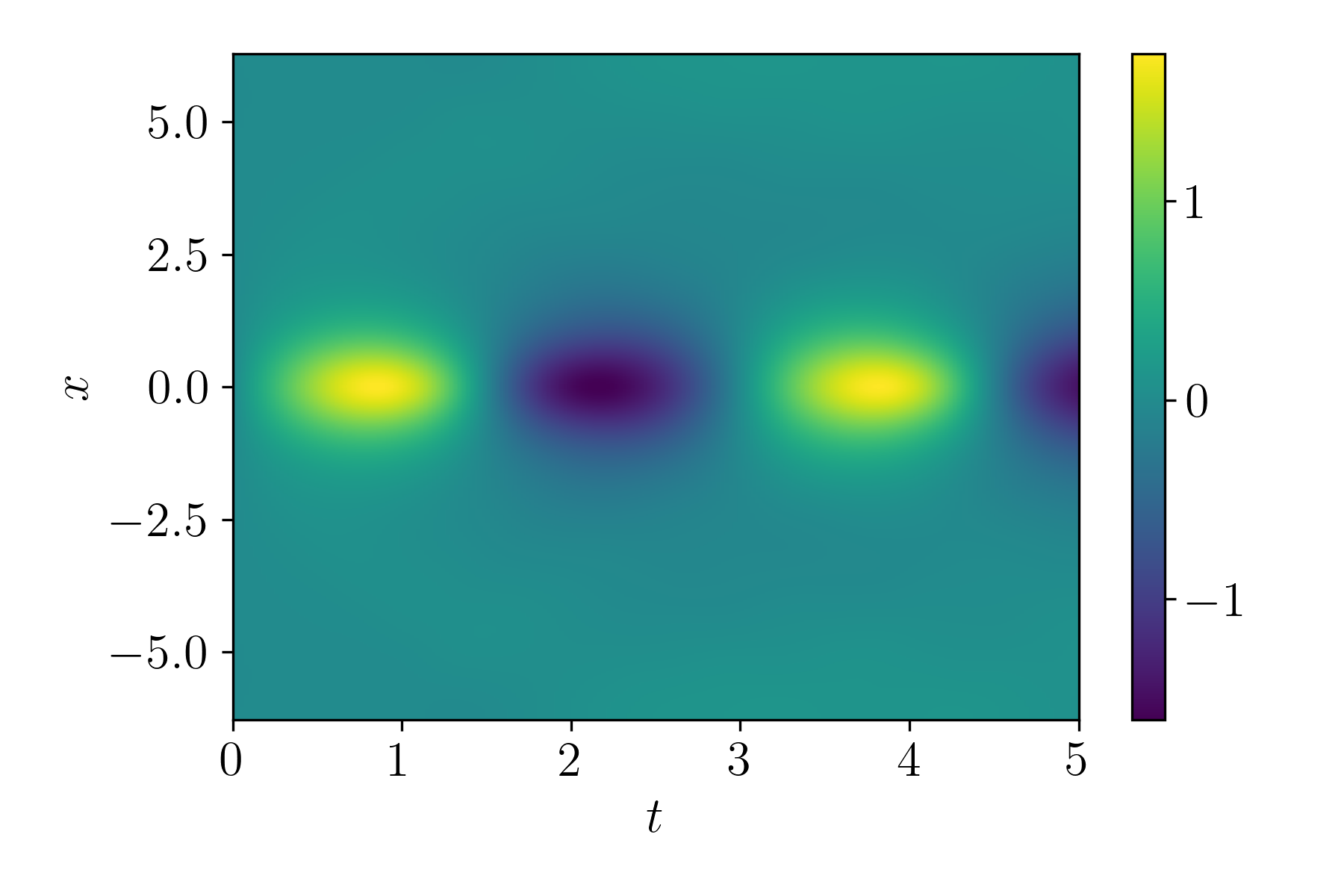}
		\caption{Ground truth}	
	\end{subfigure}
	\begin{subfigure}{0.328\textwidth}
		\includegraphics[width=1\linewidth]{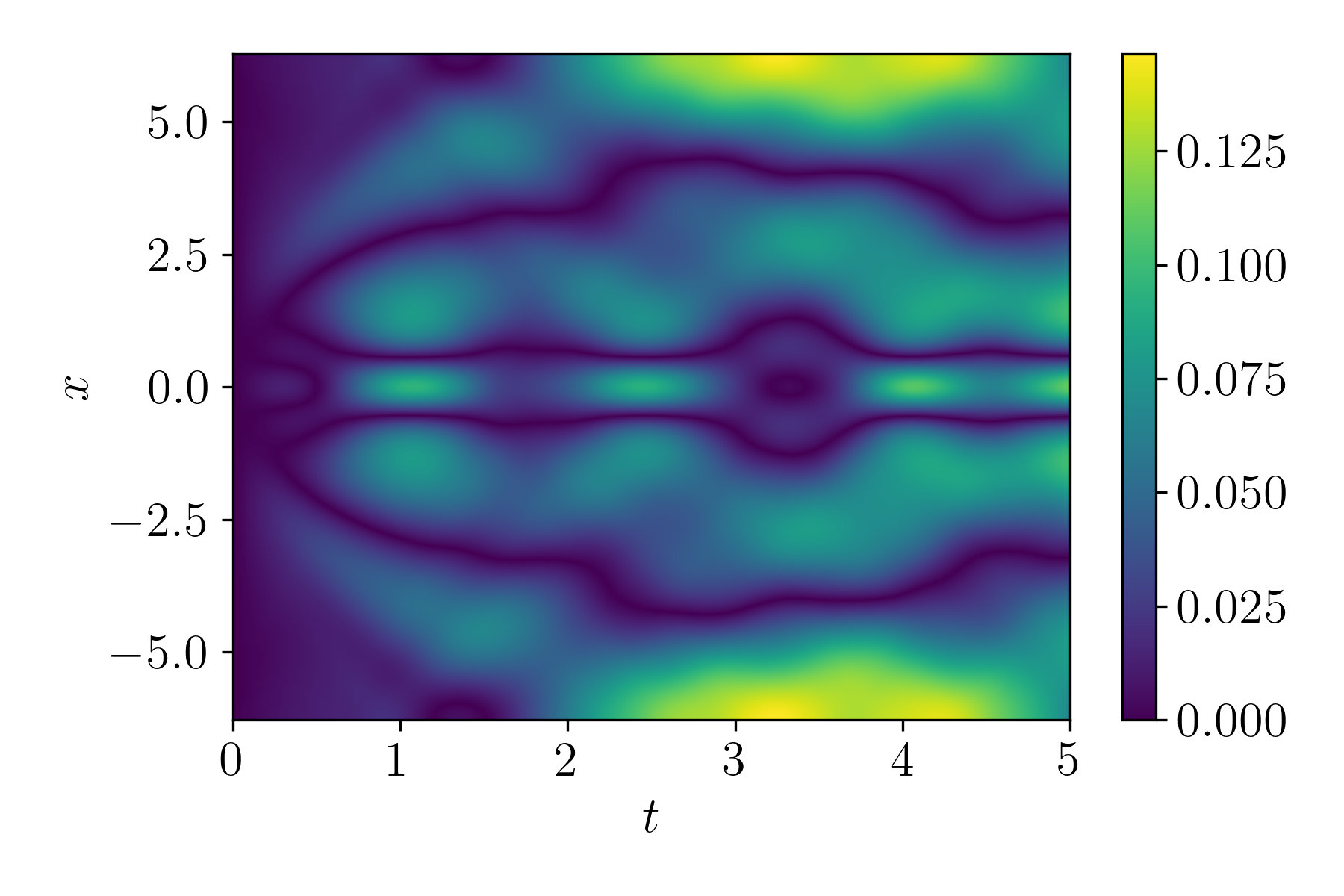} 
		\caption{Absolute error PSD}
	\end{subfigure}
	\begin{subfigure}{0.328\textwidth}
		\includegraphics[width=1\linewidth]{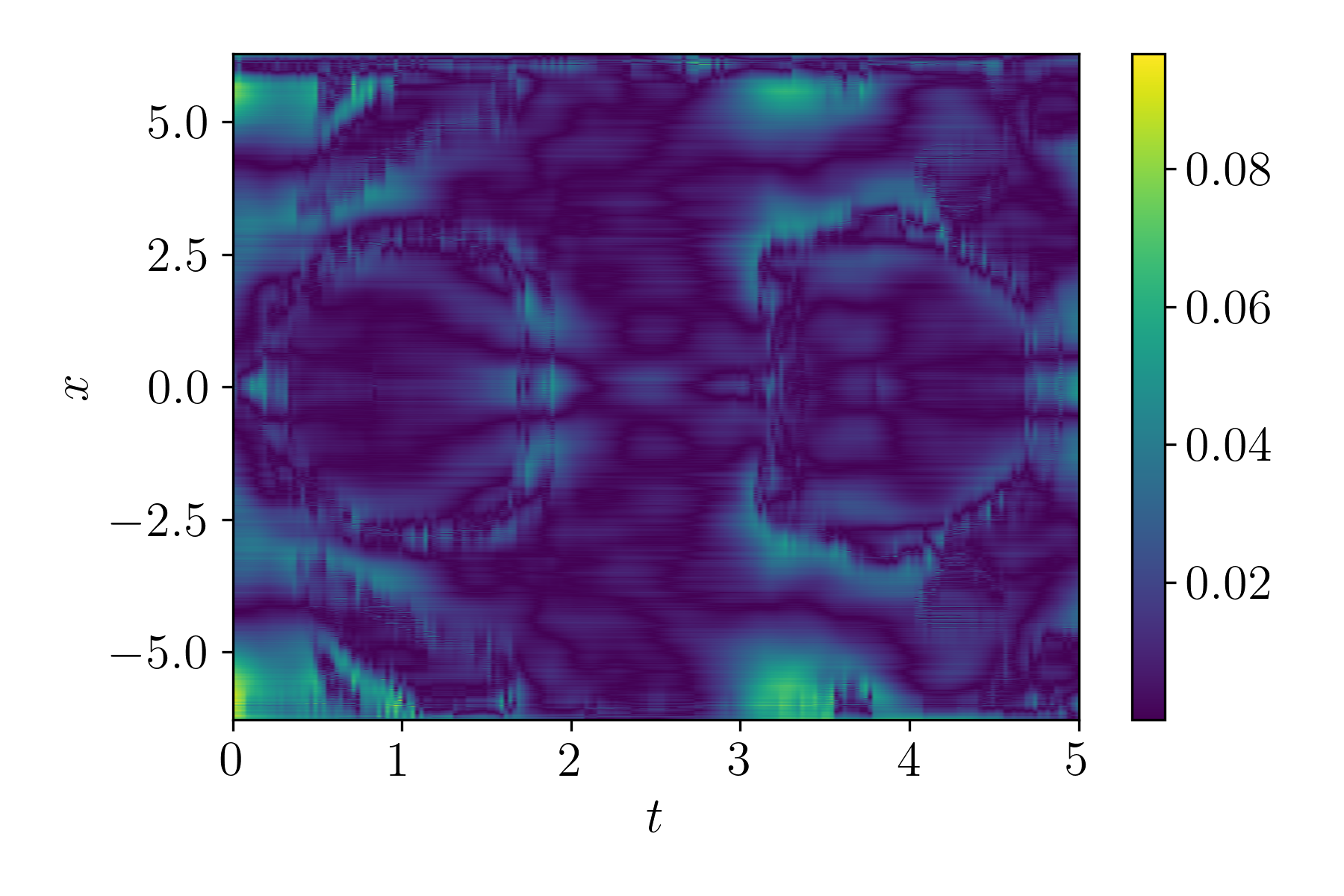}
		\caption{Absolute error CNN}
	\end{subfigure}
	\caption{NLS equation: Plot (a) shows the ground truth solution  for state $q$. Plot (b) demonstrates the absolute pointwise error between the ground truth solution and the reconstructed solution obtained via PSD. Plot (c) shows the absolute pointwise error between the ground truth solution and the reconstructed solution obtained via the SympCAE.}
	\label{fig:NLS-u}
\end{figure}

\begin{figure}[tb]
	\centering
	\begin{subfigure}{0.328\textwidth}
		\includegraphics[width=1\linewidth]{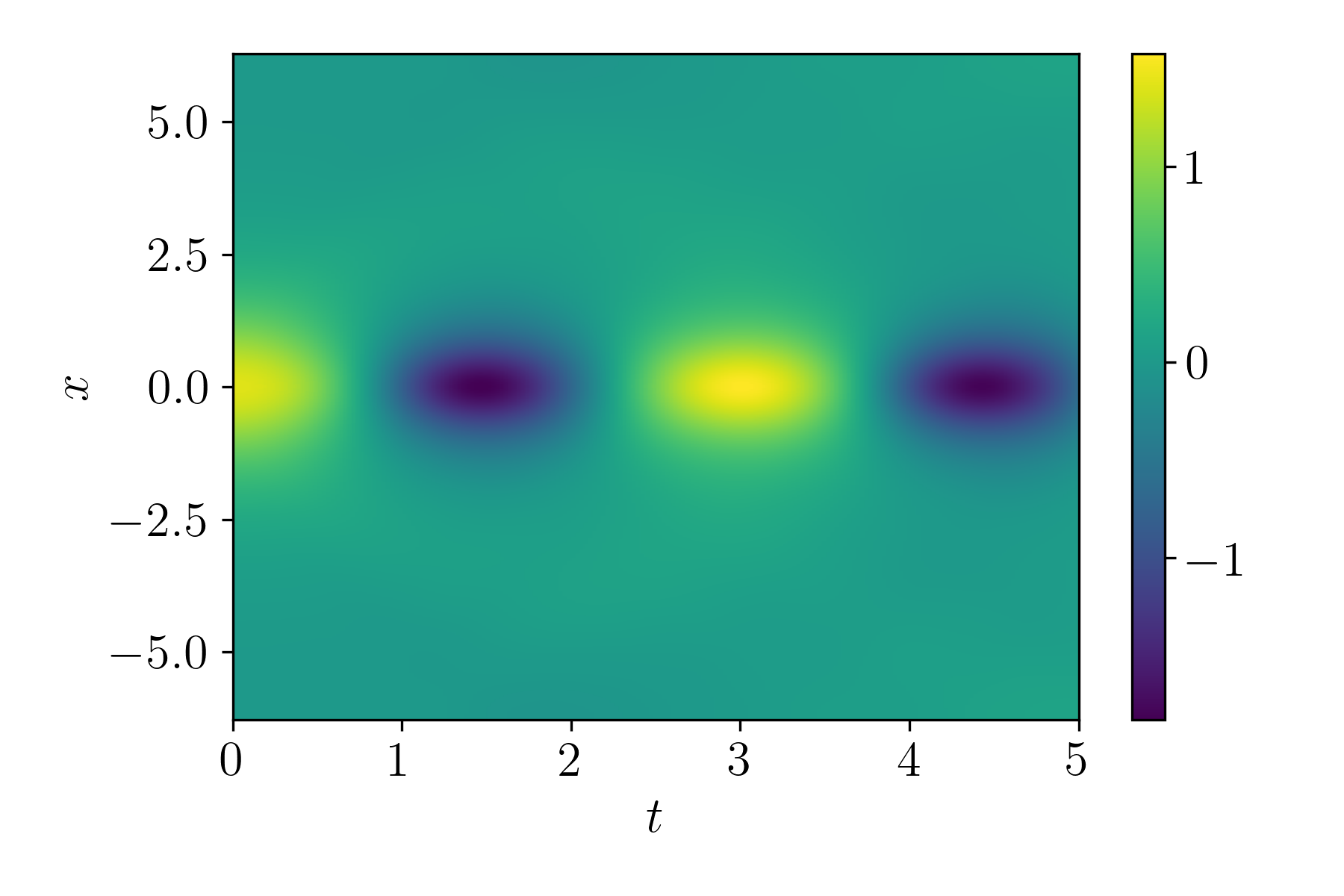}
		\caption{Ground truth}	
	\end{subfigure}
	\begin{subfigure}{0.328\textwidth}
		\includegraphics[width=1\linewidth]{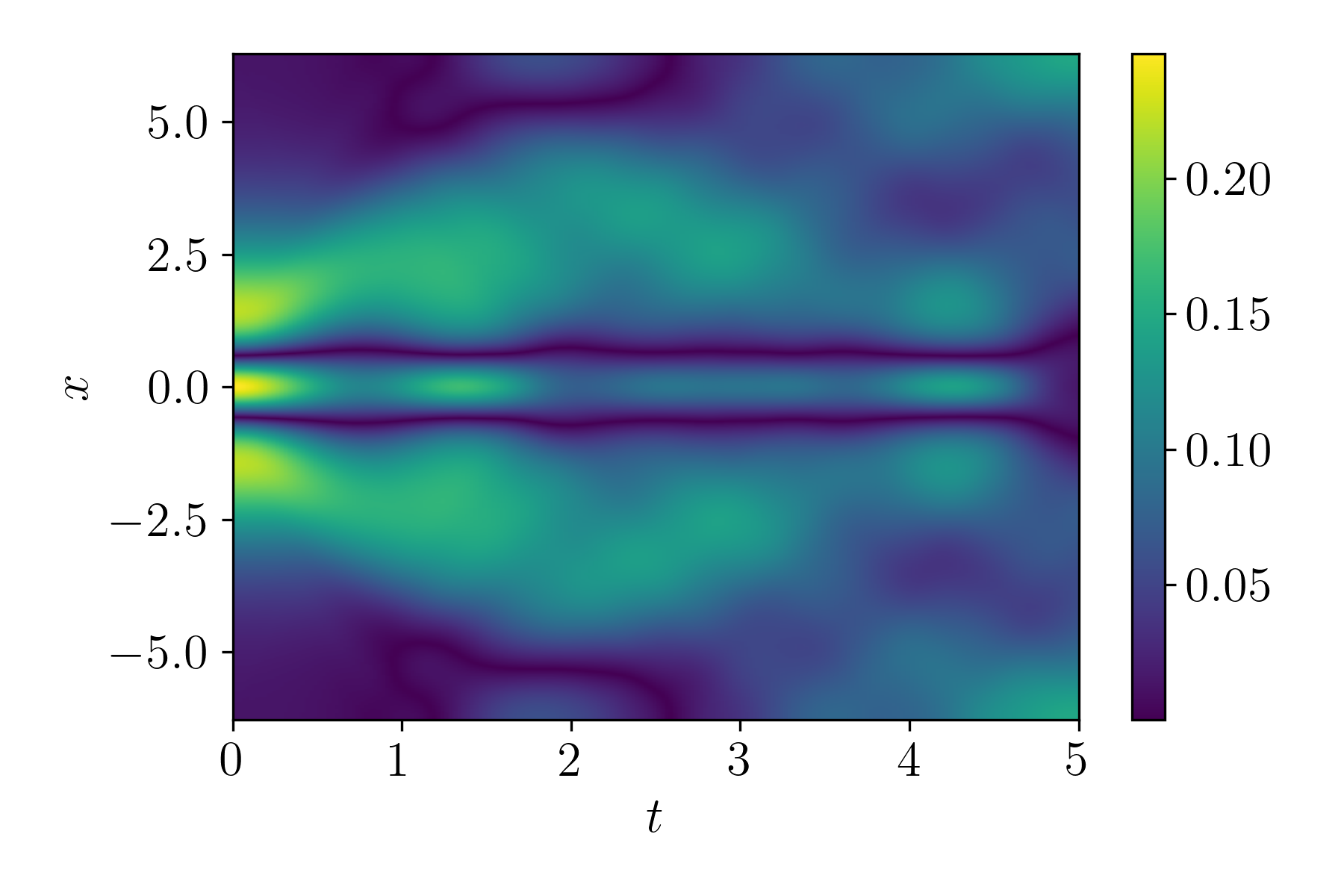} 
		\caption{Absolute error PSD}
	\end{subfigure}
	\begin{subfigure}{0.328\textwidth}
		\includegraphics[width=1\linewidth]{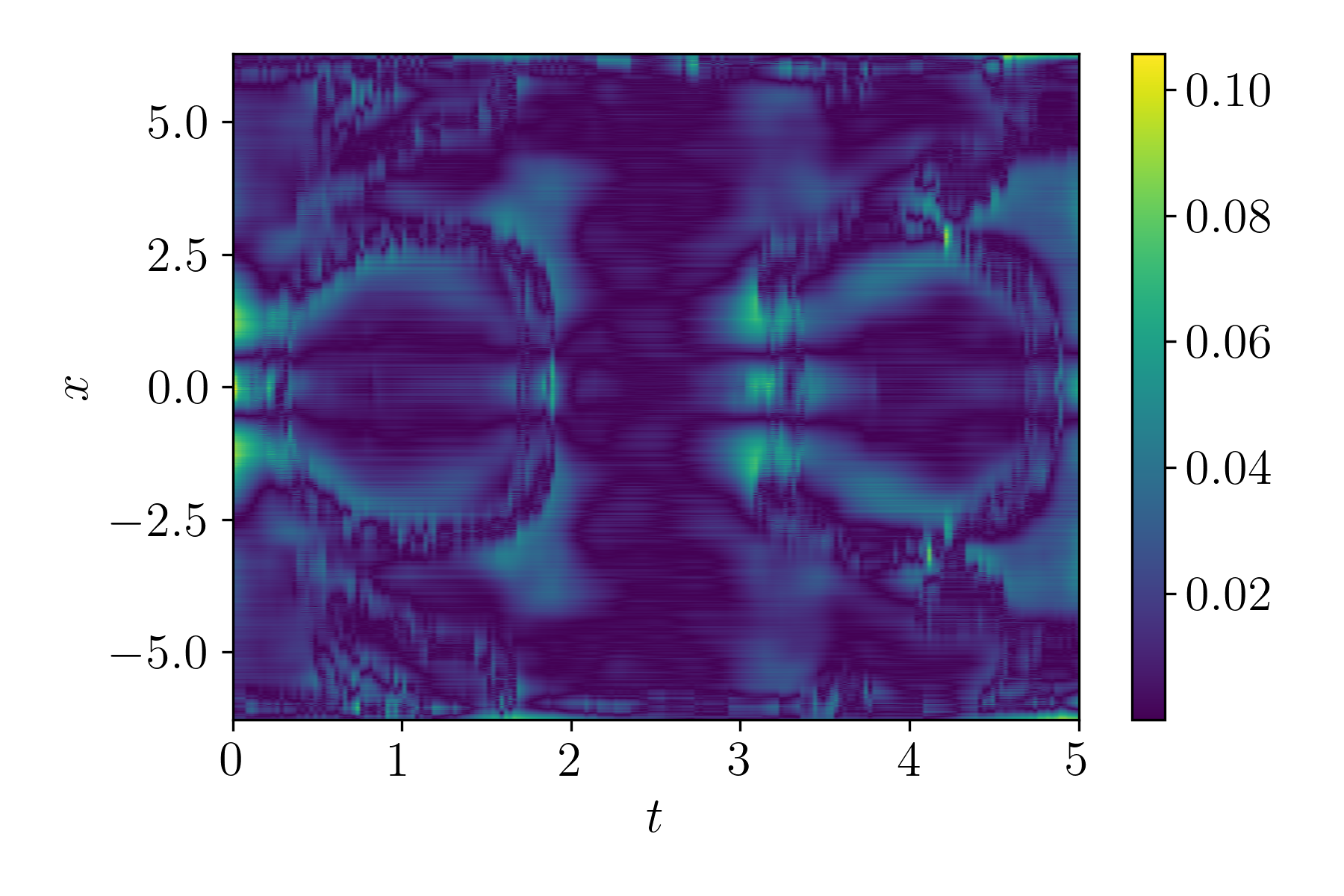}
		\caption{Absolute error CNN}
	\end{subfigure}
	\caption{NLS equation: Plot (a) shows the ground truth solution for state $p$. Plot (b) demonstrates the absolute pointwise error between the ground truth solution and the reconstructed solution obtained via PSD. Plot (c) shows the absolute pointwise error between the ground truth solution and the reconstructed solution obtained via the SympCAE.}
	\label{fig:NLS-v}
\end{figure}

\begin{figure}[tb]
	\centering
	\begin{subfigure}{0.328\textwidth}
		\includegraphics[width=1\linewidth]{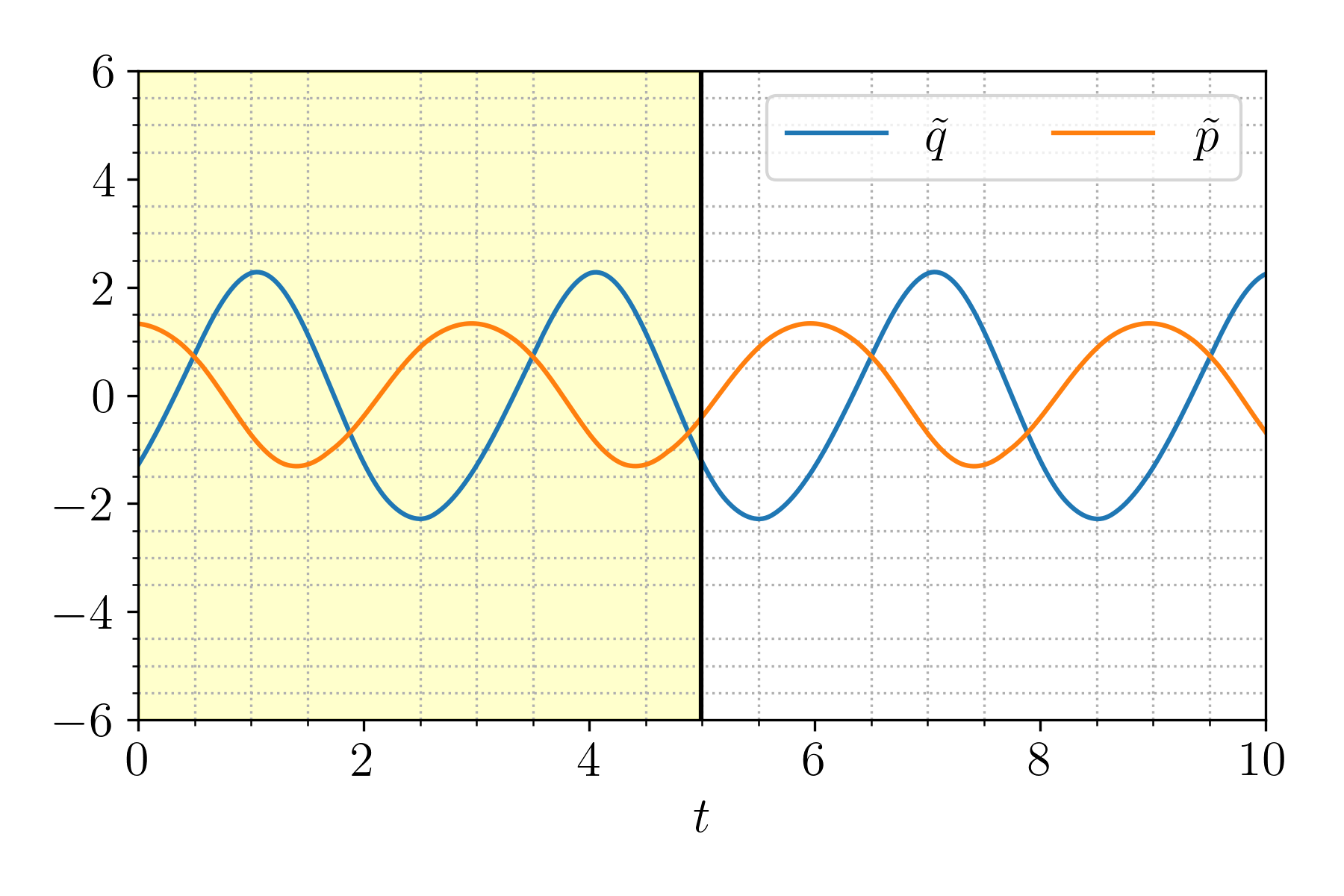}
		\caption{Latent trajectories}
		\label{fig:NLS-SympNet_latent}	
	\end{subfigure}
	\begin{subfigure}{0.328\textwidth}
		\includegraphics[width=1\linewidth]{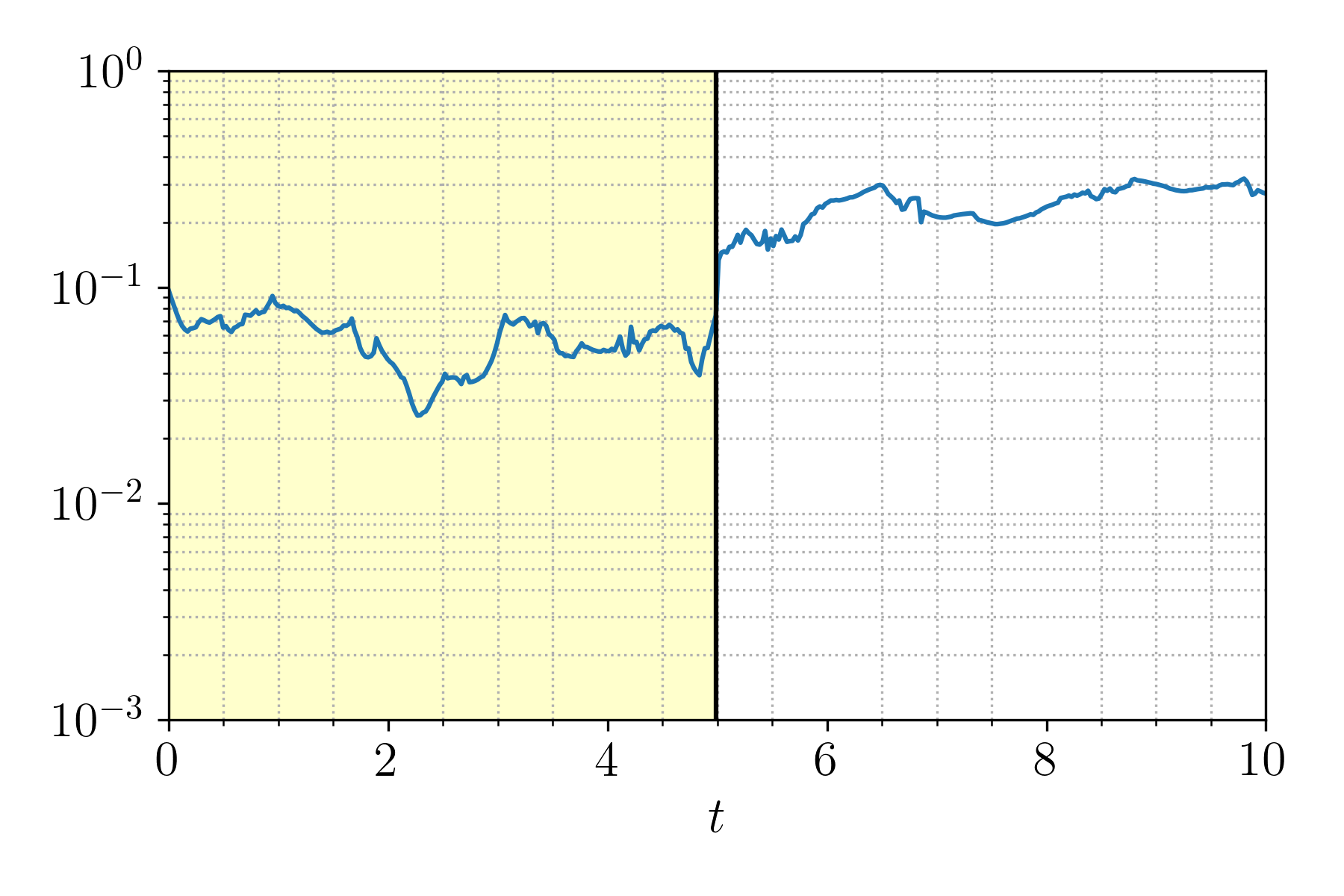}
		\caption{Relative reconstruction error}
		\label{fig:NLS-SympNet_err}	
	\end{subfigure}
	\caption{A SympNet approach for learning the latent dynamics of the NLS equation obtained by the SympCAE encoder. Plot (a) shows the latent trajectories obtained using the SympNet, and Plot (b) shows the relative reconstruction error over the time domain \eqref{eqn:rel_time_err} between the ground truth solution and the reconstructed solution obtained via decoder of the SympCAE. The vertical black line separates the training and testing intervals.}
	\label{fig:NLS-SympNet}
\end{figure}

\subsection{Sine-Gordon equation}
\label{subsec:SG}
In our last example, we consider the two-dimensional sine-Gordon equation given as follows:
\begin{equation}\label{eqn:sine-gordon}
	\begin{aligned}
		& u_{tt}(x,y,t) =  u_{xx}(x,y,t) + u_{yy}(x,y,t) - \sin(u(x,y,t)), \\
		& u(x,y,0) = u^0(x,y), \\
		& u_t(x,y,0) = u_t^0(x,y), \quad x,y \in \Omega.
	\end{aligned}
\end{equation}
Following \cite{sharma2025nonlinear}, we set the boundary condition to be periodic as in previous examples, with the spatial domain $\Omega=(-7,7)\times(-7,7)$ and the initial conditions
\begin{equation*}
	u^0(x,y)=4\tan^{-1}\left(\exp\left(3-\sqrt{x^2+y^2}\right)\right), \quad u_t^0(x,y)=0 .
\end{equation*}

We discretize the spatial domain using a structure-preserving central difference approach with $N_x=N_y=100$ equally spaced grid points in both the $x$ and $y$ directions, which yields a spatially discrete state $\ve{u} \in \mathbb{R}^{10000}$. We obtain the Hamiltonian form of the SG equation by introducing $q(x,y,t)=u(x,y,t)$ and $p(x,y,t)=u_t(x,y,t)$, which leads to the following conservative form:
\begin{equation*}
	\begin{aligned}
		&q_t(x,y,t) = p(x,y,t),\\
		&p_t(x,y,t) = q_{xx}(x,y,t) + q_{yy}(x,y,t) - \sin(q(x,y,t)).
	\end{aligned}
\end{equation*}
Moreover, after spatial discretization, the spatially discrete SG equation is given as follows:
\begin{equation}\label{eqn:SG-semi-discrete}
	\begin{aligned}
		&\ve{q}_t = \ve{p},\\
		&\ve{p}_t = \ma{D}_{xx}\ve{q} + \ma{D}_{yy}\ve{q} - \sin(\ve{q}),
	\end{aligned}
\end{equation} 
where $\ma{D}_{xx}, \ma{D}_{yy} \in \mathbb{R}^{10000\times 10000}$ are the central difference approximations of the partial derivatives $\partial_{xx}$ and $\partial_{yy}$, respectively. The semi-discrete equations \eqref{eqn:SG-semi-discrete} conserve the following spatially discrete Hamiltonian:
\begin{equation*}
H=\frac{1}{2}\left(\ve{p}^T\ve{p}-\ve{q}^T\ma{D}\ve{q}\right)+\sum_{i=1}^{N}(1-cos(\ve{q}_i)),
\end{equation*}
where $\ma{D}=\ma{D}_{xx}+\ma{D}_{yy}$.
To preserve the symplectic structure in time, we discretize the semi-discrete SG equation \eqref{eqn:SG-semi-discrete} using the implicit midpoint rule, as in the previous example. Moreover, we constructed the training set by sampling $N_t=100$ time points over the time domain $[0,20]$. To train the 2D SympCAE, we reshaped the training set as $\te{X}\in \mathbb{R}^{N_t\times2\times N_x\times N_y}$. 

In~\Cref{tab:SG}, we compare the low-rank solutions obtained via the 2D SympCAE and the PSD autoencoder for the first three latent dimensions. The table indicates that the proposed approach yields better accuracy in terms of reconstruction error \eqref{eqn:rel_err}. Furthermore, the table shows a very slow decay in relative error for the low-rank PSD approximation, which is a common characteristic problem in Hamiltonian dynamics. Lastly, we show the ground truth and approximate low-rank solutions for latent dimension $r=3$ at time $t=20$ obtained via PSD and symplectic CNN in \Cref{fig:SG-u,fig:SG-v}, demonstrating that the proposed autoencoder is capable of capturing the dynamics of the SG equation with a very small latent dimension.

\begin{table}[tb]
	\caption{SG equation: The table shows the performance of the autoencoder obtained by using PSD and the SympCAE in capturing the dynamics of the ground truth model in terms of relative reconstruction error \eqref{eqn:rel_err} at latent dimensions $r = 1, 2, 3$. The best result for each latent dimension is highlighted in bold.}
	\label{tab:SG}
	\centering
	\renewcommand{\arraystretch}{1.2}
	\begin{tabular}{lll}
		$r$  &  $\bm{\varepsilon}_{\text{PSD}}$ & $\bm{\varepsilon}_{\text{SympCAE}}$ \\ \hline
		$1$ & $3.74\cdot 10^{-1}$  & $\mathbf{1.35\cdot 10^{-1}}$   \\ 
		$2$ & $3.07\cdot 10^{-1}$  & $\mathbf{5.15\cdot 10^{-2}}$    \\ 
		$3$ & $2.55\cdot 10^{-1}$  & $\mathbf{7.86\cdot 10^{-2}}$ \\ \hline  
	\end{tabular}
\end{table}

\begin{figure}[tb]
	\centering
	\begin{subfigure}{0.328\textwidth}
		\includegraphics[width=1\linewidth]{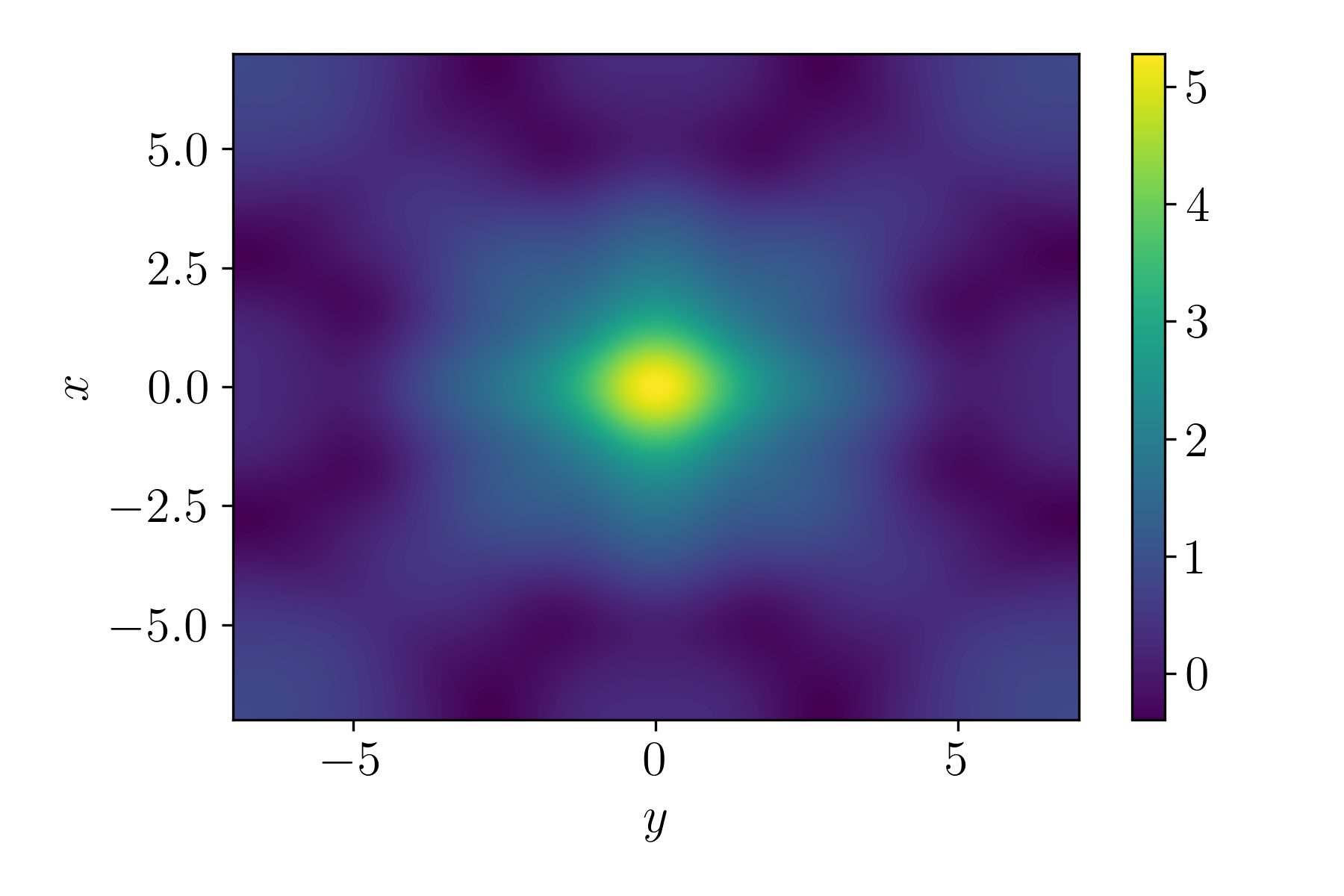}
		\caption{Ground truth}	
	\end{subfigure}
	\begin{subfigure}{0.328\textwidth}
		\includegraphics[width=1\linewidth]{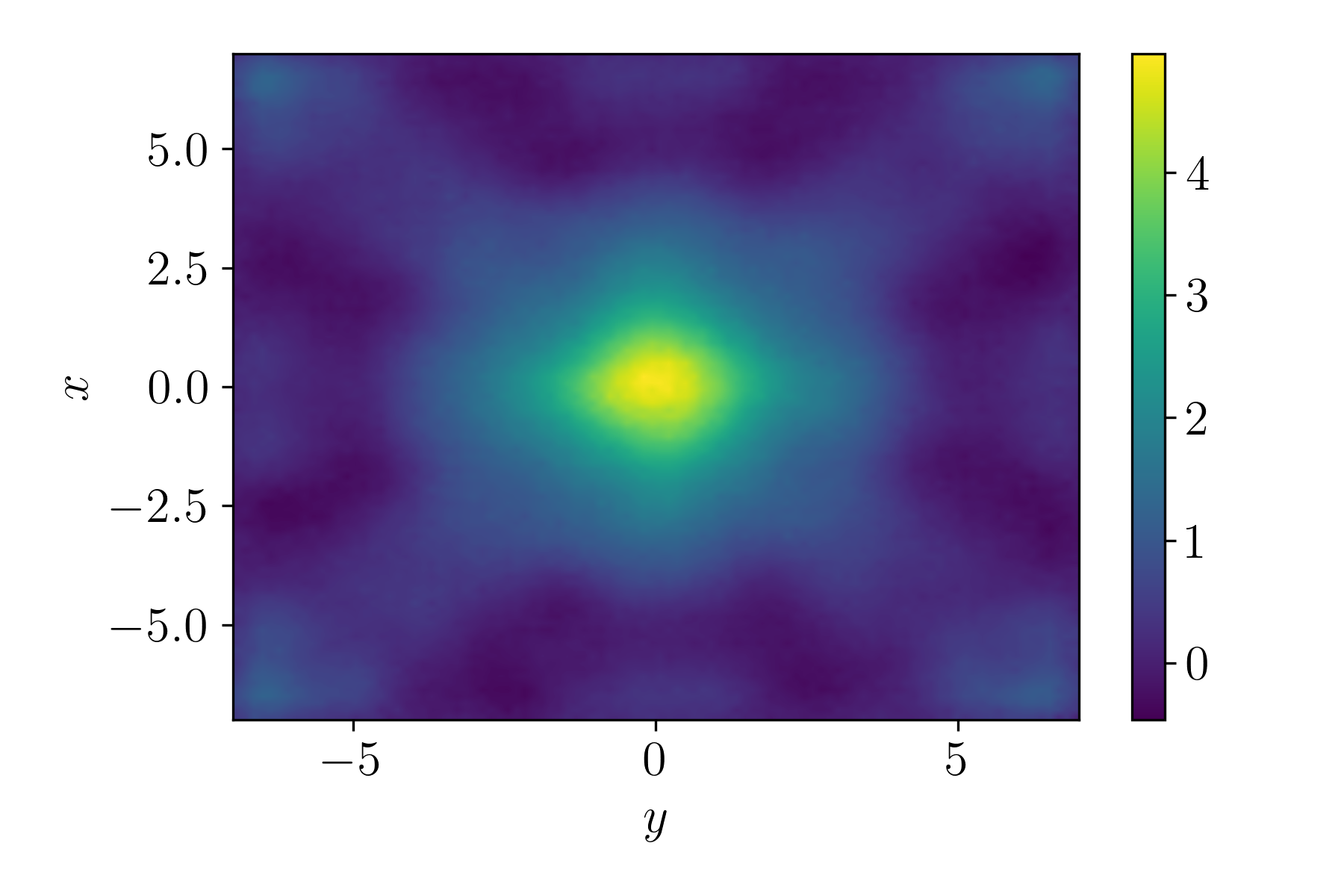} 
		\caption{SympCAE}
	\end{subfigure}
	\begin{subfigure}{0.328\textwidth}
		\includegraphics[width=1\linewidth]{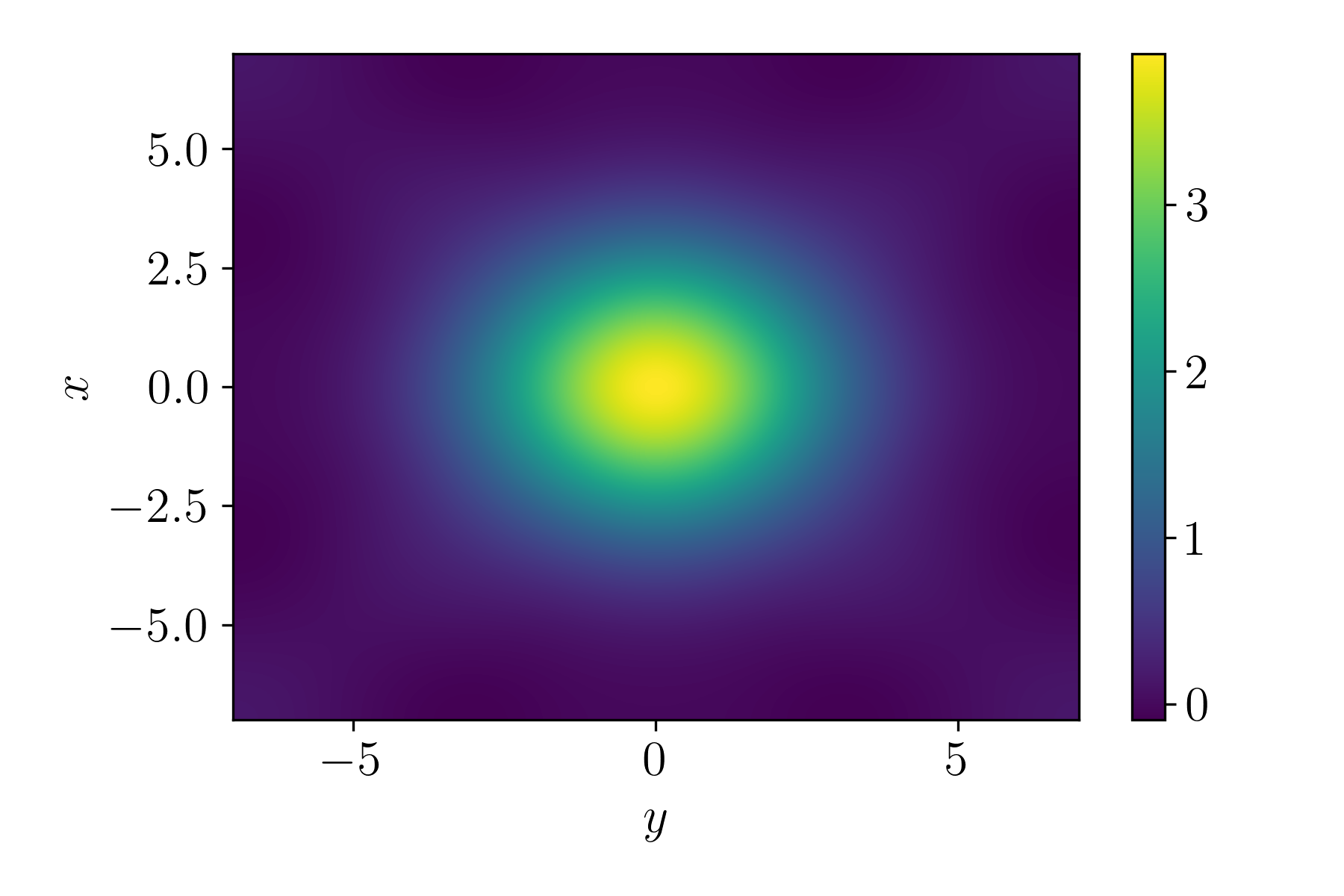}
		\caption{PSD}
	\end{subfigure}
	\caption{SG equation: Comparison of the solutions of $ q $ at final time $ t = 20 $ for latent time dimension $ r = 3 $. (a) Ground truth, (b) SympCAE, (c) PSD. }
	\label{fig:SG-u}
\end{figure}

\begin{figure}[tb]
	\centering
	\begin{subfigure}{0.328\textwidth}
		\includegraphics[width=1\linewidth]{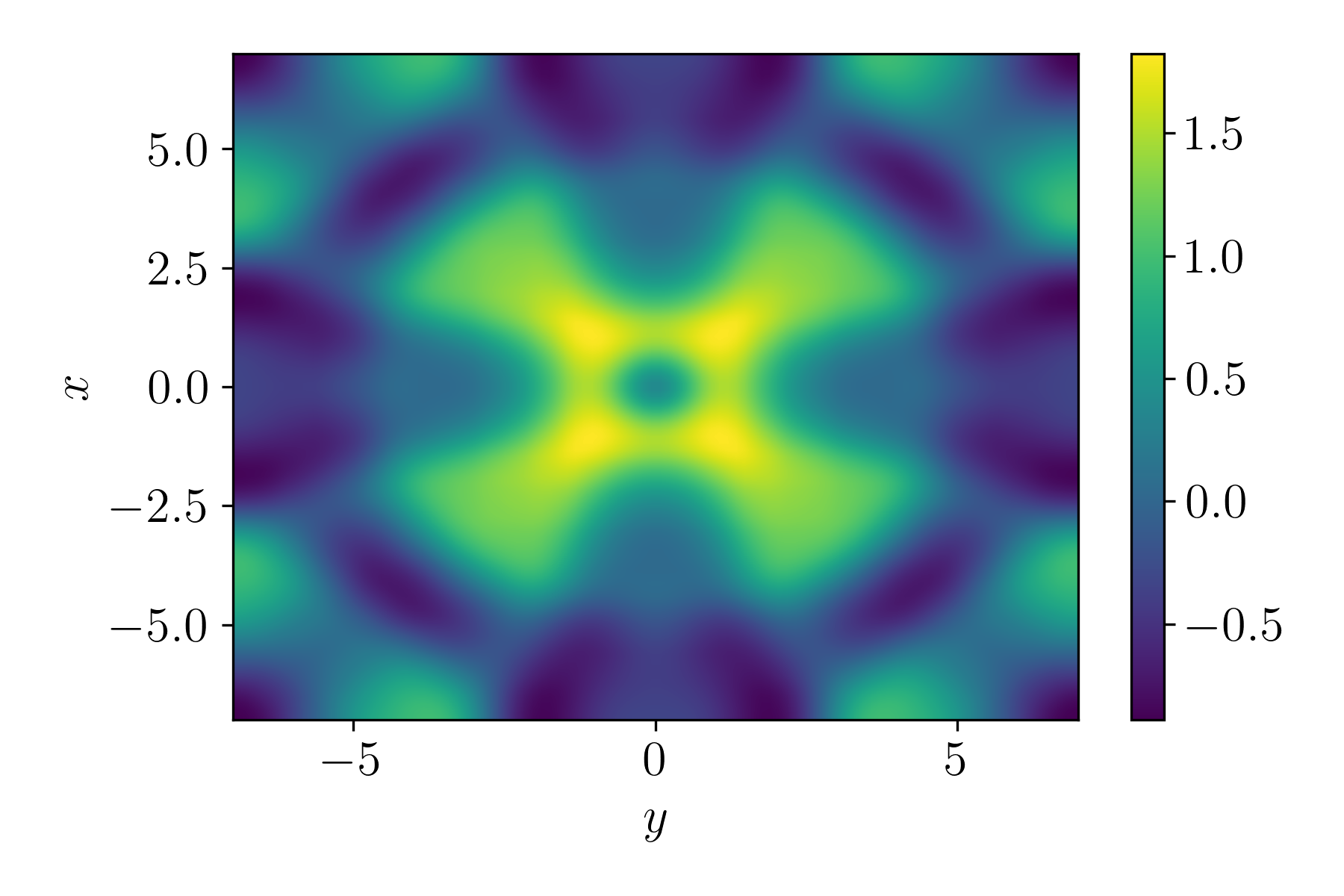}
		\caption{Ground truth}	
	\end{subfigure}
	\begin{subfigure}{0.328\textwidth}
		\includegraphics[width=1\linewidth]{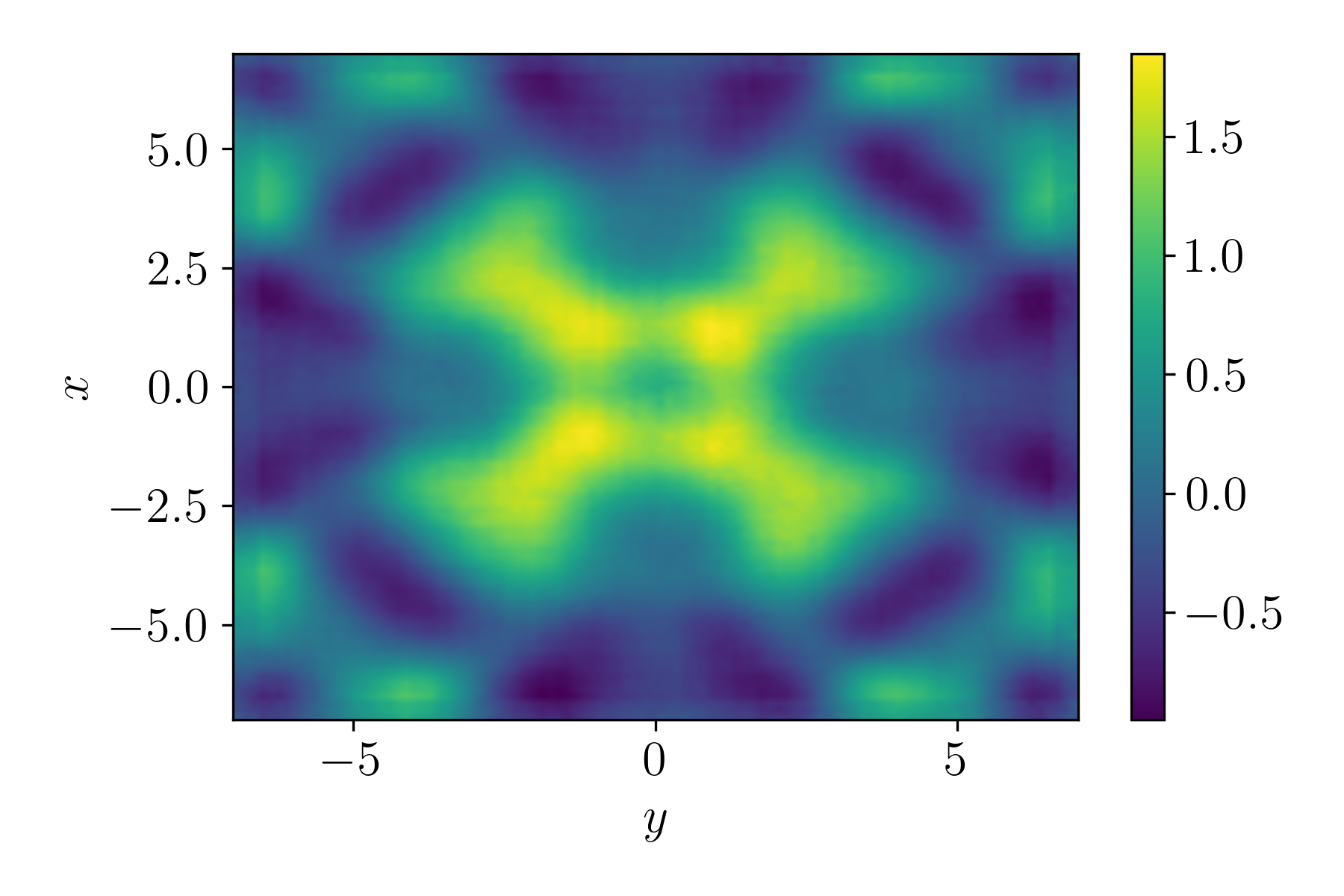} 
		\caption{SympCAE}
	\end{subfigure}
	\begin{subfigure}{0.328\textwidth}
		\includegraphics[width=1\linewidth]{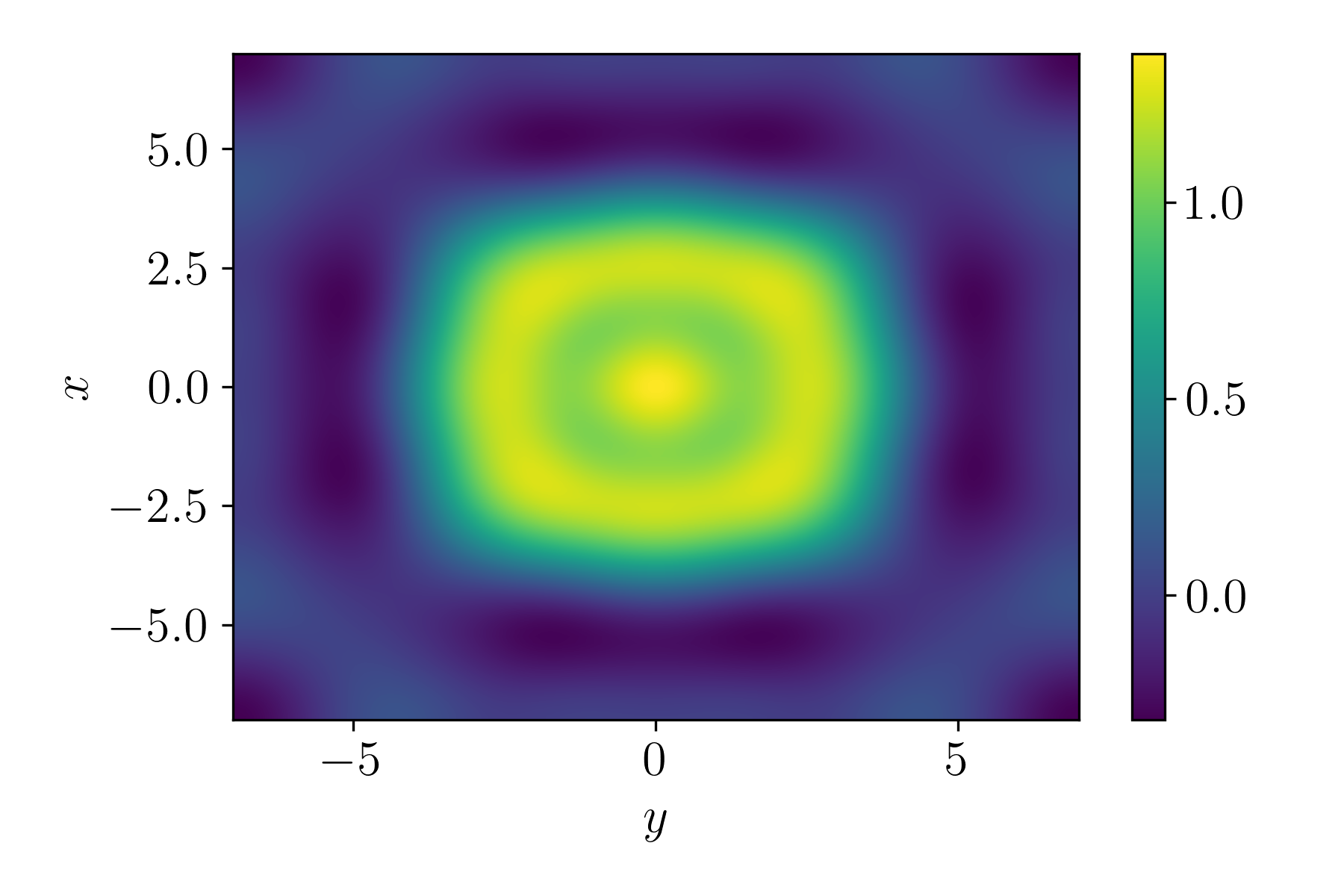}
		\caption{PSD}
	\end{subfigure}
	\caption{SG equation: Comparison of the solutions of $ p $ at final time $ t = 20 $ for latent time dimension $ r = 3 $. (a) Ground truth, (b) SympCAE, (c) PSD. }
	\label{fig:SG-v}
\end{figure}

\section{Conclusions}\label{sec:conc}
We have proposed a nonlinear symplectic convolutional autoencoder by utilizing symplectic neural networks (SympNet) and proper symplectic decomposition (PSD). The main application of our method is the transformation of Hamiltonian systems into an equivalent, low-dimensional, Hamiltonian form. Nevertheless, the method is not limited to Hamiltonian systems; many physical phenomena that require volume-preserving properties can be modeled via the proposed method. We compared our method with PSD in terms of data compression and reconstruction. We demonstrated the generality of the method by combining it with a SympNet after dimensionality reduction.

Some future work motivated by this study includes testing the symplectic convolutional autoencoder with noisy data and extending the framework to a 3D symplectic convolutional autoencoder case.

\section*{Acknowledgment} 
Süleyman Y\i ld\i z and Konrad Janik would like to thank Jens Saak for fruitful discussions.
\subsection*{Funding Statement}
Süleyman Y\i ld\i z and Peter Benner are partially supported by the German Research Foundation (DFG) Research Training Group 2297 ``MathCoRe'', Magdeburg.

\section*{Data Availability Statement}
Data and relevant code for this research work have been archived within the Zenodo repository \cite{YilJB25software}.

\section*{Appendix}
\subsection*{Models}
In this subsection, we explain the architecture of the models used in \Cref{sec:num}, how we trained them and which choices of hyperparameters we made.\\
The general architecture of our SympCAEs is shown in \Cref{fig:CNN}. The convolutional block does not only consist of convolutional layers, but of activation layers as well. The architectures of the encoder and decoder in the SympCAE autoencoder exhibit mirror symmetry, possessing a structure akin to a symplectic inverse. Consequently, we present the exact number of convolutional and activation layers used in the encoder for both the 1D and 2D SympCAE in \Cref{tab:CNN}. Note that the composition of two activation layers without a convolutional layer in-between does not violate \Cref{def:enc}, since the in-between convolutional layer is just chosen to be the identity. The SympCAEs $\psi \in \Psi_{\text{AE}}$ are trained with a standard autoencoder loss with $L_2$-regularization
\begin{align}
	\label{eqn:CNNloss}
	\mathcal{L} = \norm{\psi(X)-X}_2^2 + \lambda_2 \sum_{\theta \in \Theta} \norm{\theta}_2,
\end{align}
where $X$ is the snapshot matrix from the training set and $\Theta$ is the set of trainable parameters of the SympCAE $\psi$. Hyperparameters and other training details for the SympCAE can be found in \Cref{tab:CNN} as well.\\
\begin{figure}[tb]
	\centering
	\includegraphics[width=\textwidth]{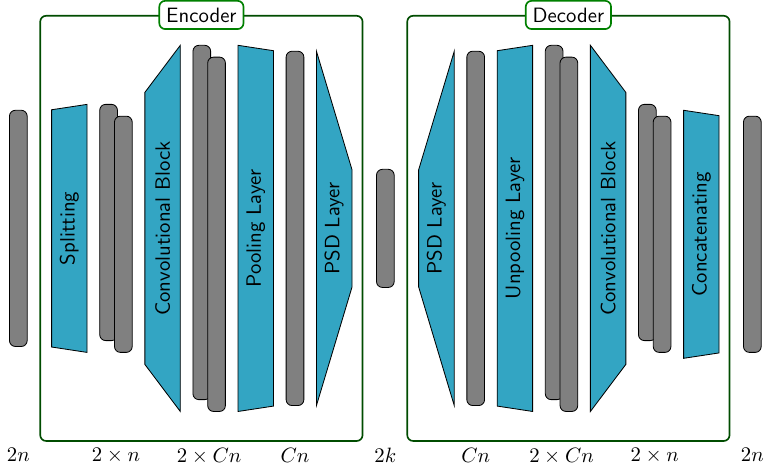}
	\caption{Schematic description of the 1D symplectic autoencoder architecture used for the wave equation and NLS examples. $2n$ is the full state dimension, while $2k$ describes the size of the latent space. $C=\Pi_{i}\frac{C_{\text{out}}^{(i)}}{C_\text{in}^{(i)}}$ is the product of the quotients of the output and input channels of all the convolutional layers}
	\label{fig:CNN}
\end{figure}
\begin{table}[tb]
	\centering
	\renewcommand{\arraystretch}{1.2}
	\begin{tabular}{ccc}
		\textbf{setting}  &  1D SympCAE & 2D SympCAE \\ \hline
		convolutional layers & 12 & 4 \\ 
		convolutional channels & [2, 4, 4, 4, 2, 4, 4, 4, 2, 4, 4, 4] & [2, 4, 4, 8]\\ 
		kernel size $l$ & 21 & 7 $ \times$ 7 \\ 
		stride & 1 & 1 \\ 
		padding & "zero" padding of size $(l-1)/2$ & "zero" padding of size $(l-1)/2$ \\ 
		activation layers & 2 after every 4 convolutional layers & 2 after convolutional layers \\ 
		batch size & $N_t$/$2$  & $N_t$/$2$   \\ 
		learning rate & $10^{-3}$ & $10^{-3}$ \\
		epochs & $6000$ & $6000$  \\ 
		$\lambda_2$ from \cref{eqn:CNNloss} & $10^{-5}$ & $10^{-5}$ \\ \hline
        
	\end{tabular}
	\caption{CNN architecture and hyperparameters for the encoder of the 1D and 2D SympCAE}
	\label{tab:CNN}
\end{table}
The SympNets $\phi \in \Psi_\text{LA}$ from \Cref{subsec:wave,subsec:NLS} are optimized with the following loss function
\begin{align*}
	\mathcal{L}= \norm{\phi(x) - x'}_2^2,
\end{align*}
where $x=\psi_{\text{Enc}}(X)$ is the latent representation of $X$ obtained by the encoder part $\psi_{\text{Enc}}\in \Psi_{\text{Enc}}$ of the SympCAE $\psi$ and $x'=\psi_{\text{Enc}}(X')$. $X'$ is the training set $X$ shifted in time by one step. The used architecture, hyperparameters and training details can be found in \Cref{tab:SympNet}. To train all the symplectic neural networks, we utilize \texttt{PyTorch} \cite{PasGMetal19} with the Adam algorithm \cite{KinB17}, in combination with a \texttt{StepLR} scheduler, using the \texttt{PyTorch Lightning} \cite{lightning} module. All the models are trained on a machine with an \texttt{Intel\textsuperscript{\tiny\textcopyright} Core\textsuperscript{\tiny TM} i5-12600K} CPU and \texttt{NVIDIA RTX\textsuperscript{\tiny TM} A4000(16GB)} GPU. 

\begin{table}[tb]
	\centering
	\renewcommand{\arraystretch}{1.2}
	\begin{tabular}{cc}
		\textbf{setting}  &  SympNet \\ \hline
		layers & 8 \\ 
		sublayers & 1 \\ 
		optimizer & Adam (weight decay $10^{-6}$) \\ 
		learning rate & $10^{-1}$ \\ 
		learning rate scheduler & StepLR \\ \hline
	\end{tabular}
	\caption{SympNet architecture and hyperparameters for wave equation and NLS example}
		\label{tab:SympNet}
\end{table}
\clearpage
\addcontentsline{toc}{section}{References}
\bibliographystyle{ieeetr}
\bibliography{exampleref}

\begin{thebibliography}{10}

\bibitem{hinton2012deep}
G.~Hinton, L.~Deng, D.~Yu, G.~E. Dahl, A.-r. Mohamed, N.~Jaitly, A.~Senior,
  V.~Vanhoucke, P.~Nguyen, T.~N. Sainath, {\em et~al.}, ``Deep neural networks
  for acoustic modeling in speech recognition: The shared views of four
  research groups,'' {\em IEEE Signal Processing Magazine}, vol.~29, no.~6,
  pp.~82--97, 2012.
\newblock \href{https://doi.org/10.1109/MSP.2012.2205597 }{doi:
  10.1109/MSP.2012.2205597}.

\bibitem{hannun2014deep}
A.~Hannun, C.~Case, J.~Casper, B.~Catanzaro, G.~Diamos, E.~Elsen, R.~Prenger,
  S.~Satheesh, S.~Sengupta, A.~Coates, {\em et~al.}, ``Deep speech: {Scaling}
  up end-to-end speech recognition,'' e-print 1412.5567, arXiv, 2014.
\newblock \href{https://doi.org/10.48550/arXiv.1412.5567 }{doi:
  10.48550/arXiv.1412.5567}.

\bibitem{lecun1989backpropagation}
Y.~LeCun, B.~Boser, J.~S. Denker, D.~Henderson, R.~E. Howard, W.~Hubbard, and
  L.~D. Jackel, ``Backpropagation applied to handwritten zip code
  recognition,'' {\em Neural Computation}, vol.~1, no.~4, pp.~541--551, 1989.
\newblock \href{https://doi.org/10.1162/neco.1989.1.4.541 }{doi:
  10.1162/neco.1989.1.4.541}.

\bibitem{krizhevsky2017imagenet}
A.~Krizhevsky, I.~Sutskever, and G.~E. Hinton, ``Imagenet classification with
  deep convolutional neural networks,'' {\em Communications of the ACM},
  vol.~60, no.~6, pp.~84--90, 2017.
\newblock \href{https://doi.org/10.1145/3065386 }{doi: 10.1145/3065386}.

\bibitem{simonyan2014very}
K.~Simonyan and A.~Zisserman, ``Very deep convolutional networks for
  large-scale image recognition,'' e-print 1409.1556, arXiv, 2014.
\newblock \href{https://doi.org/10.48550/arXiv.1409.1556 }{doi:
  10.48550/arXiv.1409.1556}.

\bibitem{collobert2011natural}
R.~Collobert, J.~Weston, L.~Bottou, M.~Karlen, K.~Kavukcuoglu, and P.~Kuksa,
  ``Natural language processing (almost) from scratch,'' {\em Journal of
  Machine Learning Research}, vol.~12, pp.~2493--2537, 2011.
\newblock \href{https://doi.org/10.5555/1953048.2078186 }{doi:
  10.5555/1953048.2078186}.

\bibitem{mikolov2013distributed}
T.~Mikolov, I.~Sutskever, K.~Chen, G.~S. Corrado, and J.~Dean, ``Distributed
  representations of words and phrases and their compositionality,'' {\em
  Advances in Neural Information Processing Systems}, vol.~26, 2013.

\bibitem{cho2014learning}
K.~Cho, B.~Van~Merri{\"e}nboer, C.~Gulcehre, D.~Bahdanau, F.~Bougares,
  H.~Schwenk, and Y.~Bengio, ``Learning phrase representations using {RNN}
  encoder-decoder for statistical machine translation,'' e-print 1406.1078,
  arXiv, 2014.
\newblock \href{https://doi.org/10.48550/arXiv.1406.1078 }{doi:
  10.48550/arXiv.1406.1078}.

\bibitem{szegedy2015going}
C.~Szegedy, W.~Liu, Y.~Jia, P.~Sermanet, S.~Reed, D.~Anguelov, D.~Erhan,
  V.~Vanhoucke, and A.~Rabinovich, ``Going deeper with convolutions,'' in {\em
  Proceedings of the IEEE conference on computer vision and pattern
  recognition}, pp.~1--9, 2015.
\newblock \href{https://doi.org/10.1109/CVPR.2015.7298594 }{doi:
  10.1109/CVPR.2015.7298594}.

\bibitem{bengio2013representation}
Y.~Bengio, A.~Courville, and P.~Vincent, ``Representation learning: {A} review
  and new perspectives,'' {\em IEEE Transactions on Pattern Analysis and
  Machine Intelligence}, vol.~35, no.~8, pp.~1798--1828, 2013.
\newblock \href{https://doi.org/10.1109/TPAMI.2013.50 }{doi:
  10.1109/TPAMI.2013.50}.

\bibitem{hinton2006reducing}
G.~E. Hinton and R.~R. Salakhutdinov, ``Reducing the dimensionality of data
  with neural networks,'' {\em Science}, vol.~313, no.~5786, pp.~504--507,
  2006.
\newblock \href{https://doi.org/10.1126/science.1127647 }{doi:
  10.1126/science.1127647}.

\bibitem{morBenGQetal21}
P.~Benner, S.~Grivet-Talocia, A.~Quarteroni, G.~Rozza, W.~H.~A. Schilders, and
  L.~M. Silveira, eds., {\em {Model Order Reduction. Volume 1: System- and
  Data-Driven Methods and Algorithms}}.
\newblock Berlin: De~Gruyter, 2021.
\newblock \href{https://doi.org/10.1515/9783110498967 }{doi:
  10.1515/9783110498967}.

\bibitem{morBenGQetal21a}
P.~Benner, S.~Grivet-Talocia, A.~Quarteroni, G.~Rozza, W.~H.~A. Schilders, and
  L.~M. Silveira, eds., {\em {Model Order Reduction. Volume 2: Snapshot-Based
  Methods and Algorithms}}.
\newblock Berlin: De~Gruyter, 2021.
\newblock \href{https://doi.org/10.1515/9783110671490 }{doi:
  10.1515/9783110671490}.

\bibitem{morBenGQetal21b}
P.~Benner, S.~Grivet-Talocia, A.~Quarteroni, G.~Rozza, W.~H.~A. Schilders, and
  L.~M. Silveira, eds., {\em {Model Order Reduction. Volume 3: Applications}}.
\newblock Berlin: De~Gruyter, 2021.
\newblock \href{https://doi.org/10.1515/9783110499001 }{doi:
  10.1515/9783110499001}.

\bibitem{hou2022dimensionality}
C.~K.~J. Hou and K.~Behdinan, ``Dimensionality reduction in surrogate modeling:
  {A} review of combined methods,'' {\em Data Science and Engineering}, vol.~7,
  no.~4, pp.~402--427, 2022.
\newblock \href{https://doi.org/10.1007/s41019-022-00193-5 }{doi:
  10.1007/s41019-022-00193-5}.

\bibitem{wold1987principal}
S.~Wold, K.~Esbensen, and P.~Geladi, ``Principal component analysis,'' {\em
  Chemometrics and Intelligent Laboratory Systems}, vol.~2, no.~1-3,
  pp.~37--52, 1987.
\newblock \href{https://doi.org/10.1016/0169-7439(87)80084-9 }{doi:
  10.1016/0169-7439(87)80084-9}.

\bibitem{berkooz1993proper}
G.~Berkooz, P.~Holmes, and J.~L. Lumley, ``The proper orthogonal decomposition
  in the analysis of turbulent flows,'' {\em Annual Review of Fluid Mechanics},
  vol.~25, no.~1, pp.~539--575, 1993.
\newblock \href{https://doi.org/10.1146/annurev.fl.25.010193.002543}{doi:
  10.1146/annurev.fl.25.010193.002543}.

\bibitem{hairer2006structure}
E.~Hairer, G.~Wanner, and C.~Lubich, {\em Geometric Numerical Integration}.
\newblock Berlin, Heidelberg: Springer, 2~ed., 2006.
\newblock \href{https://doi.org/10.1007/3-540-30666-8}{doi:
  10.1007/3-540-30666-8}.

\bibitem{PenMoh16}
L.~Peng and K.~Mohseni, ``Symplectic model reduction of {Hamiltonian}
  systems,'' {\em SIAM Journal on Scientific Computing}, vol.~38, no.~1,
  pp.~A1--A27, 2016.
\newblock \href{https://doi.org/10.1137/140978922 }{doi: 10.1137/140978922}.

\bibitem{greydanus2019hamiltonian}
S.~Greydanus, M.~Dzamba, and J.~Yosinski, ``Hamiltonian neural networks,'' in
  {\em Proceedings of the 33rd International Conference on Neural Information
  Processing Systems}, vol.~32, pp.~15379--15389, Curran Associates Inc., 2019.

\bibitem{JinZZ20}
P.~Jin, Z.~Zhang, A.Zhu, Y.Tang, and G.~E. Karniadakis, ``{SympNets:} intrinsic
  structure-preserving symplectic networks for identifying {H}amiltonian
  systems,'' {\em Neural Networks}, vol.~132, pp.~166--179, 2020.
\newblock \href{https://doi.org/10.1016/j.neunet.2020.08.017 }{doi:
  10.1016/j.neunet.2020.08.017}.

\bibitem{JanB25}
K.~Janik and P.~Benner, ``{Time-adaptive SympNets for separable Hamiltonian
  systems},'' e-print 2509.16026, arXiv, 2025.
\newblock
  \href{https://doi.org/10.48550/arXiv.2509.16026}{doi:10.48550/arXiv.2509.16026}.

\bibitem{galioto2020bayesian}
N.~Galioto and A.~A. Gorodetsky, ``Bayesian identification of {Hamiltonian}
  dynamics from symplectic data,'' in {\em 2020 59th IEEE Conference on
  Decision and Control (CDC)}, pp.~1190--1195, IEEE, 2020.
\newblock \href{https://doi.org/10.1109/CDC42340.2020.9303852}{doi:
  10.1109/CDC42340.2020.9303852}.

\bibitem{bertalan2019learning}
T.~Bertalan, F.~Dietrich, I.~Mezi{\'c}, and I.~Kevrekidis, ``On learning
  {Hamiltonian} systems from data,'' {\em Chaos}, vol.~29, Dec. 2019.
\newblock 121107, \href{https://doi.org/10.1063/1.5128231}{doi:
  10.1063/1.5128231}.

\bibitem{sharma2022hamiltonian}
H.~Sharma, Z.~Wang, and B.~Kramer, ``Hamiltonian operator inference:
  {Physics}-preserving learning of reduced-order models for canonical
  {Hamiltonian} systems,'' {\em Physica D: Nonlinear Phenomena}, vol.~431,
  2022.
\newblock 133122, \href{https://doi.org/10.1016/j.physd.2021.133122}{doi:
  10.1016/j.physd.2021.133122}.

\bibitem{buchfink2020psd}
P.~Buchfink, B.~Haasdonk, and S.~Rave, ``{PSD-Greedy} basis generation for
  structure-preserving model order reduction of {Hamiltonian} systems,'' in
  {\em Proceedings of the Conference Algoritmy}, pp.~151--160, 2020.

\bibitem{afkham2017structure}
B.~M. Afkham and J.~S. Hesthaven, ``Structure preserving model reduction of
  parametric {Hamiltonian} systems,'' {\em SIAM Journal on Scientific
  Computing}, vol.~39, no.~6, pp.~A2616--A2644, 2017.
\newblock \href{https://doi.org/10.1137/17M1111991}{doi: 10.1137/17M1111991}.

\bibitem{bajars2025structure}
J.~Baj{\=a}rs and D.~Kalv{\=a}ns, ``Structure-preserving dimensionality
  reduction for learning {Hamiltonian} dynamics,'' {\em Journal of
  Computational Physics}, vol.~528, 2025.
\newblock 113832, \href{https://doi.org/10.1016/j.jcp.2025.113832}{doi:
  10.1016/j.jcp.2025.113832}.

\bibitem{BraKra23}
B.~Brantner and M.~Kraus, ``Symplectic autoencoders for model reduction of
  {Hamiltonian} systems,'' e-print 2312.10004, arXiv, 2023.
\newblock \href{https://doi.org/10.48550/arXiv.2312.10004 }{doi:
  10.48550/arXiv.2312.10004}.

\bibitem{buchfink2023symplectic}
P.~Buchfink, S.~Glas, and B.~Haasdonk, ``Symplectic model reduction of
  {Hamiltonian} systems on nonlinear manifolds and approximation with weakly
  symplectic autoencoder,'' {\em SIAM Journal on Scientific Computing},
  vol.~45, no.~2, pp.~A289--A311, 2023.
\newblock \href{https://doi.org/10.1137/21M1466657}{doi: 10.1137/21M1466657}.

\bibitem{yildiz2024data}
S.~Y{\i}ld{\i}z, P.~Goyal, T.~Bendokat, and P.~Benner, ``Data-driven
  identification of quadratic representations for nonlinear {Hamiltonian}
  systems using weakly symplectic liftings,'' {\em Journal of Machine Learning
  for Modeling and Computing}, vol.~5, no.~2, pp.~45--71, 2024.
\newblock \href{https://doi.org/10.1615/JMachLearnModelComput.2024052810}{doi:
  10.1615/JMachLearnModelComput.2024052810}.

\bibitem{goyal2025deep}
P.~Goyal, S.~Y{\i}ld{\i}z, and P.~Benner, ``Deep learning for
  structure-preserving universal stable {Koopman}-inspired embeddings for
  nonlinear canonical {Hamiltonian} dynamics,'' {\em Machine Learning: Science
  and Technology}, vol.~6, no.~1, 2025.
\newblock \href{https://doi.org/10.1088/2632-2153/adb9b5}{doi:
  10.1088/2632-2153/adb9b5}.

\bibitem{gilbert2017towards}
A.~C. Gilbert, Y.~Zhang, K.~Lee, Y.~Zhang, and H.~Lee, ``Towards understanding
  the invertibility of convolutional neural networks,'' e-print 1705.08664,
  arXiv, 2017.
\newblock \href{https://doi.org/10.48550/arXiv.1705.08664 }{doi:
  10.48550/arXiv.1705.08664}.

\bibitem{PasGMetal19}
A.~Paszke, S.~Gross, F.~Massa, A.~Lerer, J.~Bradbury, G.~Chanan, T.~Killeen,
  Z.~Lin, N.~Gimelshein, L.~Antiga, A.~Desmaison, A.~Köpf, E.~Yang, Z.~DeVito,
  M.~Raison, A.~Tejani, S.~Chilamkurthy, B.~Steiner, L.~Fang, J.~Bai, and
  S.~Chintala, ``Pytorch: {An} imperative style, high-performance deep learning
  library,'' e-print 1912.01703, arXiv, 2019.
\newblock \href{https://doi.org/10.48550/arXiv.1912.01703 }{doi:
  10.48550/arXiv.1912.01703}.

\bibitem{Sil08}
A.~C. da~Silva, {\em {Lectures on Symplectic Geometry}}.
\newblock Berlin, Heidelberg: Springer, 2008.
\newblock \href{https://doi.org/10.1007/978-3-540-45330-7}{doi:
  10.1007/978-3-540-45330-7}.

\bibitem{marsden2013introduction}
J.~E. Marsden and T.~S. Ratiu, {\em Introduction to mechanics and symmetry: a
  basic exposition of classical mechanical systems}, vol.~17.
\newblock Springer Science \& Business Media, 2013.
\newblock \href{https://doi.org/10.1007/978-0-387-21792-5}{doi:
  10.1007/978-0-387-21792-5}.

\bibitem{sharma2025nonlinear}
H.~Sharma, J.~D.~D. Giannoni, and B.~Kramer, ``Nonlinear energy-preserving
  model reduction with lifting transformations that quadratize the energy,''
  e-print 2503.02273, arXiv, 2025.
\newblock \href{https://doi.org/10.48550/arXiv.2503.02273 }{doi:
  10.48550/arXiv.2503.02273}.

\bibitem{YilJB25software}
S.~Y{\i}ld{\i}z, K.~Janik, and P.~Benner, ``Symplectic convolutional neural
  networks.'' Zenodo, 2025.
\newblock \href{https://doi.org/10.5281/zenodo.16962444}{doi:
  10.5281/zenodo.16962444}.

\bibitem{KinB17}
D.~P. Kingma and J.~Ba, ``Adam: A method for stochastic optimization,'' e-print
  1412.6980, arXiv, 2017.
\newblock \href{https://doi.org/10.48550/arXiv.1412.6980 }{doi:
  10.48550/arXiv.1412.6980}.

\bibitem{lightning}
W.~Falcon and {The PyTorch Lightning team}, ``{PyTorch Lightning},'' Mar. 2019.
\newblock
  \href{https://github.com/Lightning-AI/lightning}{https://github.com/Lightning-AI/lightning}.

\end{thebibliography}

\end{document}